\documentclass{article}
\usepackage[algo2e,ruled,vlined,linesnumbered]{algorithm2e}
\usepackage{wrapfig}

\PassOptionsToPackage{sort,numbers, compress}{natbib}
 \usepackage[preprint]{cfrname_2026}
 \usepackage{amsmath}

\usepackage[utf8]{inputenc} 
\usepackage[T1]{fontenc}    
\usepackage{hyperref}       
\usepackage{url}            
\usepackage{xurl}
\usepackage{booktabs}       
\usepackage{amsfonts}       
\usepackage{nicefrac}       
\usepackage{microtype}      
\usepackage{xcolor}         

\usepackage{amsmath,amssymb,amsthm}
\usepackage{algorithm}
\usepackage{algpseudocode}
\usepackage{xfrac}
\usepackage{booktabs}
\usepackage{titletoc}
\usepackage{geometry}
\usepackage{tabularx}
\usepackage{graphicx}
\usepackage{cleveref}
\usepackage{makecell}
\usepackage{siunitx}
\usepackage{multicol}
\usepackage{multirow}
\usepackage{tikz}
\usepackage{mathrsfs}
\usepackage{xcolor}
\usepackage{subcaption}
\usepackage{setspace}
\usepackage{xfrac}
\usepackage{listings}

\crefname{lstlisting}{listing}{listings}
\Crefname{lstlisting}{Listing}{Listings}
\newcommand{\cmark}{\textcolor{green!60!black}{$\checkmark$}}
\newcommand{\xmark}{\textcolor{red!70!black}{$\times$}}
\newcommand{\pmark}{\textcolor{orange!80!black}{$\sim$}}

\definecolor{darkgreen}{RGB}{0,100,0}
\definecolor{darkred}{RGB}{139,0,0}

\theoremstyle{plain}
\newtheorem{theorem}{Theorem} 
\newtheorem{lemma}{Lemma} 
\newtheorem{corollary}{Corollary} 
\theoremstyle{definition}
\newtheorem{assumption}{Assumption} 
\theoremstyle{remark}

\newcommand{\Identity}{{\rm I\kern-.2em l}}
\newcommand{\Indicator}{{\rm I\kern-.2em l}}

\newcommand{\Expectbracket}[1]{\mathbb{E}\left[ #1 \right]}
\newcommand{\Expectsubbracket}[2]{\mathbb{E}_{#1}\left[ #2 \right]}
\newcommand{\Expectcond}[2]{\mathbb{E}\left[\left. #1 \right| #2 \right]}

\newcommand{\normsq}[1]{\left\Vert #1 \right\Vert^2}
\newcommand{\innerprod}[1]{\left\langle #1 \right\rangle}

\newcommand{\feedbackprob}{\phi}

\makeatletter
\renewcommand{\SetKwInOut}[2]{%
  \sbox\algocf@inoutbox{\KwSty{#2}\algocf@typo:}%
  \expandafter\ifx\csname InOutSizeDefined\endcsname\relax
    \newcommand\InOutSizeDefined{}\setlength{\inoutsize}{\wd\algocf@inoutbox}%
    \sbox\algocf@inoutbox{\parbox[t]{\inoutsize}{\KwSty{#2}\algocf@typo:\hfill}~}\setlength{\inoutindent}{\wd\algocf@inoutbox}%
  \else
    \ifdim\wd\algocf@inoutbox>\inoutsize%
    \setlength{\inoutsize}{\wd\algocf@inoutbox}%
    \sbox\algocf@inoutbox{\parbox[t]{\inoutsize}{\KwSty{#2}\algocf@typo:\hfill}~}\setlength{\inoutindent}{\wd\algocf@inoutbox}%
    \fi%
  \fi
  \algocf@newcommand{#1}[1]{%
    \ifthenelse{\boolean{algocf@inoutnumbered}}{\relax}{\everypar={\relax}}%
    {\let\\\algocf@newinout\hangindent=\inoutindent\hangafter=1\parbox[t]{\inoutsize}{\KwSty{#2}\algocf@typo:\hfill}~##1\par}%
    \algocf@linesnumbered
  }}%
\makeatother

\SetKwInput{KwInput}{Input}
\SetKwInput{KwOutput}{Output}

\newcommand{\algname}[0]{SLARouter\xspace}

\definecolor{pdnavy}{HTML}{0F1B35}
\definecolor{pdcardblue}{HTML}{1A3A6B}
\definecolor{pdpanelblue}{HTML}{2A5298}
\definecolor{pddarkpanel}{HTML}{0F2550}
\definecolor{pdencodergreen}{HTML}{0D4A42}
\definecolor{pdteal}{HTML}{0D9488}
\definecolor{pdorange}{HTML}{E07B39}
\definecolor{pdoffwhite}{HTML}{E8EDF5}
\definecolor{pdmuted}{HTML}{8FA3C0}
\definecolor{pdwhite}{HTML}{FFFFFF}
\definecolor{pdcardborder}{HTML}{3D6DB5}

\usepackage{enumitem}
\setlist[enumerate]{itemsep=0pt, topsep=0pt, parsep=3pt, partopsep=0pt, leftmargin=*}
\setlist[itemize]{itemsep=0pt, topsep=0pt, parsep=3pt, partopsep=0pt, leftmargin=*}

\allowdisplaybreaks

\title{Cost-Optimal LLM Routing with Limited User Feedback under User Satisfaction Guarantees}

\author{%
  Herbert Woisetschläger \\
  Technical University of Munich\\
  Germany \\
  \texttt{h.woisetschlaeger@tum.de} \\
  \And
  Arastun Mammadli \\
  University of Exeter \\
  United Kingdom \\
  \texttt{am2169@exeter.ac.uk} \\
  \And
  Ryan Zhang \\
  Horace Greeley High School \\
  United States \\
  \texttt{ryzhangofficial@gmail.com} \\
  \And
  Shiqiang Wang \\
  University of Exeter \\
  United Kingdom \\
  \texttt{s.wang9@exeter.ac.uk} \\
}

\begin{document}

\maketitle

\begin{abstract}
Inference costs for large language model (LLM) applications are rapidly growing, driven by surging demand and rising infrastructure cost. 
Users expect high-quality responses, and in commercial settings this is formally codified in Service Level Agreements (SLAs), creating a fundamental tension between cost and quality. 
Recent progress on cost-aware LLM request routing has shown potential to resolve this tension, but existing approaches rely on complete feedback signals, offline training, extensive per-workload tuning, and most lack SLA guarantees or inference-time adaptivity. 
We introduce SLARouter, an online routing algorithm that learns a cost-optimal policy from the sparse, one-sided user feedback available in production systems. 
SLARouter provides theoretical guarantees for both cost optimality and strict SLA compliance. Experiments across a wide range of LLM benchmarks show that SLARouter satisfies SLA constraints without the need for per-benchmark tuning, reducing operating cost by up to $2.2\times$ over existing baselines.

\end{abstract}

\section{Introduction}
 
The rapid proliferation of open-weight large language models (LLMs)~\cite{qwen3,granite4,glm5,minimax}, alongside a growing array of API-based offerings, has left both end users and inference service providers facing an increasingly intractable model selection problem.
Compounding this challenge is the fragmented benchmarking landscape. Results are often measured on incomparable tasks under inconsistent conditions, making it difficult to determine which model is best suited for a given workload~\cite{naveed2023overview}.
This leaves practitioners resorting to educated guessing or expensive trial-and-error.
 
Against this backdrop, \emph{LLM routing}, i.e., the problem of dynamically selecting an appropriate model for each incoming request, has emerged as a popular technique to improve user satisfaction and reduce inference costs~\cite{ong2024routellm,routerdc,tensoropera,llmdna,lookahead}.
Instead of committing to a single model, routing frameworks maintain a \emph{model zoo} and dispatch each request to whichever model best balances quality and cost for that particular query.
As model families often span parameter counts from under a billion to over a hundred billion, the cost difference between the smallest and largest model in a zoo can easily exceed an order of magnitude~\cite{samsi2023words}, making intelligent routing essential for sustainable operations.
 
Cost efficiency is especially pressing as LLMs are increasingly deployed in a wide range of applications, where sequences of model calls compound energy expenditure. 
Emerging AI regulation, such as Article~95 of the EU AI Act~\cite{euaiact}, further mandates energy monitoring and compliance with sustainable computing best practices.
Although there are existing studies on cost-aware routing and first steps towards cost-optimal routing, the following core challenges still remain.

\textbf{The Need for Formal Service-Level Guarantees.}
In commercial deployments, cost savings alone are insufficient. Providers and their customers typically codify quality requirements in \emph{Service Level Agreements} (SLAs).
For LLM inference, an SLA might specify a certain task success rate for an AI model over time.
Early routing work~\cite{ong2024routellm,ding2024hybridllm,aggarwal2024automix,chen2023frugalgpt} demonstrated that cost-aware routing between a strong and a weak model is practically effective, but do not provide formal guarantees that the routed system will meet a prescribed quality threshold.
More recent efforts have begun to close this gap by introducing minimax rate-optimal guarantees in a two-model setting~\cite{somerstep2025carrot} or formal SLA compliance to larger model zoos~\cite{bhatti2026proteus,woisetschlager2025messplus}, albeit most without adaptation at inference time.
Fully online methods that continuously update their routing policy as requests arrive remain sparse, especially those that simultaneously provide formal SLA satisfaction guarantees.
 
\textbf{The Feedback Bottleneck.}
A practical obstacle in virtually all existing routing approaches is that they assume the availability of \emph{reliable, timely, and complete} quality signals.
In real-world systems, however, explicit user feedback is sparse. 
For example, fewer than 5\% of requests may receive any signal~\cite{zhao2025metrics,feedback_sparsity}. 
When the feedback arrives, it is usually \textit{one-sided}, i.e., it pertains to exactly \emph{one} model response.
So, the performance of all other models remains unknown.
Without addressing this feedback bottleneck, even a theoretically sound routing policy will degrade in practice. 
We therefore need a routing framework that adapts to user preferences and workload shifts over time based on one-sided observational feedback while providing SLA guarantees.

\textbf{Research Question.} \textit{Can we build an LLM router that provides formal SLA guarantees and operates cost-optimally, relying only on sparse one-sided observational feedback available in practice?}

\begin{table*}[t]
    \centering
    \caption{Comparison of LLM request routing methods.
    \textit{Cost-Aware} means the method enables cost control.
    \textit{SLA Guar.}\ means the method provides formal SLA guarantees.
    \textit{Online} means the method can learn at inference time.
    \textit{Observ.\ Data} means the method can be trained from one-sided observational data where only the selected model's feedback is collected.}
    \label{tab:routing-comparison}
    \small
    \renewcommand{\arraystretch}{0.9}
    \resizebox{\textwidth}{!}{%
        \begin{tabular}{@{}lrccccl@{}}
        \toprule
        \textbf{Method} &
        \textbf{Zoo Size} & \textbf{Cost-Aware} & \textbf{SLA Guar.} &
        \textbf{Online} & \textbf{Observ.\ Data} & \textbf{Source} \\
        \midrule
        \multicolumn{7}{@{}l}{\textit{Cost-Aware without Optimality Guarantees}} \\
        RouteLLM    & $2$    & \cmark & \xmark & \xmark          & \xmark & \citet{ong2024routellm} \\
        Hybrid LLM  & $2$    & \cmark & \xmark & \xmark          & \xmark & \citet{ding2024hybridllm} \\
        AutoMix     & $2$    & \cmark & \xmark & \xmark          & \xmark & \citet{aggarwal2024automix} \\
        FrugalGPT   & $> 2$  & \cmark & \xmark & \xmark          & \xmark & \citet{chen2023frugalgpt} \\
        xRouter     & $> 2$  & \cmark & \xmark & \pmark$^\dagger$ & \xmark & \citet{qian2025xrouter} \\
        Causal LLM Routing & $> 2$ & \cmark & \xmark & \xmark   & \cmark & \citet{tsiourvas2025causal} \\
        \midrule
        \multicolumn{7}{@{}l}{\textit{Formal Optimality Guarantees Requiring Complete User Feedback}} \\
        CARROT      & $>2$    & \cmark & \pmark$^\ddagger$ & \xmark          & \xmark & \citet{somerstep2025carrot} \\
        PROTEUS     & $> 2$  & \cmark & \cmark            & \pmark$^\dagger$ & \xmark & \citet{bhatti2026proteus} \\
        MESS+       & $> 2$  & \cmark & \cmark            & \cmark          & \xmark & \citet{woisetschlager2025messplus} \\
        \midrule
        \multicolumn{7}{@{}l}{\textit{Formal Optimality Guarantees under One-Sided and Sparse User Feedback}} \\
        \textbf{\algname} & $> 2$ & \cmark & \cmark & \cmark & \cmark & \textbf{This Paper} \\
        \bottomrule
        \end{tabular}%
    }
    \medskip
    \begin{minipage}{\textwidth}
    \tiny\raggedright
    $^\dagger$\,Online RL training with offline deployment (no adaptation at inference time).
    $^\ddagger$\,Minimax rate-optimality guarantee, not an SLA satisfaction rate.
    \end{minipage}
    \vspace{-2em}
\end{table*}
 
\textbf{Contributions.}
We introduce \textbf{\algname}, a practical and scalable LLM routing algorithm that synthesizes the lessons of prior work into a unified framework capable of operating under production-realistic conditions. \algname makes the following novel contributions:
\begin{itemize}
    \item \textbf{Online routing from sparse one-sided observational feedback.} 
    \algname is an adaptive routing algorithm that learns a cost-optimal routing policy from one-sided and sparse user feedback as it is typically available in production systems. 
    We show that \algname is up to $2.2\times$ cheaper compared to existing cost-aware routing techniques while providing on par routing performance. 
    While our algorithm also proves to be effective in systems with low user feedback rates, we extend our routing system with an LLM-based judge mechanism~\cite{zheng2023judging} to ensure smooth predictor training. 

    \item \textbf{Formal SLA guarantees under observational data.} 
    We provide theoretical guarantees for cost-optimality under one-sided sparse user feedback while maintaining a required user satisfaction rate over time. 
    Our approach is based on a novel extension to the Lyapunov drift-plus-penalty framework~\cite{neely2022stochastic}. %
    With our performance analysis, we demonstrate that we can effectively learn from observational data and achieve virtual queue stability in settings with highly sparse feedback rates. 
\end{itemize}

\textbf{Related Work.}
\Cref{tab:routing-comparison} compares \algname to existing LLM routing algorithms.
Cost-aware routers such as RouteLLM~\cite{ong2024routellm}, Hybrid LLM~\cite{ding2024hybridllm}, AutoMix~\cite{aggarwal2024automix}, and FrugalGPT~\cite{chen2023frugalgpt} reduce inference expenditure effectively but offer no formal quality guarantees.
CARROT~\cite{somerstep2025carrot} introduces rate-optimal guarantees but does not adapt online.
PROTEUS~\cite{bhatti2026proteus} scales SLA compliance to larger model zoos but requires offline training, limiting adaptation to distributional shifts.
MESS+~\cite{woisetschlager2025messplus} combines online adaptation with formal SLA guarantees, but assumes complete, balanced, and timely feedback for all candidates that is rarely satisfied in production.
Causal LLM Routing~\cite{tsiourvas2025causal} is the only prior method that learns from observational data, but does not support online adaptation or SLA guarantees.
\algname is the first to satisfy all five criteria, specifically, cost-awareness, formal SLA guarantees, fully online operation, and learning from the one-sided sparse feedback available in real systems.

\section{\algname: Practical and Scalable LLM Request Routing with Service Level Guarantees based on Observational Data}

\subsection{Problem Formulation}

We consider a model zoo of $M$ LLMs, indexed by $m \in \{1, \ldots, M\}$, that serve incoming user requests $t \in \{1, \ldots, T\}$. 
For each request, our goal is to select a model that minimizes inference cost while guaranteeing that a minimum fraction $\alpha$ of requests receive a satisfactory response over time.

\textbf{Inference Cost (Objective).} 
Each model $m$ incurs an inference cost $C_{m,t}$ per request $t$, e.g., energy consumption in megajoules or monetary API cost. 
Model families routinely span orders of magnitude in cost, so routing intelligently instead of defaulting to the largest model can yield substantial savings.

\textbf{Service Level Agreement (Constraint).} 
In our SLA, we define $\alpha \in (0, 1)$ as the minimum required user satisfaction rate over time. 
Violating this threshold is typically associated with contractual penalties, making strict compliance non-negotiable. 
In practice, $\alpha$ should be set slightly above the contractually agreed threshold to absorb estimation noise and provide a compliance safety margin.

\textbf{Control Problem.} 
Combining objective and constraint, we aim to solve:
\begin{subequations}
\begin{align}
    \textstyle\min_{\{y_{m,t}:\forall t, m\}} \quad & \textstyle\frac{1}{T} \sum\nolimits_{t=1}^{T} \sum\nolimits_{m=1}^{M} \mathbb{E}\left[y_{m,t} C_{m,t}\right], \label{eq:objective} \\
    \text{s.t.} \quad & \textstyle\frac{1}{T} \sum\nolimits_{t=1}^{T} \sum\nolimits_{m=1}^{M} \mathbb{E}\left[y_{m,t} s_{m,t}\right] \geq \alpha, \label{eq:constraint} \\
    & \textstyle\sum\nolimits_{m=1}^{M} y_{m,t} = 1, \quad \forall t \in \{1, \ldots, T\}, \label{eq:one_model} \\
    &\textstyle y_{m,t} \in \{0, 1\}, \quad \forall t \in \{1, \ldots, T\},\ m \in \{1, \ldots, M\}, \label{eq:binary}
\end{align}
\label{eq:original-problem}%
\end{subequations}
where $y_{m,t} = 1$ if model $m$ serves request $t$ and $0$ otherwise, and $s_{m,t} \in \{0, 1\}$ denotes whether that response was satisfactory.

\textbf{Challenges.} 
Our work is primarily motivated by the observation of how users provide feedback in systems like ChatGPT, Claude, or Microsoft Copilot. 
Users have the option to give a thumbs up or down, whenever they would like to provide feedback. 
Overall feedback signals are extremely sparse~\cite{eisenstein2025metrics, feedback_sparsity}.
Since we usually only query one model for a user request, this typically leaves us with \emph{one-sided} and often \emph{imbalanced data}, i.e., we do not know the true value of $s_{m,t}$ for most requests and models.
Consequently, we have do deal with \emph{unknown performance statistics} for all other models that were not queried and did not receive feedback.
Furthermore, chat-like  \textit{requests arrive sequentially} in a streaming fashion without the knowledge of \emph{statistics of future requests.}
Hence, we cannot solve problem \eqref{eq:original-problem} directly in one shot over the entire time horizon. An \textit{online} solution that routes the model \textit{per request} is needed.

\subsection{Proposed \algname Algorithm}

The Lyapunov drift-plus-penalty framework~\cite{neely2022stochastic} gives a useful starting point for our problem. 
It operates in an online fashion without requiring knowledge of the underlying request or feedback distribution. 
It can provide provable guarantees on time-average performance despite unknown future statistics. 
This stability condition emerges organically from the Lyapunov structure instead being imposed as an ad hoc heuristic. However, the standard framework in~\cite{neely2022stochastic} as well as its use in~\cite{woisetschlager2025messplus} still assumes full feedback signals for all the requests. 
In this work, we present a \textit{novel extension} to support
sparse and imbalanced observations, ensuring that models with insufficient feedback are not perpetually neglected.
The overall procedure is in \Cref{algo:slarouter}. We explain the details as follows.

\textbf{Methodology.}
\algname operates in two phases, \emph{exploration} and \emph{exploitation}, controlled by a random process $X_t \sim \text{Bernoulli}(p_t)$, where $p_t = \min\left(1, \sfrac{c}{\sqrt[4]{t}}\right)$ and $c \in \mathbb{R}^+$, which is a hyperparameter to control the exploration probability. 
The exploration probability decays over time, gradually shifting weight toward exploitation as the predictor matures. Central to both phases is a \emph{satisfaction predictor}, which is a multi-label classifier with one output head per candidate model that independently estimates the probability (denoted by $\hat{s}_{m,t}$) that a given request $t$ would satisfy a user if served by model $m$. 
\begin{wrapfigure}{r}{0.52\textwidth}
    \vspace{-0.5em}
    \begin{algorithm2e}[H]
        \caption{\algname\xspace  -- Cost-optimal \& SLA compliant routing with sparse and one-sided user feedback} \label{algo:slarouter}
        \begin{spacing}{0.7}
        \scriptsize
        Initialize $\mathcal{D}\leftarrow\emptyset$, $Q_1\leftarrow 0$, predictor parameter $\mathbf{z}$\;
                
        \For{$t \leftarrow 1$ \KwTo $T$}{
            Compute $p_t \leftarrow \min\left(1, \sfrac{c}{\sqrt[4]{t}}\right)$\;
            Sample $X_t \sim \text{Bernoulli}(p_t)$\;
    
            \eIf{$X_t = 1$ or $t = 1$}{
                \tcp{Only sample one model}
                Sample $\tilde{m} \sim \text{Uniform}(\{1, 2, \ldots, M\})$\;
                $y_{\Tilde{m},t} \leftarrow 1$, $y_{m',t} \leftarrow 0, \forall m' \neq \tilde{m}$\;

                \eIf{$s_{\tilde{m},t} \neq \emptyset$}{
                    \tcp{User provides feedback}
                    $\mathcal{D}\leftarrow\mathcal{D}\cup(\mathbf{a}_t, s_{\tilde{m},t}, \tilde{m})$\;
                    Sample mini-batch $\mathcal{B}$ from $\mathcal{D}$\;
                    $\mathbf{z}_{t+1} \leftarrow \mathbf{z}_t - \eta_t \nabla F(\mathbf{z}_t, \mathcal{B})$\;
                }{
                    \tcp{User does not provide feedback}
                    $\mathbf{z}_{t+1} \leftarrow \mathbf{z}_t$\;
                }
            }{
                Predict request satisfaction probability $\hat{s}_{m,t}, \forall m$\;
                $\Tilde{m} \leftarrow \arg\min_m \left\{V C_{m,t} + Q_t(\alpha - \hat{s}_{m,t})\right\}$\;
                $y_{\Tilde{m},t} \leftarrow 1$, $y_{m',t} \leftarrow 0, \forall m' \neq \Tilde{m}$\;
                $\mathbf{z}_{t+1} \leftarrow \mathbf{z}_t$\;
            }

            \tcp{Virtual queue update}
            \eIf{$s_{\Tilde{m},t} \neq \emptyset$}{
                \tcp{Use user feedback for $Q$ update}
                $Q_{t+1} \leftarrow \max\{0, Q_t + \alpha - s_{\Tilde{m},t}\}$\;
            }{
                \tcp{Use predictor estimate for $Q$ update}
                $Q_{t+1} \leftarrow \max\{0, Q_t + \alpha - \hat{s}_{\Tilde{m},t}\}$\;
            }
        }
    \end{spacing}
    \end{algorithm2e}
    \vspace{-2em}
\end{wrapfigure}
Note, the predictor informs the routing decision but is not responsible for it. 
The decision is determined by our per-request optimization described below. 
Once a response was sent to the user, they can provide feedback to indicate whether it was satisfactory or not. 
As we aim to ensure SLA compliance, we use a \textit{virtual queue}  to capture all violations (negative feedback) over time. 
The queue length $Q_t$ affects the routing decision as more violations will pressure the algorithm to choose a more capable and expensive model, and vice versa.

\textbf{Feedback Availability.}
Whenever a request is sent to an LLM, the system may or may not receive user feedback about whether the response is satisfactory. This same principle applies to both the exploration and exploitation phases. We do not distinguish between true user feedback and feedback obtained from a separate LLM judge in this section. The performance difference between using user vs. LLM-judge feedback will be studied in the experiments later. If feedback is unavailable (from either the user or the LLM judge) for request $t$ and model $m$, we write $s_{m,t}=\emptyset$.

\textbf{Predictor.}
\label{par:predictor-training}
The predictor (with trainable parameter vector $\mathbf{z}$) must handle the key structural difficulty of \emph{missing labels} across heads, which arises from the feedback stream since only the explored model receives a label at each step.
Its architecture is illustrated in~\Cref{fig:predictor}.
The overall objective of predictor training is a standard binary cross-entropy loss:
\begin{align}
\label{eq:cross-entropy}
\textstyle
        F(\mathbf{z}) := -\Expectsubbracket{\mathbf{a}_{t}\sim\mathcal{A}}{\frac{1}{M}\sum_{m=1}^M \left( s_{m,t}\log \hat{s}(\mathbf{z}, \mathbf{a}_{t})_m  + (1\!-\! s_{m,t})  \log (1\!-\!\hat{s}(\mathbf{z}, \mathbf{a}_{t})_m) \right) }\! +\! \frac{\mu}{2}\normsq{\mathbf{z}},
\end{align}
where $\mu>0$ is a weight-decay coefficient, $\mathbf{a}_{t}$ is the input of the $t$-th request, and $\hat{s}(\mathbf{z}, \mathbf{a}_{t})_m=\hat{s}_{m,t}$ for predictor parameter $\mathbf{z}$ and request input $\mathbf{a}_{t}$. 

The \textit{uniqueness} of SLARouter's predictor is that it is trained only using one-sided feedback, i.e., feedback from only a single model that is chosen to serve the user. To facilitate this, all the requests with feedback collected from exploration steps are saved in a \textit{predictor training dataset} $\mathcal{D}$. In each exploration step, if feedback is collected, this new sample $(\mathbf{a}_t, s_{\tilde{m},t}, \tilde{m})$ is added to $\mathcal{D}$, where $\tilde{m}$ is the randomly chosen model during exploration; then, a step of stochastic gradient descent (SGD) with a mini-batch $\mathcal{B}$, sampled from $\mathcal{D}$, is performed to update the predictor's parameter~$\mathbf{z}$. The mini-batch size $|\mathcal{B}| := \min\{|\mathcal{D}|; B\}$, where $|\cdot|$ denotes the cardinality of the set and $B$ is the default mini-batch size once $|\mathcal{D}|\geq B$. We denote the mini-batch gradient by $\nabla F(\mathbf{z}, \mathcal{B})$, which is computed by
\begin{align}
    \!\!\!\!\!\nabla\! F(\mathbf{z}, \mathcal{B})\! = - \nabla_{\!\mathbf{z}} \!\!\left[ \textstyle
        \frac{1}{|\mathcal{B}|}\sum_{(\mathbf{a}_t, s_{\tilde{m},t}, \tilde{m})\in\mathcal{B}} \left( s_{\tilde{m},t}\log \hat{s}(\mathbf{z}, \mathbf{a}_{t})_{\tilde{m}} \! + \!(1\!-\! s_{\tilde{m},t})  \log (1\!-\!\hat{s}(\mathbf{z}, \mathbf{a}_{t})_{\tilde{m}}) \right) \right]\! +\! \mu\mathbf{z}.\!\!\!
        \label{eq:mini-batch-gradient}
\end{align}
Due to the one-sided feedback, it is possible that some LLMs do not have samples in a mini-batch. In that case, the predictor parameters directly connected to the outputs corresponding to those models are not updated for that specific mini-batch SGD step, because the gradient with respect to those parameters is zero as there is no loss term corresponding to those models in \eqref{eq:mini-batch-gradient}.

When the feedback availability follows an IID distribution across requests and models, we can easily see that $\nabla F(\mathbf{z}, \mathcal{B})$ is an unbiased estimate of the full gradient $\nabla F(\mathbf{z})$.
Thus, we have addressed the limited feedback problem by \textit{drawing an equivalence between random sampling for SGD, random model selection for exploration, and (random) feedback availability}.

\begin{figure}[!t]
    \centering
    \includegraphics[width=0.7\linewidth]{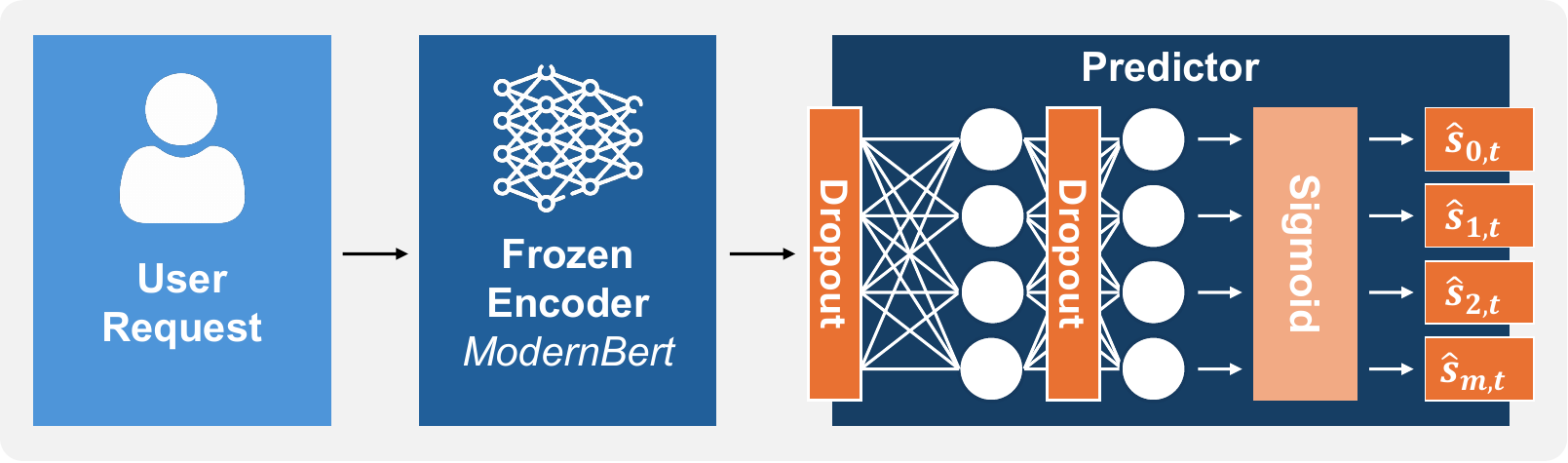}
    \caption{Our predictor uses a frozen ModernBert encoder and a multi-label classification MLP. For training, we apply a masked BCE loss to only train those heads for which we have received user feedback. The predictor output is the per-model probability of satisfying an incoming user request.}
    \label{fig:predictor}
\end{figure}

\textbf{Routing Decision.}
As the predictor improves, routing decisions rely increasingly on its predictions. 
The predictor scores $\hat{s}_{m,t}$ and the virtual queue $Q_t$ jointly determine the \emph{per-request decision problem}:%
\begin{subequations}
\begin{align}
    \textstyle
    \min_{\{y_{m,t}:\forall m\}} \quad V\cdot \textstyle\sum_{m=1}^M & \textstyle 
        y_{m,t} C_{m,t} + Q_t \left(\alpha - \textstyle\sum_{m=1}^M 
        y_{m,t}\hat{s}_{m,t}\right)\mathrm{,} 
        \label{eq:per-request_obj} \\
    \textrm{s.t.}\quad  & \textrm{Constraints \eqref{eq:one_model}, 
        \eqref{eq:binary}}\,.
\end{align}%
\label{eq:per-request-optimization}%
\end{subequations}
The update rule for $Q_t$ is defined in \eqref{eq:queue-update} below.
Parameter $V > 0$ controls the trade-off between cost efficiency and convergence speed. 
A smaller $V$ yields faster SLA compliance at higher initial operating cost, while a larger $V$ prioritizes cost savings at the expense of slower convergence. 

\textbf{Handling Feedback Sparsity.}
Updating $Q_t$ in the standard way requires a feedback signal at every step, but explicit user feedback often arrives at low rates in production systems~\cite{eisenstein2025metrics, feedback_sparsity}.
When $s_{\tilde{m},t}$ is unavailable, we substitute it with the predictor score $\hat{s}_{\tilde{m},t}$ as a proxy.
$Q_t$ evolves as follows:  
\begin{equation}
Q_{t+1} = \begin{cases}
    \max\{0, Q_t + \alpha - s_{\tilde{m}, t} \}, & \textrm{if } s_{\tilde{m}, t}\neq\emptyset\\
    \max\{0, Q_t + \alpha - \hat{s}_{\tilde{m}, t} \}, & \textrm{if } s_{\tilde{m}, t}=\emptyset
\end{cases}.
\label{eq:queue-update}
\end{equation}
This substitution keeps our Lyapunov analysis intact, provided that the predictor estimate is sufficiently accurate. 
This is a condition enforced by the exploration schedule, which guarantees continued predictor improvement and adaptation to the user throughout deployment.

\vspace{-0.5em}
\section{Performance Analysis}
\vspace{-0.5em}

We present the theoretical performance analysis in this section. The full proofs are in Appendix~\ref{appendix:proofs}.

\vspace{-0.5em}
\subsection{Predictor Training}
\vspace{-0.5em}
\begin{assumption}
\label{assumption:predictor_loss}
The predictor training objective $F(\mathbf{z})$ is $L$-smooth and satisfies the Polyak--\L{}ojasiewicz (PL) condition with parameter $\mu > 0$ (and $\mu \leq L$), i.e.,
$
    \tfrac{1}{2}\|\nabla F(\mathbf{z})\|^2 \;\ge\; \mu\bigl(F(\mathbf{z})-F_\mathrm{min}\bigr), \forall\, \mathbf{z},
$
where $F_\mathrm{min} := \min_{\mathbf{z}} F(\mathbf{z})$. Its single-sample stochastic gradient, denoted by $\nabla f(\mathbf{z}, \mathbf{a}, m)$, is unbiased with bounded variance $\sigma^2$, i.e.,
$
    \mathbb{E}\bigl[\nabla f(\mathbf{z}, \mathbf{a}, m) \mid \mathbf{z}\bigr] = \nabla F(\mathbf{z}) \text{ and } 
    \mathbb{E}\bigl[\|\nabla f(\mathbf{z}, \mathbf{a}, m) - \nabla F(\mathbf{z})\|^2 \mid \mathbf{z}\bigr] \leq \sigma^2,
    \forall\, \mathbf{z}, m.
$ The request $\mathbf{a}_t \sim \mathcal{A}$ is IID across~$t$.
\end{assumption}

\Cref{assumption:predictor_loss} is standard in SGD convergence analysis. Let $k$ denote the SGD steps until request $t$. Note that both $k$ and the dataset size $|\mathcal{D}|$ are random variables due to the randomness of exploration. A uniqueness in our setup is that there exist two distinct sources of gradient error:
\begin{align*}
\textstyle
\underbrace{\nabla F(\mathbf{z})}_{\substack{\text{all $M$ models} \\ \text{over full dataset space}}} 
\xrightarrow{|\mathcal{D}| \text{ samples available}} 
\underbrace{\nabla \hat{F}(\mathbf{z})}_{\substack{\text{variance induced by} \\ \text{limited feedback: } \mathcal{O}(\sigma^2/|\mathcal{D}|)}} 
\xrightarrow{B \text{ batch size}} \underbrace{\nabla F(\mathbf{z}, \mathcal{B})}_{\substack{\text{variance induced by} \\ \text{stochastic gradient sampling: } \mathcal{O}(\sigma^2/B)}}.
\end{align*}
The \textit{empirical} ``full'' gradient on the currently collected dataset $\mathcal{D}$, denoted by $\nabla\hat{F}(\mathbf{z}_k)$, is computed over the $|\mathcal{D}|$ available feedback samples, while the update gradient $\nabla F(\mathbf{z}, \mathcal{B})$ further subsamples $B$ samples per SGD step from $\mathcal{D}$. Larger $B$ and $|\mathcal{D}|$ reduce both error sources, as shown in \Cref{lemma:sgd_bound}.

\begin{lemma}
\label{lemma:sgd_bound}Choosing $\eta_k=\min\left\{\frac{4}{\mu (k+1)}, \frac{1}{2L}\right\}$,
after $k\geq 1$ steps of predictor training using SGD with the loss function defined in 
\eqref{eq:cross-entropy}, we have
\begin{align}
    \textstyle\Expectcond{F(\mathbf{z}_k)}{k, \mathcal{D}}-F_\mathrm{min}\leq \mathcal{O}\left(\frac{1}{k^2} + \frac{\sigma^2}{|\mathcal{D}|} + \frac{\sigma^2}{kB}\right).
    \label{eq:proof_sgd_final}
\end{align}
\end{lemma}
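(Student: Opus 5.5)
We follow the standard one-step descent analysis for $L$-smooth PL objectives and track the two error sources separately. Condition throughout on $\mathcal{D}$ and on the number of steps $k$. The key device is to split the update gradient as
\[
\nabla F(\mathbf{z}_j,\mathcal{B}) = \nabla F(\mathbf{z}_j) + \underbrace{\bigl(\nabla \hat F(\mathbf{z}_j)-\nabla F(\mathbf{z}_j)\bigr)}_{=:\,\mathbf{b}_j} + \underbrace{\bigl(\nabla F(\mathbf{z}_j,\mathcal{B})-\nabla \hat F(\mathbf{z}_j)\bigr)}_{=:\,\boldsymbol{\xi}_j}.
\]
Here $\boldsymbol{\xi}_j$ is zero-mean given $(\mathbf{z}_j,\mathcal{D})$, because the mini-batch is sampled uniformly from $\mathcal{D}$. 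Its conditional second moment is at most $\sigma^2/B$, using \Cref{assumption:predictor_loss} and sampling with replacement; since $|\mathcal{B}|=\min\{|\mathcal{D}|,B\}$, the case $|\mathcal{D}|<B$ gives $\sigma^2/|\mathcal{D}|$, which is absorbed into the second term. The term $\mathbf{b}_j$ is a finite-sample bias. For each fixed $\mathbf{z}$, the IID requests and the uniform exploration choice make it an average of $|\mathcal{D}|$ independent zero-mean terms. We therefore bound $\mathbb{E}\|\mathbf{b}_j\|^2\lesssim\sigma^2/|\mathcal{D}|$, and we treat this as a uniform-in-$\mathbf{z}$ bound in the spirit of the paper's heuristic diagram.

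\textbf{Descent step.} By $L$-smoothness and $\eta_j\le 1/(2L)$,
\[
F(\mathbf{z}_{j+1})\le F(\mathbf{z}_j)-\eta_j\langle\nabla F(\mathbf{z}_j),\nabla F(\mathbf{z}_j)+\mathbf{b}_j+\boldsymbol{\xi}_j\rangle+\tfrac{L\eta_j^2}{2}\|\nabla F(\mathbf{z}_j)+\mathbf{b}_j+\boldsymbol{\xi}_j\|^2 .
\]
Taking the conditional expectation removes the cross term with $\boldsymbol{\xi}_j$. We bound the cross term with $\mathbf{b}_j$ by Young's inequality, $-\langle\nabla F,\mathbf{b}\rangle\le\frac14\|\nabla F\|^2+\|\mathbf{b}\|^2$. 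Then $L\eta_j\le\frac12$ leaves a net $-\frac{\eta_j}{4}\|\nabla F(\mathbf{z}_j)\|^2$. Applying the PL inequality, with $\Delta_j:=\mathbb{E}[F(\mathbf{z}_j)\mid k,\mathcal{D}]-F_{\min}$, gives
\[
\Delta_{j+1}\le\Bigl(1-\tfrac{\mu\eta_j}{2}\Bigr)\Delta_j+c_1\eta_j\tfrac{\sigma^2}{|\mathcal{D}|}+c_2L\eta_j^2\tfrac{\sigma^2}{B}.
\]
The factor $\mu\eta_j/2$ is exactly why the step size carries the constant $4/\mu$: it makes $1-\mu\eta_j/2=1-\frac{2}{j+1}$ in the decaying regime.

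\textbf{Unrolling.} Split at $j_0$, the point where $\frac{4}{\mu(j+1)}\le\frac1{2L}$. Before $j_0$ the rate is linear with constant step and contributes only an $L/\mu$-dependent constant times $\Delta_0$. After $j_0$, multiply the recursion by $(j+1)j$ (the usual weighting trick), which telescopes because $(1-\frac{2}{j+1})(j+1)j=j(j-1)$. The $\Delta_{j_0}$ term is then damped by $j_0^2/k^2$, giving the $\mathcal{O}(1/k^2)$ term. The variance term gives $\sum_j (j+1)j\,\eta_j^2\sigma^2/B\approx \frac{16}{\mu^2}\,k\,\sigma^2/B$, and dividing by $k^2$ yields $\mathcal{O}(\sigma^2/(kB))$. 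The bias term gives $\sum_j (j+1)j\,\eta_j\sigma^2/|\mathcal{D}|\approx\frac{4}{\mu}k^2\sigma^2/(2|\mathcal{D}|)$, and dividing by $k^2$ yields $\mathcal{O}(\sigma^2/|\mathcal{D}|)$, non-vanishing in $k$ as the statement requires.

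\textbf{Main obstacle.} The delicate step is justifying $\mathbb{E}\|\mathbf{b}_j\|^2\lesssim\sigma^2/|\mathcal{D}|$. The iterate $\mathbf{z}_j$ depends on $\mathcal{D}$, so the samples are not independent of the evaluation point. In addition, $\mathcal{D}$ grows during training, so earlier steps actually see smaller datasets. We would handle this as follows. First, note that $|\mathcal{D}|$ is nondecreasing, so the bound can be stated with the current $|\mathcal{D}|$ entering only through the final bias floor, which dominates once the contraction forgets early steps. Second, either adopt the fixed-$\mathbf{z}$ variance bound as part of the assumption's reading, or use a uniform-convergence/stability argument under the $\ell_2$-regularized, smooth loss. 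If that proves too heavy, the fallback is to interpret the lemma as SGD on the empirical objective $\hat F$, which also satisfies smoothness and the PL condition by assumption. The first and third terms then follow from the same recursion with $\mathbf{b}_j\equiv0$, and the gap $F-\hat F$ at the output is added as the $\sigma^2/|\mathcal{D}|$ generalization term.
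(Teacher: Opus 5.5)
Your proposal is correct and matches the paper's proof in substance. It uses the same split of the update gradient into the finite-sample bias $\nabla\hat{F}-\nabla F$ and zero-mean mini-batch noise, the same Young's inequality with constant $\tfrac{1}{4}$, and the same contraction factor $1-\mu\eta_k/2$ once $\eta_k\le\frac{1}{2L}$; the paper, like your primary route, simply asserts the $\sigma^2/|\mathcal{D}|$ bias bound uniformly in $\mathbf{z}$ and does not address the dependence of $\mathbf{z}_j$ on $\mathcal{D}$ that you flag. The only difference is bookkeeping: you unroll the recursion with the $(j+1)j$ weighting trick, whereas the paper writes it in product form and bounds $H(k)=\mathcal{O}(1/k^2)$, $G_1(k)=\mathcal{O}(1)$, and $G_2(k)=\mathcal{O}(1/k)$ by induction, which gives the same three terms.
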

The proof involves a unique way of decomposing and bounding the two error sources. \Cref{lemma:sgd_bound} captures the two distinct sources of gradient error that jointly govern the rate at which $F(\mathbf{z}_k)$ approaches its true minimum $F_\mathrm{min}$. 
The first and the last terms are standard for SGD. The second term captures the variance induced by the finite size of dataset $\mathcal{D}$.
The diminishing step size $\eta_k=\Theta(\sfrac{1}{k})$ is the standard theoretical technique for exact convergence to $F_\mathrm{min}$. In practice, we found that a constant rate can already be sufficient for good performance (see \Cref{sec:experiments}).

Next, we focus on the total expectation of $F(\mathbf{z}_k)$, starting with an assumption on feedback availability.

\begin{assumption}
\label{assumption:partial_feedback}
For every request $t$, user feedback is available IID with a fixed probability $\feedbackprob \in (0,1]$.
\end{assumption}

In \Cref{algo:slarouter}, we always have $k = |\mathcal{D}|$, as we only train the predictor when new feedback is available. Our theory can be directly extended to support other schedules though. 
With the exploration probability of $p_t = \min\left(1, \sfrac{c}{\sqrt[4]{t}}\right)$, we then have $\mathbb{E}[k] = \mathbb{E}[\mathcal{D}] = \Theta( t^{3/4}\phi)$. Taking total expectation to the bound in \Cref{lemma:sgd_bound} and noting $k = |\mathcal{D}|$, we then have $\Expectbracket{F(\mathbf{z}_k)}-F_\mathrm{min}\leq \mathcal{O}\left(\sfrac{(1+\sigma^2)}{t^{3/4}\phi}\right).$

Considering characteristics of the cross-entropy loss and applying Markov's inequality, we have the following lemma on satisfaction estimation error that is used as a basis for subsequent results.
\begin{lemma}
    \label{lemma:satisfaction_error_bound_with_delta}
    Under the same condition as for Lemma~\ref{lemma:sgd_bound}, there exists some positive $\beta\leq M$, so that
    \begin{align}
        \textstyle\Expectbracket{\max_m|\hat{s}_{m,t} - s_{m,t}|} &\textstyle\leq \beta F_\mathrm{min} 
        + \mathcal{O}\left(\frac{\beta (1+\sigma^2)}{ t^{\sfrac{3}{4}} \feedbackprob}\right);\\
        \textstyle\textstyle\max_m|\hat{s}_{m,t} - s_{m,t}| &\textstyle\leq \beta F_\mathrm{min} 
        + \mathcal{O}\left(\frac{\beta (1+\sigma^2)}{ t^{\sfrac{3}{4}} \feedbackprob \delta}\right), \text{with probability $1-\delta$}.
    \end{align}
\end{lemma}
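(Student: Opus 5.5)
The plan is to bound the prediction error by the cross-entropy loss, then insert the bound on $\Expectbracket{F(\mathbf{z}_k)}-F_\mathrm{min}$ obtained above, and finally convert the expectation bound into a high-probability bound with Markov's inequality.

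\textbf{Step 1: per-model error versus per-model loss.} For a label $s\in\{0,1\}$ and prediction $\hat s\in(0,1)$, the error is $|\hat s - s| = 1-\hat s_{\text{true}}$, where $\hat s_{\text{true}}$ is the probability the predictor assigns to the realized label. The per-head loss is $-\log \hat s_{\text{true}}$. Since $1-x \le -\log x$ for $x\in(0,1]$, we get $|\hat s_{m,t}-s_{m,t}| \le \ell_m(\mathbf{z},\mathbf{a}_t)$, the per-head cross-entropy.

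\textbf{Step 2: from the maximum to the average loss.} Bound the maximum over heads by the sum:
\[
\max_m |\hat s_{m,t}-s_{m,t}| \;\le\; \sum_{m=1}^M \ell_m \;=\; M\cdot \tfrac{1}{M}\sum_{m=1}^M \ell_m .
\]
Taking expectation over $\mathbf{a}_t\sim\mathcal{A}$, which is IID and independent of $\mathbf{z}_k$ because $\mathbf{z}_k$ is trained only on past requests, the right-hand side is $M$ times the unregularized part of $F(\mathbf{z}_k)$. The regularizer $\frac{\mu}{2}\normsq{\mathbf{z}}$ is nonnegative, so that part is at most $F(\mathbf{z}_k)$. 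This yields a constant $\beta\le M$; any slack from the max-versus-sum step and the inequality $1-x\le-\log x$ only lowers the effective $\beta$, which is why the statement says ``some $\beta\le M$''.

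\textbf{Step 3: the expectation bound.} Write $F(\mathbf{z}_k)=F_\mathrm{min} + (F(\mathbf{z}_k)-F_\mathrm{min})$ and take total expectation over the randomness in $k$, $\mathcal{D}$ and SGD. Applying the bound $\Expectbracket{F(\mathbf{z}_k)}-F_\mathrm{min}\le \mathcal{O}\bigl((1+\sigma^2)/(t^{3/4}\phi)\bigr)$ derived from \Cref{lemma:sgd_bound} gives the first claim directly.

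\textbf{Step 4: the high-probability bound.} Apply Markov's inequality to the nonnegative gap $G := F(\mathbf{z}_k)-F_\mathrm{min}$: $\Pr\bigl(G > \Expectbracket{G}/\delta\bigr) \le \delta$. So with probability $1-\delta$, $F(\mathbf{z}_k)\le F_\mathrm{min} + \mathcal{O}\bigl((1+\sigma^2)/(t^{3/4}\phi\delta)\bigr)$. Chaining this with Steps 1--2 gives the second claim. Only the excess term gets the $1/\delta$ factor, because Markov is applied to $G$ rather than to $F$. For this the pointwise chain must be stated in terms of the conditional loss $\Expectcond{\cdot}{\mathbf{z}_k}$. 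The realized error for a single $\mathbf{a}_t$ could itself be handled with a second Markov step, which would only change constants.

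\textbf{Main obstacle.} The hardest step is making Step 3 rigorous. Passing from the conditional bound in \Cref{lemma:sgd_bound} to the total expectation requires $\Expectbracket{1/k^2}$, $\Expectbracket{1/|\mathcal{D}|}$ and $\Expectbracket{1/k}$ to scale like $1/(t^{3/4}\phi)$ rather than like the reciprocal of the mean. Because $k=|\mathcal{D}|$ is a sum of independent Bernoullis with parameters $p_\tau\phi$, I would use a Chernoff bound: $k\ge \Expectbracket{k}/2$ except with exponentially small probability. On that bad event, $F(\mathbf{z}_k)-F_\mathrm{min}$ is bounded by a constant, using the initialization and the decreasing iterates. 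The bad-event contribution is then negligible. The $k=0$ case is handled by the same argument. The gradient $\nabla F(\mathbf{z},\mathcal{B})$ being unbiased relies on the uniform model choice in exploration together with \Cref{assumption:partial_feedback}.
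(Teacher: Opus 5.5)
Your Steps 1--4 reproduce the paper's proof almost line for line. The paper also writes $|\hat{s}_{m,t}-s_{m,t}|=1-e^{-\Gamma}\le\Gamma$ for the per-head cross-entropy $\Gamma$. It bounds the maximum by the sum to reach $M\,\mathbb{E}[F(\mathbf{z}_k)]$, hence $\beta=M$. It inserts the total-expectation bound, proved in the appendix by the same multiplicative Chernoff argument on $k$ that you outline. Finally it applies Markov's inequality to $F(\mathbf{z}_k)-F_{\mathrm{min}}$, so only the excess picks up $1/\delta$. Your handling of the event $\{k<\mathbb{E}[k]/2\}$ is a bit more careful than the paper's convention $1/k:=0$ at $k=0$. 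However, drop the appeal to \emph{decreasing iterates}, since SGD does not make $F(\mathbf{z}_k)$ monotone. The constant bound you need there comes from Lemma~\ref{lemma:sgd_bound} itself, which is $\mathcal{O}(1+\sigma^2)$ uniformly in $k\ge 1$, together with the fixed initial loss when $k=0$.

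The step that does not hold is your closing claim that the realized per-request error can be handled by a second Markov step that \emph{would only change constants}. Markov applied to $\max_m|\hat{s}_{m,t}-s_{m,t}|$ given $\mathbf{z}_k$ divides the whole conditional mean by $\delta$, floor included. That gives $MF_{\mathrm{min}}/\delta$, not $\beta F_{\mathrm{min}}$ with $\beta\le M$.

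No argument can recover the realized form, because it is false in general. Take $M=1$, a bias-only logistic predictor with small $\mu$, and labels $s_{1,t}\sim\mathrm{Bernoulli}(p)$ independent of $\mathbf{a}_t$, with $p=0.05$. Then $F_{\mathrm{min}}$ is close to the binary entropy of $p$, about $0.2$. Yet whatever value $\hat{s}_{1,t}$ takes, the error is at least $1/2$ with probability at least $p$. So the bound fails for every $\delta<p$ once $t$ is large.

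The paper crosses this step by asserting $\max_m|\hat{s}_{m,t}-s_{m,t}|\le\sum_m\Gamma\le MF(\mathbf{z}_k)$ for the loss evaluated on request $t$. That is not valid pointwise, because $F$ averages over requests. What your argument (and the paper's) actually establishes is the following: with probability at least $1-\delta$ over training,
\[
\mathbb{E}\bigl[\max_m|\hat{s}_{m,t}-s_{m,t}|\mid\mathbf{z}_k\bigr]\le MF_{\mathrm{min}}+\mathcal{O}\bigl(M(1+\sigma^2)/(t^{3/4}\phi\delta)\bigr).
\]
State it that way. This matters downstream, since the proof of Theorem~\ref{theorem:max_queue_length} uses the realized event $\{\max_m|\hat{s}_{m,t}-s_{m,t}|\le 1/3\}$.
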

This shows the estimation error of user satisfaction is upper-bounded at high probability with an error floor of $\beta F_{\mathrm{min}}$, where the last term vanishes as $t$ gets large. 
The theory uses $\beta=M$ in the proof, while in practice $\beta$ can be much smaller than $M$.

\vspace{-0.5em}
\subsection{SLA Satisfaction}
\vspace{-0.5em}

\begin{assumption}
\label{assumption:queue_length}
We assume that $\beta F_\mathrm{min} < \frac{1}{3}$, and there exists at least one model $m$ such that $\Pr\{s_{m,t} = 1\} \geq q > \alpha$ and $(1-\feedbackprob)\beta F_\mathrm{min} < q-\alpha$, for some $q\in(\alpha, 1]$.
\end{assumption}
This assumption states that the predictor architecture is capable enough so that an optimally trained predictor has a small loss. It also states that there is at least one model for which the satisfaction probability is greater than $\alpha$, since otherwise the problem becomes infeasible. 
The last condition weakens as feedback becomes denser and vanishes entirely in the full-feedback case $\feedbackprob=1$. We also assume that the conditions in Lemmas~\ref{lemma:sgd_bound} and \ref{lemma:satisfaction_error_bound_with_delta} hold in the following.

\begin{theorem}
\label{theorem:max_queue_length}
    Let $\rho := \frac{q-\alpha-(1-\feedbackprob)\beta F_\mathrm{min}}{2} > 0$, $\psi := \Theta\left(\left(\frac{\beta(1+\sigma^2)}{\feedbackprob\, \rho \left(\frac{1}{3} - \beta F_\mathrm{min}\right)}\right)^{\frac{4}{3}}\right)$, $\gamma := \max\left\{\alpha\psi;\, 3V\Delta_C\right\}$.
    For any $t \geq 1$, we have these upper bounds on the virtual queue length:
    \begin{align}
        \textstyle\Expectbracket{Q_t} \leq \gamma + \mathcal{O}\left(\sqrt{t}\right)
        \quad\text{and}\quad
        \frac{1}{t}\sum_{\tau=1}^{t}\Expectbracket{Q_\tau} \leq \mathcal{O}\left(\gamma + \sfrac{1}{\rho}\right),
    \end{align}
    where $\Delta_C := C_{\max} - C_{\min}$, and $C_{\max}$ and $C_{\min}$ are the maximum and minimum costs, respectively.
\end{theorem}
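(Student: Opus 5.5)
The plan is a Lyapunov drift argument on $L(Q_t)=\tfrac12 Q_t^2$. The first goal is a negative drift once $Q_t$ exceeds a threshold $\gamma$ and $t$ exceeds a burn-in time $\psi$. From \eqref{eq:queue-update}, the squared queue update is bounded by $Q_{t+1}^2\le (Q_t+\alpha-\tilde s_t)^2$, where $\tilde s_t\in[0,1]$ is either the true feedback $s_{\tilde m,t}$ or the proxy $\hat s_{\tilde m,t}$. This gives
\[
\Delta_t:=\tfrac12\bigl(Q_{t+1}^2-Q_t^2\bigr)\le \tfrac12+Q_t(\alpha-\tilde s_t).
\]
The task then reduces to bounding $\mathbb{E}[\alpha-\tilde s_t\mid Q_t]$ from above by $-\rho$ whenever $Q_t>\gamma$, at least on a good event and for $t\ge\psi$.

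For that conditional expectation, split into exploration ($X_t=1$, probability $p_t=\min(1,c/t^{1/4})$) and exploitation.
\begin{itemize}
\item During exploration the increment is at most $\alpha$, so its contribution to the expected drift is at most $p_tQ_t\alpha$.
\item During exploitation the routed $\tilde m$ minimizes $VC_{m,t}+Q_t(\alpha-\hat s_{m,t})$. Comparing with the good model $m^\star$ of Assumption~\ref{assumption:queue_length} gives $Q_t(\alpha-\hat s_{\tilde m,t})\le V\Delta_C+Q_t(\alpha-\hat s_{m^\star,t})$.
\item Convert predictions back to truth using Lemma~\ref{lemma:satisfaction_error_bound_with_delta}. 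With probability $\phi$ the true $s_{\tilde m,t}$ enters the queue; otherwise $\hat s$ enters. The error between $\hat s$ and the realized $s$ therefore shows up twice, once in the routing comparison and once, weighted by $1-\phi$, in the queue update.
\item Use $\Pr\{s_{m^\star,t}=1\}\ge q$, IID requests, and the fact that $\hat s_{\cdot,t}$ depends only on past data, so $\mathbb{E}[s_{m^\star,t}\mid \mathcal F_{t-1}]\ge q$.
\end{itemize}
This should yield
\[
\mathbb{E}[\alpha-\tilde s_t\mid\mathcal F_{t-1}]\le -(q-\alpha)+(1-\phi)\beta F_{\min}+\epsilon_t+V\Delta_C/Q_t+\alpha p_t,
\]
with $\epsilon_t=\mathcal O\bigl(\beta(1+\sigma^2)/(t^{3/4}\phi)\bigr)$.

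The constants are then chosen as follows.
\begin{itemize}
\item Choosing $t\ge\psi$ makes $\epsilon_t\le\rho(\tfrac13-\beta F_{\min})$; the $4/3$ power in $\psi$ is exactly the inversion of $t^{3/4}$.
\item $Q_t\ge 3V\Delta_C$ makes the $V\Delta_C/Q_t$ term at most a third of the slack. The $\tfrac13$ factors in Assumption~\ref{assumption:queue_length} suggest splitting the margin $2\rho$ into three pieces.
\end{itemize}
Together these give $\mathbb{E}[\Delta_t\mid Q_t]\le \tfrac12-\rho Q_t+\alpha p_tQ_t$ for $Q_t>\gamma$, $t\ge\psi$. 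For $t<\psi$ the queue grows by at most $\alpha$ per step, giving $Q_t\le\alpha\psi\le\gamma$ deterministically, which is the origin of the $\alpha\psi$ branch of $\gamma$.

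To get the bounds, sum the drift inequality. For the pointwise bound, use the crude estimate $\mathbb{E}[Q_t^2]\le\gamma^2+\mathcal O(t)$, obtained by summing $\Delta_\tau\le \tfrac12+Q_\tau\cdot(\text{bounded})$ with negative drift above $\gamma$. Jensen then gives $\mathbb{E}Q_t\le\gamma+\mathcal O(\sqrt t)$. For the time average, telescope $\sum_\tau\mathbb{E}\Delta_\tau\ge -\tfrac12\gamma^2$ against $-\rho\sum_\tau \mathbb{E}[Q_\tau\mathbb 1\{Q_\tau>\gamma\}]$ plus the bounded terms. Dividing by $t\rho$ gives $\tfrac1t\sum\mathbb{E}Q_\tau\le\gamma+\mathcal O(1/\rho)$.

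The main obstacle is the exploration term $\alpha p_tQ_t$, which does not vanish fast enough to be absorbed uniformly in $t$. I would handle it in one of two ways. One option is to fold $p_t\le c\,t^{-1/4}$ into the burn-in, requiring $\alpha p_t\le\rho/2$ for $t\ge\psi$ (possibly enlarging the $\Theta$ constant in $\psi$). The other is to condition on the high-probability event of Lemma~\ref{lemma:satisfaction_error_bound_with_delta} and bound its complement's contribution using $Q_t\le\alpha t$. A further delicate point is that the lemma controls the error through expectations over the request, while the drift needs a bound conditional on the history. I would use that $\mathbf z_t$ is $\mathcal F_{t-1}$-measurable and $\mathbf a_t$ is fresh, so the first bound of Lemma~\ref{lemma:satisfaction_error_bound_with_delta} applies conditionally after bounding $Q_t$ separately by the deterministic growth bound $\alpha t$.
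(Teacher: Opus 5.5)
There is a genuine gap at the step where you pass from the routing rule to the true satisfaction of the selected model. Your additive comparison $Q_t(\alpha-\hat{s}_{\tilde m,t})\le V\Delta_C+Q_t(\alpha-\hat{s}_{m^\star,t})$ is followed by converting $\hat{s}$ back to $s$ through the expectation version of Lemma~\ref{lemma:satisfaction_error_bound_with_delta}. That conversion pays the prediction error with unit weight, and by your own accounting the routing comparison contributes an unweighted $\beta F_\mathrm{min}$. Concretely, writing $\alpha-u_{\tilde m,t}=(\alpha-\hat{s}_{\tilde m,t})+Y_t(\hat{s}_{\tilde m,t}-s_{\tilde m,t})$ gives a floor of $(1+\phi)\beta F_\mathrm{min}$; the other decomposition gives $(3-\phi)\beta F_\mathrm{min}$. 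So your displayed bound with only $(1-\phi)\beta F_\mathrm{min}$ does not follow from the steps you list.

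For $\phi=1$ the theorem's $\rho=(q-\alpha)/2$ has no floor at all, whereas your route leaves $2\beta F_\mathrm{min}$. Assumption~\ref{assumption:queue_length} allows this to exceed $q-\alpha$, so the drift need not be negative. The threshold has the same problem. $Q_t>3V\Delta_C$ only gives $V\Delta_C/Q_t<\tfrac13$, and this can be close to $\tfrac13$, which exceeds $2\rho/3$ because $2\rho\le q-\alpha<1$. An additive argument needs a threshold of order $V\Delta_C/\rho$. That is not $\gamma$, and it would make the time average $\mathcal{O}(V\Delta_C/\rho+\cdots)$ rather than $\mathcal{O}(\gamma+1/\rho)$.

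The missing idea is to use $\tfrac13$ as a classification margin, not as a share of the slack. This is why $\rho$ appears inside $\psi$. The paper applies the high-probability version of Lemma~\ref{lemma:satisfaction_error_bound_with_delta} with $\delta=\rho$. For $t\ge\psi$ one has $\beta F_\mathrm{min}+\epsilon_t/\rho\le\tfrac13$, so with probability at least $1-\rho$, $\max_m|\hat{s}_{m,t}-s_{m,t}|\le\tfrac13$. Hence satisfactory models have $\hat{s}\ge\tfrac23$ and unsatisfactory ones have $\hat{s}\le\tfrac13$.

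When $Q_t>3V\Delta_C$, the minimizer must satisfy $\hat{s}_{\tilde m,t}>\max_m\hat{s}_{m,t}-\tfrac13$. Otherwise, switching to the argmax lowers the queue term by at least $Q_t/3>V\Delta_C$ while raising the cost term by at most $V\Delta_C$. So on that event $s_{m^\star,t}=1$ forces $s_{\tilde m,t}=1$, giving $\Pr\{s_{\tilde m,t}=1\mid Z_t\}\ge q-\rho$ with no $\beta F_\mathrm{min}$ loss. Then $\alpha-u_{\tilde m,t}=(\alpha-s_{\tilde m,t})+(1-Y_t)(s_{\tilde m,t}-\hat{s}_{\tilde m,t})$ leaves only the $(1-\phi)$-weighted proxy error, and the drift coefficient is $-\rho+(1-\phi)\epsilon_t\le-\rho/2$.

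Two smaller points.
\begin{itemize}
\item Run the drift on $Z_t=\max\{0,Q_t-\gamma\}$ rather than on $\tfrac12 Q_t^2$. With $\tfrac12 Q_t^2$ the drift on $\{Q_t\le\gamma\}$ can be as large as $\tfrac12+\alpha\gamma$, so your telescoping only yields $\mathcal{O}((1+\gamma)/\rho)$, and your crude second-moment estimate does not give $\gamma+\mathcal{O}(\sqrt{t})$. With $Z_t$ the drift at $Z_t=0$ is at most $\tfrac12$, which gives $\mathbb{E}[Z_t^2]\le t-1$ and $\rho\sum_{\tau<t}\mathbb{E}[Z_\tau]\le t-1$ directly.
\item Your concern about exploration steps is legitimate; the paper's proof simply takes $\tilde m$ to be the per-request minimizer at every step. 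However, your first fix requires $t\ge(2\alpha c/\rho)^4$, which cannot be absorbed into $\psi\propto\rho^{-4/3}$ by adjusting constants.
\end{itemize}
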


The proof of \Cref{theorem:max_queue_length} is based on a unique way of analyzing the terms involving feedback availability and the two variants of queue updates in \eqref{eq:queue-update}.
The feedback probability $\phi$ appears in the denominator in the definition of $\psi$ (and hence $\gamma$), which verifies that the queue tends to be longer when the feedback rate is low.
Based on \eqref{eq:queue-update} and \Cref{lemma:satisfaction_error_bound_with_delta}, we obtain the following corollary.

\begin{corollary}
\label{corollary:constraint_satisfaction_at_t}
    We have the following upper bound on the time-averaged constraint violation:
    \begin{align}
        \!\!\!\textstyle\alpha - \frac{1}{T}\sum_{t=1}^T \sum_{m=1}^M \Expectbracket{y_{m,t} s_{m,t}}
        &\textstyle\leq \frac{\Expectbracket{Q_{T+1}}}{T} + (1\!-\!\phi)\beta F_{\mathrm{min}} + \mathcal{O}\!\left(\frac{(1\!-\!\phi)\beta(1\!+\!\sigma^2)}{\phi\, T^{\sfrac{3}{4}}}\right)\nonumber\\
        &\textstyle\leq \mathcal{O}\!\left(\frac{\gamma}{T} + \frac{1}{\sqrt{T}}\right) + (1\!-\!\phi)\beta F_{\mathrm{min}}.\!\!\!
    \end{align}
\end{corollary}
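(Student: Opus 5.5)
We start from the queue update \eqref{eq:queue-update}. Since $\max\{0,x\}\ge x$, every step satisfies $Q_{t+1}\ge Q_t+\alpha-\tilde s_t$, where $\tilde s_t := s_{\tilde m,t}$ if feedback arrives and $\tilde s_t:=\hat s_{\tilde m,t}$ otherwise. Summing from $t=1$ to $T$ telescopes and, with $Q_1=0$, gives $\sum_{t=1}^T(\alpha-\tilde s_t)\le Q_{T+1}$. Because exactly one $y_{m,t}$ equals one, $\sum_m y_{m,t}s_{m,t}=s_{\tilde m,t}$. We therefore write
\[
\alpha - s_{\tilde m,t} = (\alpha-\tilde s_t) + (\tilde s_t - s_{\tilde m,t}),
\]
where the correction $\tilde s_t - s_{\tilde m,t}$ vanishes when feedback is observed and equals $\hat s_{\tilde m,t}-s_{\tilde m,t}$ otherwise. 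Taking expectations and dividing by $T$ yields
\[
\alpha-\frac1T\sum_{t}\sum_m\Expectbracket{y_{m,t}s_{m,t}}\le \frac{\Expectbracket{Q_{T+1}}}{T}+\frac1T\sum_{t=1}^T\Expectbracket{\Indicator\{s_{\tilde m,t}=\emptyset\}\,|\hat s_{\tilde m,t}-s_{\tilde m,t}|}.
\]

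To bound the correction, we use Assumption~\ref{assumption:partial_feedback}: the event of missing feedback is independent with probability $1-\phi$. The subtle point is that $\tilde m$ may depend on $\hat s$, and possibly on the same randomness. We sidestep this by bounding $|\hat s_{\tilde m,t}-s_{\tilde m,t}|\le \max_m|\hat s_{m,t}-s_{m,t}|$, which no longer depends on the selection. Independence of feedback availability from the request and the predictor state then factors out $(1-\phi)$. Applying the expectation bound of Lemma~\ref{lemma:satisfaction_error_bound_with_delta} gives a per-step correction of $(1-\phi)\bigl[\beta F_{\min}+\mathcal{O}(\beta(1+\sigma^2)/(t^{3/4}\phi))\bigr]$.

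Averaging over $t$ uses $\sum_{t\le T}t^{-3/4}=\mathcal{O}(T^{1/4})$, so the transient term becomes $\mathcal{O}\bigl((1-\phi)\beta(1+\sigma^2)/(\phi T^{3/4})\bigr)$. This establishes the first inequality of Corollary~\ref{corollary:constraint_satisfaction_at_t}.

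For the second inequality, we invoke Theorem~\ref{theorem:max_queue_length} with $t=T+1$, which gives $\Expectbracket{Q_{T+1}}\le\gamma+\mathcal{O}(\sqrt{T})$, hence $\Expectbracket{Q_{T+1}}/T\le \gamma/T+\mathcal{O}(1/\sqrt T)$. The $T^{-3/4}$ term decays faster than $T^{-1/2}$ and is absorbed, with its constants treated as fixed problem parameters inside $\mathcal{O}$.

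The main obstacle is the independence step. We need the missing-feedback indicator to be independent of $\max_m|\hat s_{m,t}-s_{m,t}|$, even though the predictor state depends on past feedback. Assumption~\ref{assumption:partial_feedback} gives IID availability across requests, so the indicator at time $t$ is independent of everything determined before $t$ and of $\mathbf a_t$, which is what we need. If the satisfaction labels $s_{m,t}$ and feedback availability could be correlated, we would instead fall back on the crude bound without the $(1-\phi)$ factor. We assume the intended reading is that availability is independent of $s$.
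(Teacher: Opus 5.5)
Your proposal is correct and follows essentially the same route as the paper: you telescope $Q_{t+1}\ge Q_t+\alpha-u_{\tilde m,t}$, isolate the missing-feedback correction, factor out $(1-\phi)$ using independence of the feedback indicator, bound the error by $\max_m|\hat s_{m,t}-s_{m,t}|$ with the expectation version of Lemma~\ref{lemma:satisfaction_error_bound_with_delta}, average $t^{-3/4}$, and finally invoke Theorem~\ref{theorem:max_queue_length}. The only cosmetic difference is the order of the last two moves: you pass to the absolute value and the maximum before invoking independence, whereas the paper first writes $\Expectbracket{u_{\tilde m,t}}=\phi\,\Expectbracket{s_{\tilde m,t}}+(1-\phi)\,\Expectbracket{\hat s_{\tilde m,t}}$ and bounds afterwards; both rely on the same independence of availability from the satisfaction outcomes.
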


As $T \to \infty$, the constraint violation approaches the error floor $(1-\phi)\beta F_{\mathrm{min}}$. When the predictor is more accurate or the feedback rate is high, the constraint violation becomes less severe. For full feedback with $\phi=1$, we recover the result in \cite{woisetschlager2025messplus}. If full SLA compliance at level $\alpha'$ is required after a finite number of requests $T_0$, one may set $\alpha$ slightly above $\alpha'$ such that the upper bound is less than $\alpha - \alpha'$ after $T_0$ steps.

\vspace{-0.5em}
\subsection{Cost Optimality}
\vspace{-0.5em}

\begin{theorem}
    \label{theorem:cost_optimality_bound}
    For $\{y_{m,t} : \forall m, t\}$ obtained from the \algname{} algorithm (\Cref{algo:slarouter}), there exists a learning rate schedule $\{\eta_k : \forall k\}$ such that
    \begin{align}
        \textstyle \Expectbracket{\frac{1}{T}\sum_{t=1}^T \sum_{m=1}^M y_{m,t} C_{m,t}}
        \leq C^{\mathrm{OPT}} + \mathcal{O}\!\left(\frac{\beta}{\sqrt[4]{T}} + \beta F_{\mathrm{min}} + \frac{1}{V}\right),
    \end{align}
    where $C^{\mathrm{OPT}}$ is the optimal solution to \eqref{eq:original-problem} obtained from an optimal stationary policy that has full statistical knowledge of requests $t \in \{1, \ldots, T\}$.
\end{theorem}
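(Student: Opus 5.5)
We follow the standard Lyapunov drift-plus-penalty argument, with two additional error sources: exploration steps and predictor error. Let $L(Q_t)=\tfrac12 Q_t^2$ and $\Delta_t := \mathbb{E}[L(Q_{t+1})-L(Q_t)\mid Q_t]$. Write $\tilde s_{t}$ for the quantity used in the queue update \eqref{eq:queue-update}: this is $s_{\tilde m,t}$ when feedback is available and $\hat s_{\tilde m,t}$ otherwise. Squaring the $\max\{0,\cdot\}$ recursion gives
\begin{align*}
\Delta_t \le \tfrac{1}{2} + Q_t\,\mathbb{E}[\alpha - \tilde s_t \mid Q_t],
\end{align*}
since $|\alpha-\tilde s_t|\le 1$.

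\textbf{Exploitation steps.} Add the penalty $V\,\mathbb{E}[\sum_m y_{m,t}C_{m,t}\mid Q_t]$ to both sides and split on $X_t$. When $X_t=0$, the algorithm minimizes $VC_{m,t}+Q_t(\alpha-\hat s_{m,t})$ exactly. We compare it against the optimal stationary (randomized) policy $y^*$ with full statistical knowledge. This policy satisfies $\mathbb{E}[\sum_m y^*_{m,t}C_{m,t}]=C^{\mathrm{OPT}}$ and $\mathbb{E}[\sum_m y^*_{m,t}s_{m,t}]\ge\alpha$, and it is independent of $Q_t$. We then replace predicted satisfactions by true ones at the cost of $Q_t\max_m|\hat s_{m,t}-s_{m,t}|$, which appears twice: once for our choice and once for $y^*$. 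Similarly, the gap between $\tilde s_t$ and $s_{\tilde m,t}$ is at most $(1-\phi)\max_m|\hat s_{m,t}-s_{m,t}|$. The resulting exploitation-step bound is
\begin{align*}
\Delta_t + V\,\mathbb{E}[\textstyle\sum_m y_{m,t}C_{m,t}\mid Q_t] \le \tfrac12 + VC^{\mathrm{OPT}} + 3Q_t\,\mathbb{E}[\max_m|\hat s_{m,t}-s_{m,t}| \mid Q_t].
\end{align*}

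\textbf{Exploration steps.} When $X_t=1$, the uniform choice costs at most $C_{\max}$ and the drift term is at most $Q_t$. These contribute $p_t(V\Delta_C + Q_t)$ beyond the optimal benchmark.

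\textbf{Telescoping.} Sum over $t=1,\dots,T$, use $L(Q_1)=0$ and $L(Q_{T+1})\ge0$, and divide by $VT$:
\begin{align*}
\tfrac{1}{T}\textstyle\sum_t \mathbb{E}[\textstyle\sum_m y_{m,t}C_{m,t}] \le C^{\mathrm{OPT}} + \tfrac{1}{2V} + \tfrac{1}{VT}\sum_t \Big(3\,\mathbb{E}\big[Q_t\max_m|\hat s_{m,t}-s_{m,t}|\big] + p_t\,\mathbb{E}[Q_t] + Vp_t\Delta_C\Big).
\end{align*}
The last term is handled by $\frac1T\sum_t p_t=\mathcal{O}(c\,T^{-1/4})$, which gives the $1/\sqrt[4]{T}$ contribution.

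\textbf{Main obstacle: the cross terms $\mathbb{E}[Q_t\cdot\text{error}_t]$.} These do not factor, because $Q_t$ and the predictor error are dependent. First I would try conditioning on the exploration and feedback history. The predictor parameters $\mathbf z_t$ are determined by $\mathcal D$, while $s_{m,t}$ for the fresh IID request $\mathbf a_t$ is independent of $Q_t$ given $\mathbf z_t$. Then $\mathbb{E}[Q_t\,\mathrm{err}_t]=\mathbb{E}[Q_t\,\mathbb{E}[\mathrm{err}_t\mid\mathbf z_t]]$, and Lemma~\ref{lemma:satisfaction_error_bound_with_delta} bounds the inner conditional expectation by $\beta F_{\mathrm{min}}+\mathcal{O}(\beta\, t^{-3/4})$. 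Making this work requires converting the per-$\mathbf z$ conditional bound into one uniform over $Q_t$. If it fails, I would fall back on the high-probability version of Lemma~\ref{lemma:satisfaction_error_bound_with_delta} together with the crude bound $Q_t\le t$.

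After this step, the cross terms reduce to $\frac{1}{VT}\sum_t \mathbb{E}[Q_t]\big(\beta F_{\mathrm{min}} + \beta t^{-3/4} + p_t\big)$. Theorem~\ref{theorem:max_queue_length} gives $\mathbb{E}[Q_t]\le\gamma+\mathcal{O}(\sqrt t)$ with $\gamma=\mathcal{O}(V)$, and the time-average $\frac{1}{T}\sum_\tau\mathbb{E}[Q_\tau]=\mathcal{O}(\gamma+1/\rho)$. Dividing by $V$ then turns the $\gamma$ parts into $\mathcal{O}(\beta F_{\mathrm{min}}+\beta T^{-1/4})$.

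The $\sqrt t$ part of the queue bound needs a separate treatment. Combined with $p_t=\Theta(t^{-1/4})$, it produces a term of order $T^{1/4}/V$. Handling this is where the freedom in the learning-rate schedule $\{\eta_k\}$ granted by the statement is needed, and possibly a requirement that $V$ grow with $T$. I would choose $\eta_k$ so that the predictor error decays fast enough that the time-averaged queue bound, rather than the pointwise $\mathcal{O}(\sqrt t)$ bound, controls the $p_t$-weighted sums. Collecting all terms yields $C^{\mathrm{OPT}}+\mathcal{O}\big(\beta T^{-1/4}+\beta F_{\mathrm{min}}+1/V\big)$.
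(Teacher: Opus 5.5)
Your skeleton matches the paper's proof: drift-plus-penalty at exploitation steps against the optimal stationary policy, Lemma~\ref{lemma:satisfaction_error_bound_with_delta} for the predictor error, Theorem~\ref{theorem:max_queue_length} for $\mathbb{E}[Q_t]$, and $\sum_t p_t=\Theta(T^{3/4})$ for the exploration cost. The paper, however, charges exploration steps only their cost $\mathbb{E}[K]C_{\max}/T$ and never carries an exploration drift term. You correctly notice that such a term exists, since a uniformly random model can have $\mathbb{E}[s_{\tilde m,t}]<\alpha$. Your argument then stalls on $\frac{1}{VT}\sum_t p_t\,\mathbb{E}[Q_t]$, and the remedy you propose cannot work. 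The schedule $\eta_k$ only influences the predictor error. This term is driven by the uniform exploration choice and by the $\mathcal{O}(\sqrt{t})$ part of Theorem~\ref{theorem:max_queue_length}, which comes from the crude telescoping $\mathbb{E}[Z_t^2]\le t-1$ and does not depend on $\eta_k$ at all. Letting $V$ grow with $T$ would change the statement. In the paper, the ``learning rate schedule'' in the theorem is just the one from Lemma~\ref{lemma:sgd_bound} that makes Lemma~\ref{lemma:satisfaction_error_bound_with_delta} hold.

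The missing idea is summation by parts. The time-average bound of Theorem~\ref{theorem:max_queue_length} holds for every prefix: $S_t:=\sum_{\tau\le t}\mathbb{E}[Q_\tau]\le Gt$ with $G=\mathcal{O}(\gamma+1/\rho)$. Since $p_t$ is nonincreasing,
\[
\sum_{t=1}^T p_t\,\mathbb{E}[Q_t]=p_TS_T+\sum_{t=1}^{T-1}(p_t-p_{t+1})S_t\le G\sum_{t=1}^T p_t=\mathcal{O}\bigl(G\,T^{3/4}\bigr).
\]
Using $\gamma\le\alpha\psi+3V\Delta_C$, this gives $\frac{1}{VT}\sum_t p_t\,\mathbb{E}[Q_t]=\mathcal{O}\bigl((\alpha\psi/V+\Delta_C+1/(\rho V))\,T^{-1/4}\bigr)$. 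That is an $\mathcal{O}(T^{-1/4})$ term, and the pointwise $\sqrt{t}$ bound is never needed.

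The cross term $\mathbb{E}\bigl[Q_t\max_m|\hat s_{m,t}-s_{m,t}|\bigr]$ is also left open in your proposal. The paper does not justify it either; it simply writes it as $2\,\mathbb{E}[Q_t]\cdot\mathbb{E}[\max_m|\hat s_{m,t}-s_{m,t}|]$. Your conditioning route yields $\mathbb{E}[Q_t\,g(\mathbf z_t)]$ with $g(\mathbf z)=\mathbb{E}_{\mathbf a}[\max_m|\hat s(\mathbf z,\mathbf a)_m-s_m|]$. Lemma~\ref{lemma:satisfaction_error_bound_with_delta} only controls $\mathbb{E}[g(\mathbf z_t)]$, so a positive correlation between $Q_t$ and $g(\mathbf z_t)$ is not excluded. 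Your fallback fails quantitatively. On the failure event you only have $Q_t\le\alpha t$, so those events contribute a vanishing amount after averaging only if roughly $\delta_t=o(1/t)$. But the Markov-based high-probability bound gives $\beta F_{\mathrm{min}}+o(1)$ only when $t^{3/4}\delta_t\to\infty$, and these two requirements are incompatible. To reach the stated bound you would have to adopt the paper's factorization explicitly, i.e., assume the error on the fresh request is uncorrelated with $Q_t$, or supply an argument that is not in your proposal.
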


The proof bounds the drift-plus-penalty expression and incorporates \Cref{lemma:satisfaction_error_bound_with_delta} and \Cref{theorem:max_queue_length}. The optimality gap gets smaller as $T$ is larger, up to the irreducible predictor error $\beta F_{\mathrm{min}}$ and the $\sfrac{1}{V}$ term. 
The parameter $V$ governs the trade-off between cost optimality and SLA satisfaction, where a larger $V$ tightens the optimality gap but slows constraint satisfaction, and vice versa. The residual term $\beta F_{\mathrm{min}}$ reflects the irreducible predictor error. When the predictor is expressive, $F_{\mathrm{min}}$ is small.

\vspace{-0.8em}
\section{Practical Considerations}
\label{sec:practical_considerations}
\vspace{-0.8em}

\textbf{Positive-Class Reweighting.}
In practice, the distribution of user feedback is rarely balanced.
Users tend to provide predominantly negative feedback to indicate whenever the LLM does not serve their requests sufficiently well.
Without accounting for this potential imbalance, the predictor might converge to a trivial constant predictor in the worst case.
To counteract this, we use per-model positive-class weights to upweight sparse positive feedback.
For each model $m$, we maintain running counts of positive and negative observations across all collected feedback and compute $p_m = \sfrac{N_m^-}{N_m^+},$
where $N_m^+$ and $N_m^-$ denote the number of positive and negative feedback samples for model $m$ in $\mathcal{D}$.
At each SGD step, $p_m$ is applied as the positive-class weight for the corresponding output in the masked BCE loss over the mini-batch B.
This rescales the contribution of the minority class proportionally, ensuring the predictor remains discriminative regardless of label skew.\footnote{We note that in our benchmark evaluation we do not observe notable imbalance. However, in production deployments imbalance can be significant, making this correction an important safeguard.}

\textbf{Dynamic Adaptation of $V$.}
The parameter $V$ controls the trade-off between cost efficiency and convergence speed towards the satisfaction target $\alpha$. 
A small $V$ prioritizes rapid SLA compliance by routing to expensive models early, and vice versa.
From our theoretical analysis, we see that $V$ depends on the cost spread between largest and smallest model in a zoo, making manual tuning of $V$ fragile.
Instead, we adapt $V$ with a data-driven mechanism $V = (Q_{\max} \cdot \varepsilon) / \Delta C$, where $Q_{\max}$ is the maximum tolerated queue length, $\varepsilon > 0$ is a cost-sensitivity parameter, and $\Delta C$ is the observed mean cost difference between the cheapest and most expensive model.
Intuitively, a wider cost spread yields a smaller $V$, lowering the pressure toward cheaper models and allowing for timely SLA convergence. 
This removes a hard-to-tune hyperparameter and improves the adaptivity of \algname.
We demonstrate the practical benefit in \Cref{sec:experimental_results}.

\begin{table*}[tbp]
    \centering
    \caption{
    Main results. 
    \textbf{\algname} is the cheapest routing technique that guarantees a minimum user satisfaction rate. 
    \textcolor{darkgreen}{Green} indicates the router or model satisfies the SLA ($\alpha$) and the cheapest one is \underline{underlined}. CARROT and Causal Router results include offline classifier training costs. 
    }
    \label{tab:main_results}    
    \newcommand{\mcc}[1]{\makecell[c]{#1}}
    \newcommand{\mcl}[1]{\makecell[l]{#1}}
    \setlength{\tabcolsep}{2pt}
\renewcommand{\arraystretch}{0.82}
\resizebox{\textwidth}{!}{%
\begin{tabular}{l *{3}{ccc}}
\toprule
& \multicolumn{3}{c}{\textbf{ACPBench} ($\alpha$=65\%)} & \multicolumn{3}{c}{\textbf{ARC Challenge} ($\alpha$=52\%)} & \multicolumn{3}{c}{\textbf{ARC Easy} ($\alpha$=80\%)} \\
\cmidrule(lr){2-4}\cmidrule(lr){5-7}\cmidrule(lr){8-10}
\textbf{Method} & \thead{Operating\\Cost (MJ)} & \thead{Request.\\Satisf. (\%)} & \thead{Model Call Ratio\\(2B/9B/35B/122B)} & \thead{Operating\\Cost (MJ)} & \thead{Request.\\Satisf. (\%)} & \thead{Model Call Ratio\\(2B/9B/35B/122B)} & \thead{Operating\\Cost (MJ)} & \thead{Request.\\Satisf. (\%)} & \thead{Model Call Ratio\\(2B/9B/35B/122B)} \\
\cmidrule(lr){1-1}\cmidrule(lr){2-4}\cmidrule(lr){5-7}\cmidrule(lr){8-10}
2B only & 1.01 & \textcolor{darkred}{44.8} & 100\%/0\%/0\%/0\% & 0.06 & \textcolor{darkred}{38.2} & 100\%/0\%/0\%/0\% & 0.11 & \textcolor{darkred}{70.9} & 100\%/0\%/0\%/0\% \\
9B only & 1.54 & \textcolor{darkred}{59.2} & 0\%/100\%/0\%/0\% & 0.07 & \underline{\textcolor{darkgreen}{53.2}} & 0\%/100\%/0\%/0\% & 0.14 & \underline{\textcolor{darkgreen}{81.2}} & 0\%/100\%/0\%/0\% \\
35B only & 2.94 & \underline{\textcolor{darkgreen}{66.2}} & 0\%/0\%/100\%/0\% & 0.07 & \textcolor{darkgreen}{58.9} & 0\%/0\%/100\%/0\% & 0.15 & \textcolor{darkgreen}{84.1} & 0\%/0\%/100\%/0\% \\
122B only & 7.19 & \textcolor{darkgreen}{73.2} & 0\%/0\%/0\%/100\% & 0.22 & \textcolor{darkgreen}{61.0} & 0\%/0\%/0\%/100\% & 0.41 & \textcolor{darkgreen}{84.9} & 0\%/0\%/0\%/100\% \\
\cmidrule(lr){1-1}\cmidrule(lr){2-4}\cmidrule(lr){5-7}\cmidrule(lr){8-10}
CARROT & 2.87$_{\pm0.65}$ & \textcolor{darkgreen}{66.5$_{\pm2.4}$} & 0\%/22\%/60\%/18\% & 0.16$_{\pm0.01}$ & \textcolor{darkgreen}{55.2$_{\pm1.5}$} & 12\%/31\%/57\%/0\% & 0.32$_{\pm0.01}$ & \textcolor{darkgreen}{80.6$_{\pm1.9}$} & 25\%/17\%/58\%/0\% \\
Causal & 6.84$_{\pm0.50}$ & \textcolor{darkgreen}{71.8$_{\pm1.2}$} & 0\%/0\%/14\%/86\% & 0.15$_{\pm0.01}$ & \textcolor{darkgreen}{52.5$_{\pm1.1}$} & 6\%/91\%/3\%/0\% & 0.31$_{\pm0.01}$ & \textcolor{darkgreen}{80.5$_{\pm1.1}$} & 5\%/92\%/3\%/0\% \\
MESS+ & 3.20$_{\pm0.98}$ & \textcolor{darkred}{61.2$_{\pm3.9}$} & 19\%/34\%/19\%/27\% & 0.12$_{\pm0.02}$ & \textcolor{darkred}{51.8$_{\pm1.6}$} & 31\%/25\%/18\%/26\% & 0.16$_{\pm0.01}$ & \textcolor{darkred}{78.2$_{\pm1.1}$} & 36\%/33\%/28\%/3\% \\
\textbf{SLARouter (ours)} & 2.71$_{\pm0.38}$ & \textbf{\underline{\textcolor{darkgreen}{65.1$_{\pm2.6}$}}} & 14\%/10\%/62\%/14\% & 0.10$_{\pm0.04}$ & \textbf{\underline{\textcolor{darkgreen}{53.0$_{\pm3.2}$}}} & 26\%/28\%/26\%/20\% & 0.21$_{\pm0.05}$ & \textbf{\underline{\textcolor{darkgreen}{81.6$_{\pm0.3}$}}} & 20\%/14\%/42\%/24\% \\
\bottomrule
\end{tabular}%
}

\resizebox{\textwidth}{!}{%
\begin{tabular}{l *{3}{ccc}}
\toprule
& \multicolumn{3}{c}{\textbf{BoolQ} ($\alpha$=85\%)} & \multicolumn{3}{c}{\textbf{GPQA} ($\alpha$=45\%)} & \multicolumn{3}{c}{\textbf{GSM8K} ($\alpha$=83\%)} \\
\cmidrule(lr){2-4}\cmidrule(lr){5-7}\cmidrule(lr){8-10}
\textbf{Method} & \thead{Operating\\Cost (MJ)} & \thead{Request.\\Satisf. (\%)} & \thead{Model Call Ratio\\(2B/9B/35B/122B)} & \thead{Operating\\Cost (MJ)} & \thead{Request.\\Satisf. (\%)} & \thead{Model Call Ratio\\(2B/9B/35B/122B)} & \thead{Operating\\Cost (MJ)} & \thead{Request.\\Satisf. (\%)} & \thead{Model Call Ratio\\(2B/9B/35B/122B)} \\
\cmidrule(lr){1-1}\cmidrule(lr){2-4}\cmidrule(lr){5-7}\cmidrule(lr){8-10}
2B only & 0.09 & \textcolor{darkred}{72.0} & 100\%/0\%/0\%/0\% & 0.02 & \textcolor{darkred}{32.5} & 100\%/0\%/0\%/0\% & 0.56 & \textcolor{darkred}{56.9} & 100\%/0\%/0\%/0\% \\
9B only & 0.11 & \underline{\textcolor{darkgreen}{88.5}} & 0\%/100\%/0\%/0\% & 0.04 & \textcolor{darkred}{38.8} & 0\%/100\%/0\%/0\% & 0.92 & \underline{\textcolor{darkgreen}{86.4}} & 0\%/100\%/0\%/0\% \\
35B only & 0.13 & \textcolor{darkgreen}{88.9} & 0\%/0\%/100\%/0\% & 0.04 & \textcolor{darkred}{44.3} & 0\%/0\%/100\%/0\% & 1.74 & \textcolor{darkgreen}{87.9} & 0\%/0\%/100\%/0\% \\
122B only & 0.36 & \textcolor{darkgreen}{87.1} & 0\%/0\%/0\%/100\% & 0.11 & \underline{\textcolor{darkgreen}{49.7}} & 0\%/0\%/0\%/100\% & 5.59 & \textcolor{darkgreen}{84.2} & 0\%/0\%/0\%/100\% \\
\cmidrule(lr){1-1}\cmidrule(lr){2-4}\cmidrule(lr){5-7}\cmidrule(lr){8-10}
CARROT & 0.38$_{\pm0.01}$ & \textcolor{darkgreen}{88.0$_{\pm1.2}$} & 5\%/28\%/67\%/0\% & 0.09$_{\pm0.04}$ & \textcolor{darkgreen}{46.1$_{\pm3.2}$} & 0\%/0\%/66\%/34\% & 2.04$_{\pm0.05}$ & \textcolor{darkgreen}{86.6$_{\pm1.7}$} & 2\%/56\%/42\%/0\% \\
Causal & 0.36$_{\pm0.01}$ & \textcolor{darkgreen}{88.7$_{\pm0.3}$} & 0\%/90\%/10\%/0\% & 0.11$_{\pm0.04}$ & \textcolor{darkgreen}{47.9$_{\pm3.2}$} & 0\%/0\%/33\%/67\% & 4.16$_{\pm2.00}$ & \textcolor{darkgreen}{85.9$_{\pm1.9}$} & 0\%/0\%/39\%/61\% \\
MESS+ & 0.21$_{\pm0.04}$ & \textcolor{darkred}{83.4$_{\pm1.9}$} & 31\%/22\%/13\%/35\% & 0.07$_{\pm0.04}$ & \textcolor{darkgreen}{45.8$_{\pm2.9}$} & 4\%/0\%/62\%/34\% & 1.40$_{\pm0.32}$ & \textcolor{darkred}{74.0$_{\pm8.0}$} & 42\%/43\%/4\%/11\% \\
\textbf{SLARouter (ours)} & 0.32$_{\pm0.07}$ & \textbf{\underline{\textcolor{darkgreen}{85.6$_{\pm3.0}$}}} & 24\%/12\%/26\%/38\% & 0.06$_{\pm0.04}$ & \textbf{\underline{\textcolor{darkgreen}{45.0$_{\pm1.6}$}}} & 7\%/1\%/63\%/29\% & 1.65$_{\pm0.46}$ & \textbf{\underline{\textcolor{darkgreen}{83.1$_{\pm1.9}$}}} & 18\%/24\%/49\%/9\% \\
\bottomrule
\end{tabular}%
}

\resizebox{\textwidth}{!}{%
\begin{tabular}{l *{3}{ccc}}
\toprule
& \multicolumn{3}{c}{\textbf{LAMBADA} ($\alpha$=65\%)} & \multicolumn{3}{c}{\textbf{MMLU} ($\alpha$=75\%)} & \multicolumn{3}{c}{\textbf{SciQ} ($\alpha$=96\%)} \\
\cmidrule(lr){2-4}\cmidrule(lr){5-7}\cmidrule(lr){8-10}
\textbf{Method} & \thead{Operating\\Cost (MJ)} & \thead{Request.\\Satisf. (\%)} & \thead{Model Call Ratio\\(2B/9B/35B/122B)} & \thead{Operating\\Cost (MJ)} & \thead{Request.\\Satisf. (\%)} & \thead{Model Call Ratio\\(2B/9B/35B/122B)} & \thead{Operating\\Cost (MJ)} & \thead{Request.\\Satisf. (\%)} & \thead{Model Call Ratio\\(2B/9B/35B/122B)} \\
\cmidrule(lr){1-1}\cmidrule(lr){2-4}\cmidrule(lr){5-7}\cmidrule(lr){8-10}
2B only & 0.06 & \textcolor{darkred}{48.0} & 100\%/0\%/0\%/0\% & 0.41 & \textcolor{darkred}{59.7} & 100\%/0\%/0\%/0\% & 0.05 & \textcolor{darkred}{94.0} & 100\%/0\%/0\%/0\% \\
9B only & 0.08 & \textcolor{darkred}{63.2} & 0\%/100\%/0\%/0\% & 0.55 & \underline{\textcolor{darkgreen}{75.9}} & 0\%/100\%/0\%/0\% & 0.07 & \underline{\textcolor{darkgreen}{97.0}} & 0\%/100\%/0\%/0\% \\
35B only & 0.09 & \underline{\textcolor{darkgreen}{69.2}} & 0\%/0\%/100\%/0\% & 0.65 & \textcolor{darkgreen}{82.7} & 0\%/0\%/100\%/0\% & 0.08 & \textcolor{darkgreen}{96.8} & 0\%/0\%/100\%/0\% \\
122B only & 0.27 & \textcolor{darkgreen}{69.2} & 0\%/0\%/0\%/100\% & 1.98 & \textcolor{darkgreen}{87.6} & 0\%/0\%/0\%/100\% & 0.22 & \textcolor{darkgreen}{97.3} & 0\%/0\%/0\%/100\% \\
\cmidrule(lr){1-1}\cmidrule(lr){2-4}\cmidrule(lr){5-7}\cmidrule(lr){8-10}
CARROT & 0.50$_{\pm0.01}$ & \textcolor{darkgreen}{67.8$_{\pm0.6}$} & 1\%/20\%/78\%/1\% & 1.31$_{\pm0.01}$ & \textcolor{darkgreen}{78.6$_{\pm1.0}$} & 4\%/47\%/49\%/0\% & 0.14$_{\pm0.01}$ & \textcolor{darkgreen}{96.6$_{\pm0.1}$} & 30\%/35\%/35\%/0\% \\
Causal & 0.47$_{\pm0.01}$ & \textcolor{darkgreen}{65.2$_{\pm3.5}$} & 0\%/67\%/33\%/0\% & 1.42$_{\pm0.03}$ & \textcolor{darkgreen}{76.3$_{\pm0.9}$} & 0\%/12\%/88\%/0\% & 0.14$_{\pm0.01}$ & \textcolor{darkgreen}{96.9$_{\pm0.1}$} & 2\%/92\%/6\%/0\% \\
MESS+ & 0.13$_{\pm0.03}$ & \textbf{\underline{\textcolor{darkgreen}{66.7$_{\pm0.8}$}}} & 11\%/5\%/69\%/15\% & 0.96$_{\pm0.22}$ & \textcolor{darkred}{73.9$_{\pm2.3}$} & 35\%/29\%/10\%/26\% & 0.10$_{\pm0.01}$ & \textcolor{darkgreen}{96.5$_{\pm0.1}$} & 24\%/39\%/32\%/4\% \\
\textbf{SLARouter (ours)} & 0.17$_{\pm0.04}$ & \textcolor{darkgreen}{65.6$_{\pm1.3}$} & 11\%/32\%/21\%/36\% & 0.97$_{\pm0.06}$ & \textbf{\underline{\textcolor{darkgreen}{77.7$_{\pm0.3}$}}} & 21\%/23\%/30\%/26\% & 0.09$_{\pm0.03}$ & \textbf{\underline{\textcolor{darkgreen}{96.3$_{\pm0.5}$}}} & 29\%/29\%/28\%/13\% \\
\bottomrule
\end{tabular}%
}

\resizebox{\textwidth}{!}{%
\begin{tabular}{l *{3}{ccc}}
\toprule
& \multicolumn{3}{c}{\textbf{SocialIQa} ($\alpha$=48\%)} & \multicolumn{3}{c}{\textbf{WinoGrande} ($\alpha$=73\%)} & \multicolumn{3}{c}{\textbf{Avg. across 11 benchmarks} ($\alpha$=70\%)} \\
\cmidrule(lr){2-4}\cmidrule(lr){5-7}\cmidrule(lr){8-10}
\textbf{Method} & \thead{Operating\\Cost (MJ)} & \thead{Request.\\Satisf. (\%)} & \thead{Model Call Ratio\\(2B/9B/35B/122B)} & \thead{Operating\\Cost (MJ)} & \thead{Request.\\Satisf. (\%)} & \thead{Model Call Ratio\\(2B/9B/35B/122B)} & \thead{Operating\\Cost (MJ)} & \thead{Request.\\Satisf. (\%)} & \thead{Model Call Ratio\\(2B/9B/35B/122B)} \\
\cmidrule(lr){1-1}\cmidrule(lr){2-4}\cmidrule(lr){5-7}\cmidrule(lr){8-10}
2B only & 0.06 & \textcolor{darkred}{41.0} & 100\%/0\%/0\%/0\% & 0.03 & \textcolor{darkred}{63.2} & 100\%/0\%/0\%/0\% & 0.22 & \textcolor{darkred}{56.5} & 100\%/0\%/0\%/0\% \\
9B only & 0.08 & \underline{\textcolor{darkgreen}{48.1}} & 0\%/100\%/0\%/0\% & 0.04 & \textcolor{darkred}{72.1} & 0\%/100\%/0\%/0\% & 0.33 & \textcolor{darkred}{69.4} & 0\%/100\%/0\%/0\% \\
35B only & 0.09 & \textcolor{darkgreen}{51.9} & 0\%/0\%/100\%/0\% & 0.04 & \underline{\textcolor{darkgreen}{74.6}} & 0\%/0\%/100\%/0\% & 0.55 & \underline{\textcolor{darkgreen}{73.2}} & 0\%/0\%/100\%/0\% \\
122B only & 0.27 & \textcolor{darkgreen}{53.6} & 0\%/0\%/0\%/100\% & 0.12 & \textcolor{darkgreen}{76.3} & 0\%/0\%/0\%/100\% & 1.52 & \textcolor{darkgreen}{74.9} & 0\%/0\%/0\%/100\% \\
\cmidrule(lr){1-1}\cmidrule(lr){2-4}\cmidrule(lr){5-7}\cmidrule(lr){8-10}
CARROT & 0.22$_{\pm0.01}$ & \textcolor{darkgreen}{48.6$_{\pm1.3}$} & 25\%/21\%/54\%/0\% & 0.14$_{\pm0.01}$ & \textcolor{darkgreen}{73.5$_{\pm0.1}$} & 3\%/18\%/79\%/0\% & 0.74$_{\pm0.07}$ & \textcolor{darkgreen}{71.6$_{\pm1.4}$} & 10\%/30\%/58\%/2\% \\
Causal & 0.22$_{\pm0.01}$ & \textcolor{darkgreen}{48.6$_{\pm2.9}$} & 13\%/54\%/33\%/0\% & 0.19$_{\pm0.05}$ & \textcolor{darkgreen}{75.7$_{\pm1.0}$} & 0\%/0\%/33\%/67\% & 1.31$_{\pm0.25}$ & \textcolor{darkgreen}{71.8$_{\pm1.6}$} & 2\%/45\%/27\%/26\% \\
MESS+ & 0.14$_{\pm0.03}$ & \underline{\textbf{\textcolor{darkgreen}{49.2$_{\pm1.8}$}}} & 27\%/18\%/30\%/25\% & 0.07$_{\pm0.03}$ & \textbf{\underline{\textcolor{darkgreen}{74.9$_{\pm0.7}$}}} & 3\%/1\%/70\%/26\% & 0.59$_{\pm0.16}$ & \textcolor{darkred}{68.7$_{\pm2.3}$} & 24\%/23\%/32\%/21\% \\
\textbf{SLARouter (ours)} & 0.15$_{\pm0.05}$ & \textcolor{darkgreen}{49.5$_{\pm1.1}$} & 23\%/10\%/34\%/33\% & 0.07$_{\pm0.02}$ & \textcolor{darkgreen}{74.7$_{\pm0.3}$} & 8\%/2\%/50\%/39\% & 0.59$_{\pm0.11}$ & \underline{\textbf{\textcolor{darkgreen}{70.66$_{\pm1.5}$}}} & 18\%/17\%/39\%/26\% \\
\bottomrule
\end{tabular}%
}

\end{table*}

\section{Experiments}
\label{sec:experiments}
We underpin our theoretical results by an extensive empirical evaluation across a numerous LLM benchmarks.
Our code is available on GitHub\footnote{
Please reach out for repo access while the preprint is under review.
}. 
Details on reproducibility are in the appendix.

\subsection{Design}

\textbf{Model Zoo \& Benchmarks.}
We construct our model zoo from the Qwen 3.5 family~\cite{qwen3}: two dense models (2B, 9B) and two mixture-of-experts models (35B, 122B).
Benchmarks span commonsense reasoning (MMLU, SocialIQa, WinoGrande)~\cite{hendryckstest2021mmlu,sap2019social,sakaguchi2020winogrande}, scientific knowledge (ARC Challenge, ARC Easy, SciQ, GPQA)~\cite{clark2018arc,welbl2017sciq,gpqa}, mathematical reasoning (GSM8K)~\cite{cobbe2021gsm8k}, reading comprehension (BoolQ, LAMBADA)~\cite{clark2019boolq,paperno-etal-2016-lambada}, and agentic tasks (ACPBench)~\cite{acpbench}.

\textbf{Predictor.}
We use SGD, a learning rate of $0.006$, a weight decay of $0.01$, a minibatch size of $B=16$, and set the exploration probability parameter $c=0.1$~\cite{woisetschlager2025messplus}, if not specified otherwise.

\textbf{Service Level Agreement.}
We define $\alpha$ per benchmark within the capabilities of the model zoo and set $V = 0.00001$ to ensure timely SLA compliance.
We explore data-driven $V$ (Section~\ref{sec:practical_considerations}) with $Q_{\mathrm{max}} = 30$ and $\varepsilon = 0.001$, which is desirable for settings with high user satisfaction requirements.

\textbf{Objective.}
We minimize the per-inference energy consumption (MJ) under constraint $\alpha$, and report the model call ratio per benchmark to characterize routing behavior.

\textbf{Observational Data.}
We set the user feedback rate to $0.2$ for all experiments. 
This is primarily to allow for sufficient training data for the CARROT and Causal Router classifiers. 
Below this rate a fair comparison is not possible.
However, we do conduct an ablation study on \algname to show its performance under lower feedback rates than $0.2$.

\textbf{Baselines.}
We compare \algname against CARROT~\cite{somerstep2025carrot}, Causal LLM Routing~\cite{tsiourvas2025causal}, and MESS+~\cite{woisetschlager2025messplus}.
In our comparison, we include the offline classifier training costs in the total operating cost.
For MESS+, we adopt the same $V$ and $c$ settings as what we use for \algname.

\begin{table}[t]
\centering
    \caption{%
        Introducing data-driven selection of $V$ improves operational efficiency while still complying with our SLA $\alpha$. It also remove a hyperparameter and makes \algname more practical to use. 
    }
    \label{tab:slarouter_fixed_vs_driven}
    \setlength{\tabcolsep}{5pt}
\renewcommand{\arraystretch}{0.95}
\resizebox{0.9\textwidth}{!}{%
\begin{tabular}{l ccc ccc}
\toprule
& \multicolumn{3}{c}{\textbf{ACPBench} ($\alpha$=65\%)} & \multicolumn{3}{c}{\textbf{GPQA} ($\alpha$=45\%)} \\
\cmidrule(lr){2-4}\cmidrule(lr){5-7}
\textbf{SLARouter variant} & \thead{Operating\\Cost (MJ)} & \thead{Post-Conv.\\Accuracy} & \thead{Model Call Ratio\\(2B/9B/35B/122B)} & \thead{Operating\\Cost (MJ)} & \thead{Post-Conv.\\Accuracy} & \thead{Model Call Ratio\\(2B/9B/35B/122B)} \\
\cmidrule(lr){1-1}\cmidrule(lr){2-4}\cmidrule(lr){5-7}
Best fixed $V$ & $2.712_{\pm0.12}$ & $65.1_{\pm0.2}$ & 10\%/20\%/58\%/12\% & $0.063_{\pm0.01}$ & $45.1_{\pm0.2}$ & 5\%/2\%/56\%/37\% \\
Data-driven & $2.641_{\pm0.20}$ & $65.1_{\pm0.3}$ & 10\%/20\%/58\%/12\% & $0.060_{\pm0.01}$ & $45.1_{\pm0.1}$ & 4\%/3\%/56\%/37\% \\
\midrule
$\Delta$ (rel.) & \textbf{$-3\%$} & $0.0\%$ & --- & \textbf{$-5\%$} & $0.0\%$ & --- \\
\bottomrule
\end{tabular}%
}

\end{table}

\vspace{-1em}
\subsection{Results}
\label{sec:experimental_results}
\vspace{-0.5em}

\textbf{SLA Guarantees ($\alpha$).}
\emph{\algname is the only method that satisfies $\alpha$ without careful per-benchmark hyperparameter tuning}, by maintaining accurate constraint tracking through predictor-estimated queue updates under sparse feedback.
MESS+ satisfies $\alpha$ on 5 out of 11 benchmarks, with failures attributable to feedback sparsity stalling $Q$ updates under its assumption of complete, balanced signals~(\Cref{tab:main_results}).
CARROT and the Causal router are competitive and also satisfy $\alpha$, but require careful per-benchmark tuning of $\mu$ and $\lambda$ to do so. 
In our experiments, we observe that CARROT and the Causal router both require hyperparameter configurations that strongly prioritize user satisfaction over cost to be able to meet the $\alpha$ requirement. Further details are available in the appendix.

\textbf{Cost Efficiency.}
\emph{\algname achieves the best cost-quality trade-off.} 
It simultaneously satisfies $\alpha$ and minimizes operating cost.
Among methods that meet $\alpha$, \algname is $1.25\times$ and $2.22\times$ cheaper than CARROT and the Causal router, respectively.
This is due to the interplay between the predictor and the adaptation of $Q$ at runtime. 
MESS+ matches \algname on cost but undershoots $\alpha$ on several benchmarks, confirming that low operating cost alone is insufficient without reliable constraint tracking.
The cost advantage is most pronounced on demanding workloads.
On ACPBench, \algname reduces cost by $1.06\times$ relative to CARROT and by over $2.5\times$ compared to the Causal router.
On easier benchmarks (e.g., ARC Easy), differences are smaller as mid-range models suffice.

\textbf{Satisfaction Accuracy and Margin.}
\emph{\algname achieves the highest post-convergence satisfaction rate among adaptive methods on 7 of 11 benchmarks, with a tighter margin above $\alpha$ than competitors}, confirming that its gains stem from more effective use of available signals rather than conservative routing.
Unlike MESS+, whose queue underestimates satisfaction shortfalls under sparse feedback, \algname's virtual queue substitutes predictor estimates to maintain uninterrupted constraint tracking throughout deployment.
CARROT and the Causal router are competitive, but their margins are sensitive to the benchmark-specific choice of their cost sensitivity parameter.

\textbf{Data-Driven Adaptation of Cost-Sensitivity $V$.}
\emph{Using data-driven $V$ eliminates the need for grid search while achieving an overall better SLA convergence speed under relevant cost savings.}
On challenging tasks like ACPBench and GPQA, the data-driven approach reduces operating cost by 3\% and 5\%, respectively, while achieving identical $\alpha$ compliance~(\Cref{tab:slarouter_fixed_vs_driven}).
A detailed comparison against a fine-grained grid search is available in the appendix.

\textbf{Predictor Quality under Sparse Feedback.}
\emph{Synthetic feedback collected from an LLM-based judge is a well-suited supplement for missing user feedback.}
When users do not provide feedback, \algname falls back to synthetic signals from GPT-5.4 with reasoning enabled, which closely mirrors real user feedback with 94.4\% and 87.9\% agreement on ACPBench and GPQA, respectively.
Our analysis shows that the predictor requires relatively few feedback samples to be able to sufficiently estimate the model fit for an incoming request. 
Specifically, we observe that SLA violations drop significantly once feedback from as few as 90 and 35 interactions is available for ACPBench and GPQA, respectively. 
Given the sample size of each benchmark, this is equal to a 5\% and 8\% feedback rate, respectively. 
The judge comes with no additional burden on end users~(\Cref{fig:judge_experiments}).
Details on the judge and alignment with ground truth data is in the appendix.

\begin{figure}[!t]
    \centering
    \begin{subfigure}{0.49\textwidth}
        \centering
        \includegraphics[width=\linewidth]{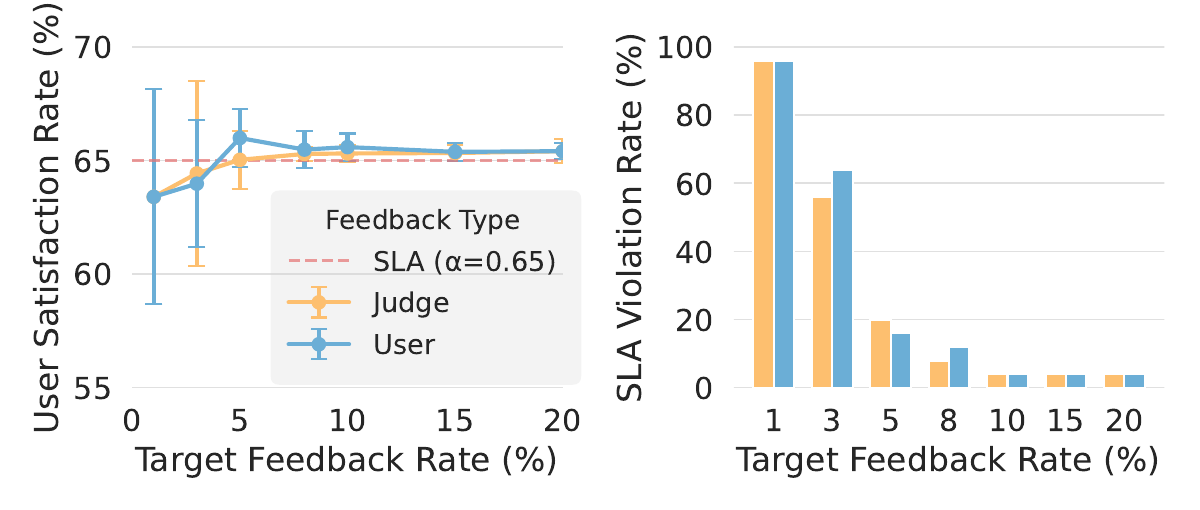}
        \vspace{-1.5em}
        \caption{\textbf{ACPBench} (1800 user requests)}
        \label{fig:acpbench_judge_feedback}
    \end{subfigure}
    \hfill
    \begin{subfigure}{0.49\textwidth}
        \centering
        \includegraphics[width=\linewidth]{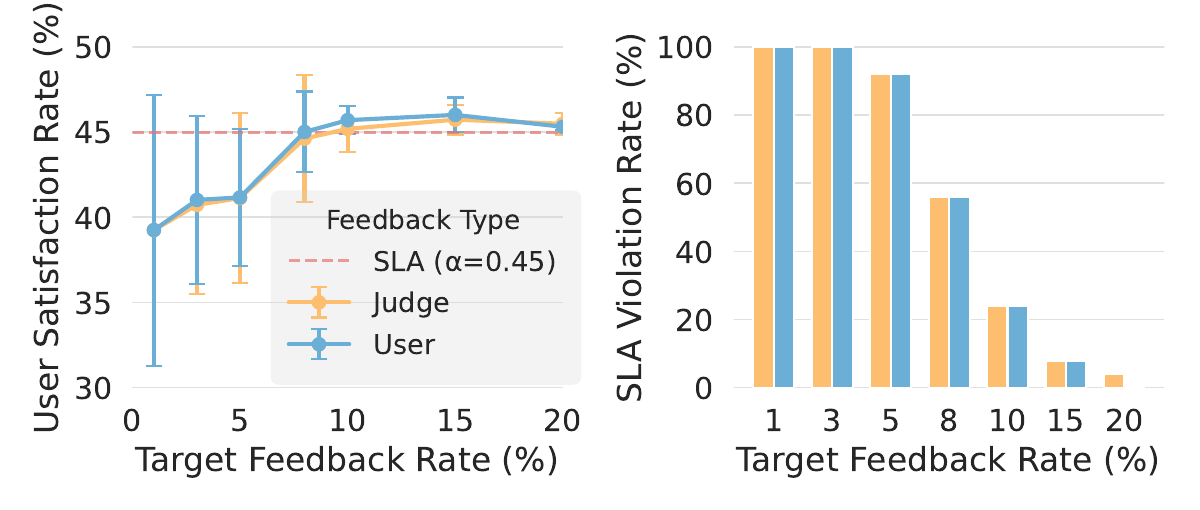}
        \vspace{-1.5em}
        \caption{\textbf{GPQA} (448 user requests)}
        \label{fig:gpqa_judge_feedback}
    \end{subfigure}
    \caption{Deployment of an LLM-based judge (GPT-5.4 with reasoning) to supplement predictor training under sparse user feedback. 
    Judge feedback closely aligns with user feedback and helps reduce SLA violations, enabling reliable routing without the need for dense user engagement.}
    \label{fig:judge_experiments}
    \vspace{1em}
\end{figure}

\section{Conclusion}

We present \algname, an online LLM router that extends the Lyapunov drift-plus-penalty framework, making it the first routing method to provide theoretical guarantees for cost optimality at a given user satisfaction rate under sparse, one-sided observational feedback. 
On average, SLARouter reduces operating costs by up to 2.22× over existing baselines while satisfying SLA constraints without per-benchmark tuning. 
When user feedback is sparse, an LLM-based judge provides a reliable synthetic supplement, enabling robust predictor updates without additional burden on end users. 
Our approach assumes constant feedback rates and IID request arrivals, which may not hold in production systems where feedback can arrive in bursts and where requests by individual users may be correlated.
Hence, we see priority in addressing non-constant feedback rates, feedback arriving in temporal waves, and the challenge of learning routing policies from negativity-biased feedback in future work.

\bibliography{main}
\bibliographystyle{abbrvnat}

\clearpage
\newpage
\appendix

\begin{center}
    {\bf\Large Appendix}
\end{center}

\startcontents[sections]
\printcontents[sections]{l}{1}{\setcounter{tocdepth}{2}}
\newpage

\setcounter{section}{0}
\renewcommand\thesection{\Alph{section}}
\numberwithin{equation}{section}
\counterwithin{figure}{section}
\counterwithin{algocf}{section}
\counterwithin{theorem}{section}
\counterwithin{lemma}{section}
\counterwithin{remark}{section}

\section{Proofs}
\label{appendix:proofs}

\subsection{Proof of \Cref{lemma:sgd_bound}}

For the loss gradient on a single sample, the variance term (stochastic noise) can be defined as
\begin{align}
    & \Expectbracket{\normsq{\nabla f(\mathbf{z}, \mathbf{a}, m) - \nabla F(\mathbf{z})} \bigg|\mathbf{z}} \leq \sigma^2, \ 
    \forall \mathbf{z}, m,\nonumber
\end{align}
where the inequality is due to \Cref{assumption:predictor_loss}.
Subsequently, we can define the variance term for the stochastic gradient (updated on mini-batch size $B$, where we use the short-hand notation $\mathbf{g}:=\nabla F(\mathbf{z}, \mathcal{B})$) and the empirical gradient $\nabla \hat{F}$ computed on $|\mathcal{D}|$ available samples as
\begin{align}
    & \Expectbracket{\normsq{\mathbf{g} - \nabla F(\mathbf{z})} \bigg|\mathbf{z}} \leq \frac{\sigma^2}{B}, \ 
    \forall \mathbf{z}, m,
\end{align}
\begin{align}
    & \Expectbracket{\normsq{\nabla \hat{F}(\mathbf{z}) - \nabla F(\mathbf{z})} \bigg|\mathbf{z}} \leq \frac{\sigma^2}{|\mathcal{D}|}, \ 
    \forall \mathbf{z}, m,
\end{align}
respectively.

We then have
\begin{align}
    &\Expectbracket{\normsq{\mathbf{g} - \nabla \hat{F}(\mathbf{z})} \bigg|\mathbf{z}} \nonumber \\
    &= \Expectbracket{\normsq{\mathbf{g} - \nabla \hat{F}(\mathbf{z}) + \nabla F(\mathbf{z}) - \nabla F(\mathbf{z})} \bigg|\mathbf{z}}\nonumber \\
    & \leq \Expectbracket{2\normsq{\mathbf{g}  - \nabla F(\mathbf{z})} + 2\normsq{\nabla \hat{F}(\mathbf{z}) - \nabla F(\mathbf{z})} \bigg|\mathbf{z}}\nonumber \\
    & \leq \frac{2\sigma^2}{B} + \frac{2\sigma^2}{|\mathcal{D}|}, \ 
    \forall \mathbf{z}, m
\end{align}

We note that $\Expectcond{ \mathbf{g}_k}{\mathbf{z}_k} := \nabla \hat{F}(\mathbf{z}_{k})$, which is generally not equal to $\nabla F(\mathbf{z}_k)$ under partial feedback.

\begin{proof}

Let $k$ denote the index of exploration steps. The SGD update of predictor training is
\begin{align}
\mathbf{z}_{k+1} \leftarrow \mathbf{z}_{k} - \eta_k \mathbf{g_k}
\end{align}

By $L$-smoothness, we have
\begin{align}
&\Expectcond{F(\mathbf{z}_{k+1})}{\mathbf{z}_k}\nonumber\\
&\leq F(\mathbf{z}_{k}) + \innerprod{\nabla F(\mathbf{z}_{k}), \Expectcond{\mathbf{z}_{k+1} - \mathbf{z}_k}{\mathbf{z}_k}} + \frac{L}{2}\normsq{\mathbf{z}_{k+1} - \mathbf{z}_k}\nonumber \\
&= F(\mathbf{z}_{k}) - \eta_k \innerprod{\nabla F(\mathbf{z}_{k}),\Expectcond{ \mathbf{g}_k}{\mathbf{z}_k} } + \frac{L\eta_k^2}{2}\Expectcond{\normsq{\mathbf{g}_k}}{\mathbf{z}_k}\nonumber \\
& \leq F(\mathbf{z}_{k}) - \eta_k \innerprod{\nabla F(\mathbf{z}_{k}),\nabla \hat{F}(\mathbf{z}_{k}) - \nabla F(\mathbf{z}_{k}) + \nabla F(\mathbf{z}_{k})} + \frac{L\eta_k^2}{2}\Expectcond{\normsq{\mathbf{g}_k}}{\mathbf{z}_k}\nonumber \\
& \leq F(\mathbf{z}_{k}) - \eta_k \innerprod{\nabla F(\mathbf{z}_{k}),\nabla \hat{F}(\mathbf{z}_{k}) - \nabla F(\mathbf{z}_{k}) } - \eta_k \normsq{\nabla F(\mathbf{z}_{k})} + \frac{L\eta_k^2}{2}\Expectcond{\normsq{\mathbf{g}_k}}{\mathbf{z}_k}\nonumber \\
& \leq F(\mathbf{z}_{k}) + \frac{\eta_k \normsq{\nabla F(\mathbf{z}_{k})}}{4} + \eta_k\normsq{\nabla \hat{F}(\mathbf{z}_{k}) - \nabla F(\mathbf{z}_{k}) } - \eta_k \normsq{\nabla F(\mathbf{z}_{k})} + \frac{L\eta_k^2}{2}\Expectcond{\normsq{\mathbf{g}_k}}{\mathbf{z}_k}\nonumber \\
&\leq F(\mathbf{z}_{k}) - \frac{3}{4}\eta_k \normsq{\nabla F(\mathbf{z}_{k})} + \eta_k\normsq{\nabla \hat{F}(\mathbf{z}_{k}) - \nabla F(\mathbf{z}_{k}) } + \frac{L\eta_k^2}{2}\left(\normsq{\nabla \hat{F}(\mathbf{z}_{k})} + \frac{2\sigma^2}{B} + \frac{2\sigma^2}{|\mathcal{D}|}\right)\nonumber \\
&\leq F(\mathbf{z}_{k}) - \frac{3}{4}\eta_k \normsq{\nabla F(\mathbf{z}_{k})} + \eta_k\normsq{\nabla \hat{F}(\mathbf{z}_{k}) - \nabla F(\mathbf{z}_{k}) } \nonumber \\
&\quad\quad + \frac{L\eta_k^2}{2}\left(\normsq{\nabla \hat{F}(\mathbf{z}_{k}) - \nabla F(\mathbf{z}_{k}) + \nabla F(\mathbf{z}_{k})} + \frac{2\sigma^2}{B} + \frac{2\sigma^2}{|\mathcal{D}|}\right)\nonumber \\
&\leq F(\mathbf{z}_{k}) - \frac{3}{4}\eta_k \normsq{\nabla F(\mathbf{z}_{k})} + \eta_k\normsq{\nabla \hat{F}(\mathbf{z}_{k}) - \nabla F(\mathbf{z}_{k}) } \nonumber \\
&\quad\quad + L\eta_k^2\left(\normsq{\nabla \hat{F}(\mathbf{z}_{k}) - \nabla F(\mathbf{z}_{k})} + \normsq{\nabla F(\mathbf{z}_{k})} + \frac{\sigma^2}{B} + \frac{\sigma^2}{|\mathcal{D}|}\right)\nonumber \\
&= F(\mathbf{z}_{k}) -\left( \frac{3}{4}\eta_k - L\eta_k^2 \right) \normsq{\nabla F(\mathbf{z}_{k})} + (\eta_k + L\eta_k^2) \normsq{\nabla \hat{F}(\mathbf{z}_{k}) - \nabla F(\mathbf{z}_{k})} + L \eta_k^2 \left(\frac{\sigma^2}{B} + \frac{\sigma^2}{|\mathcal{D}|} \right) \nonumber \\
&= F(\mathbf{z}_{k}) -\left( \frac{3}{4}\eta_k - L\eta_k^2 \right) \normsq{\nabla F(\mathbf{z}_{k})} + (\eta_k + 2L\eta_k^2) \frac{\sigma^2}{|\mathcal{D}|} + L \eta_k^2 \frac{\sigma^2}{B}.
\label{eq:proof_sgd_1}
\end{align}

From the PL condition, we have $\normsq{\nabla F(\mathbf{z})} \geq 2\mu\left(F(\mathbf{z})-F_\mathrm{min}\right), \forall \mathbf{z}$. Subtracting $F_\mathrm{min}$ on both sides of \eqref{eq:proof_sgd_1} and plugging in the PL inequality gives
\begin{align}
    &\Expectcond{F(\mathbf{z}_{k+1}) - F_\mathrm{min}}{\mathbf{z}_k} \nonumber \\ 
    & \leq  (F(\mathbf{z}_{k}) - F_\mathrm{min}) -2\mu\left( \frac{3}{4}\eta_k - L\eta_k^2 \right) \left(F(\mathbf{z}_k)-F_\mathrm{min}\right) + (\eta_k + 2L\eta_k^2) \frac{\sigma^2}{|\mathcal{D}|} + L \eta_k^2 \frac{\sigma^2}{B} \nonumber \\
     & =  \left(1 -2\mu \eta_k \left(\frac{3}{4}  - L\eta_k \right)\right) \left(F(\mathbf{z}_k)-F_\mathrm{min}\right) + (\eta_k + 2L\eta_k^2) \frac{\sigma^2}{|\mathcal{D}|} + L \eta_k^2 \frac{\sigma^2}{B}.
    \label{eq:proof_sgd_2}
\end{align}

Let $\eta_k \leq \frac{1}{2L}$. We have $\frac{3}{4} - L\eta_k \geq \frac{1}{4}$, and thus
\begin{align}
\Expectcond{F(\mathbf{z}_{k+1}) - F_\mathrm{min}}{\mathbf{z}_k} & \leq  
 \left(1 - \frac{\mu \eta_k}{2} \right) \left(F(\mathbf{z}_k)-F_\mathrm{min}\right) + \frac{2\eta_k\sigma^2}{|\mathcal{D}|} + \frac{L\eta_k^2\sigma^2}{B},
\label{eq:proof_sgd_3}
\end{align}
where we have absorbed the $2L\eta_k^2$ term in the $|\mathcal{D}|$-variance into the leading $\eta_k$ term since $\eta_k \leq \frac{1}{2L}$ implies $2L\eta_k^2 \leq \eta_k$. The remaining mini-batch variance term $\frac{L\eta_k^2\sigma^2}{B}$ is kept unchanged.

Taking total expectation gives
\begin{align}
    \Expectbracket{F(\mathbf{z}_{k+1})} - F_\mathrm{min} & \leq  
 \left(1 - \frac{\mu \eta_k}{2} \right) \left(\Expectbracket{F(\mathbf{z}_k)}-F_\mathrm{min}\right) + \frac{2\eta_k\sigma^2}{|\mathcal{D}|} + \frac{L\eta_k^2\sigma^2}{B}.
 \label{eq:proof_sgd_original_iterative_form}
\end{align}

Let $\mathcal{F}_k := \Expectbracket{F(\mathbf{z}_k)}-F_\mathrm{min}$. We have
\begin{align*}
    \mathcal{F}_2 &\leq \left(1-\frac{\mu\eta_1}{2}\right) \mathcal{F}_1 + \frac{2\eta_1\sigma^2}{|\mathcal{D}|} + \frac{L\eta_1^2\sigma^2}{B}.
\end{align*}
Next, we show by induction that
\begin{align}
    \mathcal{F}_k &\leq  \mathcal{F}_1 \prod_{\kappa=1}^{k-1}\left(1-\frac{\mu\eta_\kappa}{2}\right)  + \frac{2\sigma^2}{|\mathcal{D}|}\sum_{\kappa=1}^{k-1}\eta_\kappa\prod_{\kappa'=\kappa+1}^{k-1}\left(1-\frac{\mu\eta_{\kappa'}}{2}\right) + \frac{L\sigma^2}{B}\sum_{\kappa=1}^{k-1}\eta_\kappa^2\prod_{\kappa'=\kappa+1}^{k-1}\left(1-\frac{\mu\eta_{\kappa'}}{2}\right).
    \label{eq:proof_sgd_product_form}
\end{align}
To see this, we assume that \eqref{eq:proof_sgd_product_form} holds for $k\geq 2$, as we have shown above that it holds for $k=2$. We then have according to \eqref{eq:proof_sgd_original_iterative_form} that
\begin{align*}
    &\mathcal{F}_{k+1} \\
    &\leq \left(1-\frac{\mu\eta_k}{2}\right) \mathcal{F}_k + \frac{2\eta_k\sigma^2}{|\mathcal{D}|} + \frac{L\eta_k^2\sigma^2}{B} \\
    & \leq \left(1\!-\!\frac{\mu\eta_k}{2}\right)\!\!\left[ \mathcal{F}_1 \prod_{\kappa=1}^{k-1}\!\left(1\!-\!\frac{\mu\eta_\kappa}{2}\right)  + \frac{2\sigma^2}{|\mathcal{D}|}\sum_{\kappa=1}^{k-1}\eta_\kappa\!\!\prod_{\kappa'=\kappa+1}^{k-1}\!\!\left(1\!-\!\frac{\mu\eta_{\kappa'}}{2}\right) + \frac{L\sigma^2}{B}\sum_{\kappa=1}^{k-1}\eta_\kappa^2\!\!\prod_{\kappa'=\kappa+1}^{k-1}\!\!\left(1\!-\!\frac{\mu\eta_{\kappa'}}{2}\right) \right] \\
    &\quad\quad + \frac{2\eta_k\sigma^2}{|\mathcal{D}|} + \frac{L\eta_k^2\sigma^2}{B}\\
    & =  \mathcal{F}_1 \prod_{\kappa=1}^{k}\left(1-\frac{\mu\eta_\kappa}{2}\right)  + \frac{2\sigma^2}{|\mathcal{D}|}\sum_{\kappa=1}^{k-1}\eta_\kappa\prod_{\kappa'=\kappa+1}^{k}\left(1-\frac{\mu\eta_{\kappa'}}{2}\right) + \frac{L\sigma^2}{B}\sum_{\kappa=1}^{k-1}\eta_\kappa^2\prod_{\kappa'=\kappa+1}^{k}\left(1-\frac{\mu\eta_{\kappa'}}{2}\right)  \\
    &\quad\quad + \frac{2\eta_k\sigma^2}{|\mathcal{D}|} + \frac{L\eta_k^2\sigma^2}{B}\\  & =  \mathcal{F}_1 \prod_{\kappa=1}^{k}\left(1-\frac{\mu\eta_\kappa}{2}\right)  + \frac{2\sigma^2}{|\mathcal{D}|}\sum_{\kappa=1}^{k}\eta_\kappa\prod_{\kappa'=\kappa+1}^{k}\left(1-\frac{\mu\eta_{\kappa'}}{2}\right) + \frac{L\sigma^2}{B}\sum_{\kappa=1}^{k}\eta_\kappa^2\prod_{\kappa'=\kappa+1}^{k}\left(1-\frac{\mu\eta_{\kappa'}}{2}\right),
\end{align*}
where the last equality is because $\prod_{\kappa'=k+1}^{k}\left(1-\frac{\mu\eta_{\kappa'}}{2}\right)=1$. This proves \eqref{eq:proof_sgd_product_form}.

Recall that $\eta_k=\min\left\{\frac{4}{\mu (k+1)}, \frac{1}{2L}\right\}$. We note that, due to 
$L$-smoothness, we have $\mu\leq L$, because otherwise the $L$-smoothness contradicts with the 
PL condition. Therefore, $\frac{8L}{\mu} \geq 8$. We have $\eta_k=\frac{1}{2L}$ when 
$k\leq \tilde{k} := \left\lfloor \frac{8L}{\mu}-1\right\rfloor$. For $k>\tilde{k}$, we have 
$\eta_k=\frac{4}{\mu (k+1)}$, and in this case, $1-\frac{\mu\eta_k}{2} = 1 - \frac{2}{k+1} = 
\frac{k-1}{k+1}$.

We now bound the last two terms of \eqref{eq:proof_sgd_product_form}. Define
\begin{align}
    G_1(k) &:= \sum_{\kappa=1}^{k-1}\eta_\kappa\prod_{\kappa'=\kappa+1}^{k-1}\left(1-\frac{\mu\eta_{\kappa'}}{2}\right), \label{eq:proof_sgd_recurrence_eta}\\
    G_2(k) &:= \sum_{\kappa=1}^{k-1}\eta_\kappa^2\prod_{\kappa'=\kappa+1}^{k-1}\left(1-\frac{\mu\eta_{\kappa'}}{2}\right), \label{eq:proof_sgd_recurrence_eta_square}
\end{align}
so that the last two terms of \eqref{eq:proof_sgd_product_form} are equal to $\frac{2\sigma^2}{|\mathcal{D}|}G_1(k) + 
\frac{L\sigma^2}{B}G_2(k)$.

For the ease of presentation, we also define $k_0 := \tilde{k}+1$. Then, for $k\leq k_0$, we have $\kappa\leq \tilde{k}$ and $\kappa'\leq \tilde{k}$ in the sum and product terms, respectively.

\textbf{Bounding $G_1(k)$.}
For $k\leq k_0$, since $\eta_\kappa = \frac{1}{2L}$ and $1 - \frac{\mu\eta_{\kappa'}}{2} = 
1-\frac{\mu}{4L}$ throughout,
\begin{align}
    G_1(k) = \sum_{\kappa=1}^{k-1}\frac{1}{2L}\left(1-\frac{\mu}{4L}\right)^{k-1-\kappa} 
    \leq \frac{1}{2L}\cdot\frac{1}{\frac{\mu}{4L}} = \frac{2}{\mu}. \label{eq:proof_sgd_G1_small_k}
\end{align}

For $k\geq k_0$, the recurrence form of \eqref{eq:proof_sgd_recurrence_eta} is
\begin{align}
    G_1(k+1) &= \left(1-\frac{\mu\eta_k}{2}\right)G_1(k) + \eta_k \nonumber \\
    &= \frac{k-1}{k+1}\cdot G_1(k) + \frac{4}{\mu(k+1)}.
    \label{eq:proof_sgd_G1_recurrence}
\end{align}

We now show by induction that, for general $k > k_0$, we have
\begin{align}
    G_1(k) &\leq \frac{2\tilde{k}k_0}{\mu(k-1)k} + \frac{4(k-k_0)}{\mu k} \leq \mathcal{O}(1),
    \label{eq:proof_sgd_G1}
\end{align}
where the last inequality is straightforward because $\tilde{k}$ and $k_0$ are constants depending only on 
$L$ and $\mu$, and the last term of $\frac{2(k-k_0)}{\mu k}$ does not vanish but is upper bounded by $\frac{2}{\mu}$ for arbitrarily large $k$. We therefore focus on proving the first inequality as follows.

To see this, we first note that \eqref{eq:proof_sgd_G1} holds for $k=k_0$, because
\begin{align*}
    \frac{2\tilde{k}k_0}{\mu(k_0-1)k_0}+ \frac{4(k_0-k_0)}{\mu k_0} = \frac{2\tilde{k}k_0}{\mu\tilde{k}k_0} = \frac{2}{\mu} \geq G_1(k_0)
\end{align*}
according to \eqref{eq:proof_sgd_G1_small_k}. Now, assume that \eqref{eq:proof_sgd_G1} holds for some $k\geq k_0$. We then have from \eqref{eq:proof_sgd_G1_recurrence} that
\begin{align*}
    G_1(k+1) &= \frac{k-1}{k+1}\cdot G_1(k) + \frac{4}{\mu(k+1)} \\
    &\leq \frac{k-1}{k+1}\cdot \left[\frac{2\tilde{k}k_0}{\mu(k-1)k} + \frac{4(k-k_0)}{\mu k}\right] + \frac{4}{\mu(k+1)}\\
    &= \frac{2\tilde{k}k_0}{\mu k(k+1)} + \frac{4(k-k_0)(k-1)}{\mu k(k+1)} + \frac{4}{\mu(k+1)}\\
    &\leq \frac{2\tilde{k}k_0}{\mu k(k+1)} + \frac{4(k-k_0)}{\mu (k+1)} + \frac{4}{\mu(k+1)}\\
    &= \frac{2\tilde{k}k_0}{\mu k(k+1)} + \frac{4(k+1-k_0)}{\mu (k+1)},
\end{align*}
which proves \eqref{eq:proof_sgd_G1}.

\textbf{Bounding $G_2(k)$.}
For $k\leq k_0$, since $\eta_\kappa = \frac{1}{2L}$ and $1 - \frac{\mu\eta_{\kappa'}}{2} = 
1-\frac{\mu}{4L}$ throughout,
\begin{align}
    G_2(k) = \sum_{\kappa=1}^{k-1}\frac{1}{4L^2}\left(1-\frac{\mu}{4L}\right)^{k-1-\kappa}
    \leq \frac{1}{4L^2}\cdot\frac{1}{\frac{\mu}{4L}} = \frac{1}{L\mu}.
    \label{eq:proof_sgd_G2_small_k}
\end{align}

For $k\geq k_0$, the recurrence form of \eqref{eq:proof_sgd_recurrence_eta_square} is
\begin{align}
    G_2(k+1) &= \left(1-\frac{\mu\eta_k}{2}\right)G_2(k) + \eta_k^2 \nonumber \\
    &= \frac{k-1}{k+1}\cdot G_2(k) + \frac{16}{\mu^2(k+1)^2}.
    \label{eq:proof_sgd_G2_recurrence}
\end{align}

We now show by induction that, for general $k > k_0$, we have
\begin{align}
    G_2(k) &\leq \frac{\tilde{k}k_0}{L\mu(k-1)k} + \frac{16}{\mu^2 k} = \mathcal{O}\left(\frac{1}{k}\right),
    \label{eq:proof_sgd_G2}
\end{align}
where the last equality is straightforward because $\tilde{k}$ and $k_0$ are constants depending only on 
$L$ and $\mu$.  We therefore focus on proving the first inequality as follows.

To see this, we first note that \eqref{eq:proof_sgd_G2} holds for $k=k_0$, because
\begin{align*}
    \frac{\tilde{k}k_0}{L\mu(k_0-1)k_0} + \frac{16}{\mu^2 k} = \frac{\tilde{k}k_0}{L\mu\tilde{k}k_0} + \frac{16}{\mu^2 k} = \frac{1}{L\mu} + \frac{16}{\mu^2 k} \geq \frac{1}{L\mu} \geq G_2(k_0),
\end{align*}
where the last inequality follows from \eqref{eq:proof_sgd_G2_small_k}.
Now, assume that \eqref{eq:proof_sgd_G2} holds for some $k\geq k_0$. We then have from \eqref{eq:proof_sgd_G2_recurrence} that
\begin{align*}
    G_2(k+1) 
    &= \frac{k-1}{k+1}\cdot G_2(k) + \frac{16}{\mu^2(k+1)^2} \\
    &\leq \frac{k-1}{k+1}\cdot \left[ \frac{\tilde{k}k_0}{L\mu(k-1)k} + \frac{16}{\mu^2 k}\right] + \frac{16}{\mu^2(k+1)^2}\\
    &= \frac{\tilde{k}k_0}{L\mu k(k+1)} + \frac{16(k-1)}{\mu^2 k(k+1)} + \frac{16}{\mu^2(k+1)^2}\\
    &\leq \frac{\tilde{k}k_0}{L\mu k(k+1)} + \frac{16(k-1)}{\mu^2 k(k+1)} + \frac{16}{\mu^2k(k+1)}\\
    &\leq \frac{\tilde{k}k_0}{L\mu k(k+1)} + \frac{16}{\mu^2(k+1)},
\end{align*}
which proves \eqref{eq:proof_sgd_G2}.

\textbf{Total Noise Term.}
Combining \eqref{eq:proof_sgd_G1} and \eqref{eq:proof_sgd_G2} with 
\eqref{eq:proof_sgd_product_form}, the total noise contribution satisfies
\begin{align}
    \frac{2\sigma^2}{|\mathcal{D}|}\,G_1(k) + \frac{\sigma^2}{B}\,G_2(k) 
    = \mathcal{O}\left(\frac{\sigma^2}{|\mathcal{D}|} + \frac{\sigma^2}{kB}\right).
    \label{eq:proof_sgd_noise_terms}
\end{align}

\textbf{Bounding First Term of \eqref{eq:proof_sgd_product_form}.}
Finally, we consider the first term of \eqref{eq:proof_sgd_product_form}. When $k\leq k_0$,
\begin{align}
    H(k):= \prod_{\kappa=1}^{k-1}\left(1-\frac{\mu\eta_\kappa}{2}\right) = \left(1-\frac{\mu}{4L}\right)^{k-1}.
    \label{eq:proof_sgd_first_term_small_k}
\end{align}

For $k\geq k_0$, recall that $\eta_k=\frac{4}{\mu (k+1)}$ and thus $1-\frac{\mu\eta_k}{2} = 1 - \frac{2}{k+1} = 
\frac{k-1}{k+1}$. We have the following recurrence relation:
\begin{align}
    H(k+1) &= \left(1-\frac{\mu\eta_k}{2}\right)H(k) = \frac{k-1}{k+1}\cdot H(k). 
    \label{eq:proof_sgd_first_term_recurrence}
\end{align}

We now prove by induction that, for general $k> k_0$,
\begin{align}
    H(k) = \left(1-\frac{\mu}{4L}\right)^{\tilde{k}} \cdot\frac{\tilde{k}k_0}{(k-1)k} =\mathcal{O}\left(\frac{1}{k^2}\right),
    \label{eq:proof_sgd_first_term}
\end{align}
where the last equality is because $\tilde{k}$ and $k_0$ are constants as they only depend on $L$ and $\mu$. Thus, we only focus on proving the first equality.

We note that \eqref{eq:proof_sgd_first_term} holds for $k=k_0$, because
\begin{align*}
    \left(1-\frac{\mu}{4L}\right)^{\tilde{k}} \cdot\frac{\tilde{k}k_0}{\tilde{k}k_0} = \left(1-\frac{\mu}{4L}\right)^{k_0-1} = H(k_0),
\end{align*}
where the last equality is from \eqref{eq:proof_sgd_first_term_small_k}. Now, assume that \eqref{eq:proof_sgd_first_term} holds for some $k\geq k_0$. We then have from \eqref{eq:proof_sgd_first_term_recurrence} that
\begin{align*}
    H(k+1) &= \frac{k-1}{k+1}\cdot H(k)\\
    &= \frac{k-1}{k+1}\cdot \left(1-\frac{\mu}{4L}\right)^{\tilde{k}} \cdot\frac{\tilde{k}k_0}{(k-1)k} \\
    &= \left(1-\frac{\mu}{4L}\right)^{\tilde{k}} \cdot\frac{\tilde{k}k_0}{k(k+1)}, 
\end{align*}
which proves \eqref{eq:proof_sgd_first_term}.

\textbf{Final Bound.}
Combining \eqref{eq:proof_sgd_noise_terms} and \eqref{eq:proof_sgd_first_term} with \eqref{eq:proof_sgd_product_form}, we obtain
\begin{align}
    \mathcal{F}_k = \Expectbracket{F(\mathbf{z}_k)}-F_\mathrm{min}\leq \mathcal{O}\left(\frac{1}{k^2} + \frac{\sigma^2}{|\mathcal{D}|} + \frac{\sigma^2}{kB}\right).
\end{align}

\end{proof}

\subsection{Total Expectation Result of \Cref{lemma:sgd_bound}}

We give a formal proof of the total expectation result of \Cref{lemma:sgd_bound}, starting with bounding the expectation of $k$.

\begin{lemma}
\label{lemma:k_bound}
Let $k$ denote the random variable of the number of predictor training steps after processing $t$ requests, and $\feedbackprob$ denote the fixed feedback availability rate across all steps. We have
\begin{align}
    \Expectbracket{k} \leq \mathcal{O}\left(\feedbackprob t^{\frac{3}{4}}\right) \textnormal{ and }
    \Expectbracket{\frac{1}{k}} \leq \mathcal{O}\left(\frac{1}{\feedbackprob t^{\frac{3}{4}}}\right).
\end{align}    
\end{lemma}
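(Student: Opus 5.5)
We write $k$ as a sum of independent Bernoulli indicators and handle the two claims separately. For each request $\tau\in\{1,\dots,t\}$, a predictor training step happens exactly when the request is explored ($X_\tau=1$, or $\tau=1$) and feedback arrives. By \Cref{assumption:partial_feedback}, the feedback event is independent of the exploration coin. So
\begin{align*}
k=\sum_{\tau=1}^{t} Z_\tau, \qquad Z_\tau \sim \text{Bernoulli}(p_\tau\feedbackprob)\ \text{independent},\qquad p_\tau=\min\{1,c\,\tau^{-1/4}\}.
\end{align*}

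\textbf{Bound on $\Expectbracket{k}$.} Linearity gives $\Expectbracket{k}=\feedbackprob\sum_{\tau=1}^t p_\tau \le \feedbackprob\, c\sum_{\tau=1}^t \tau^{-1/4}$. Comparing with the integral $\int_0^t x^{-1/4}dx=\tfrac43 t^{3/4}$ yields $\Expectbracket{k}\le \tfrac43 c\,\feedbackprob\, t^{3/4}=\mathcal{O}(\feedbackprob t^{3/4})$. The same comparison gives the matching lower bound. Since $p_\tau=1$ only for $\tau\le c^4$, we have $\sum_\tau p_\tau\ge \min\{1,c\}\sum_{\tau}\tau^{-1/4}\ge \min\{1,c\}\,(t^{3/4}-1)$, so $\Expectbracket{k}=\Theta(\feedbackprob t^{3/4})$. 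Write $\lambda:=\Expectbracket{k}$.

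\textbf{Bound on $\Expectbracket{1/k}$.} This is the main obstacle, because $k=0$ has positive probability. We therefore interpret the claim as a bound on $\Expectbracket{1/\max\{k,1\}}$; equivalently, the bound is used only where at least one training step has occurred, and with $|\mathcal{D}|=0$ the predictor is at its initialization and the loss gap is a constant. We plan to split on the event $\{k\ge \lambda/2\}$:
\begin{align*}
\Expectbracket{\tfrac{1}{\max\{k,1\}}}\le \frac{2}{\lambda}+\Pr\{k<\lambda/2\}.
\end{align*}
Since $k$ is a sum of independent Bernoullis, the multiplicative Chernoff bound gives $\Pr\{k<\lambda/2\}\le e^{-\lambda/8}$. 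Using $e^{-x}\le 1/x$, this is at most $8/\lambda$. Hence $\Expectbracket{1/\max\{k,1\}}\le 10/\lambda=\mathcal{O}\big(1/(\feedbackprob t^{3/4})\big)$, by the lower bound $\lambda=\Omega(\feedbackprob t^{3/4})$ from the first part.

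An alternative route avoids Chernoff. It uses the identity $\Expectbracket{1/(k+1)}=\int_0^1 \Expectbracket{x^{k}}dx=\int_0^1\prod_\tau(1-p_\tau\feedbackprob(1-x))\,dx\le\int_0^1 e^{-\lambda(1-x)}dx\le 1/\lambda$, and then $1/\max\{k,1\}\le 2/(k+1)$. We would present this version since it is cleaner and gives explicit constants.

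Finally, we combine these bounds with \Cref{lemma:sgd_bound}, using $k=|\mathcal{D}|$ and $1/k^2\le 1/k$. This gives the stated total-expectation rate $\mathcal{O}\big((1+\sigma^2)/(t^{3/4}\feedbackprob)\big)$.
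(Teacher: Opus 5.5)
Your proposal is correct, and your Chernoff argument is essentially the paper's proof. The paper computes $\lambda=\Expectbracket{k}=\feedbackprob\sum_{\tau\le t}p_\tau=\Theta(\feedbackprob t^{3/4})$ and splits on $\{k\ge\lambda/2\}$, bounding that piece by $2/\lambda$. It then controls the rest by $\Pr\{k\le\lambda/2\}\le e^{-\lambda/8}\le 1/(1+\lambda/8)$, using the multiplicative Chernoff bound with $\delta=1/2$.

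The route you say you would actually present is genuinely different. Instead of a tail bound plus a case split, you integrate the probability generating function: $\Expectbracket{1/(k+1)}=\int_0^1\Expectbracket{x^k}\,dx\le(1-e^{-\lambda})/\lambda$. Both arguments rest on the same factorization $\Expectbracket{x^k}=\prod_\tau(1-p_\tau\feedbackprob(1-x))$ for independent Bernoullis. Chernoff evaluates it at a single optimized point, while you integrate it over $[0,1]$. Your version gives the negative moment directly, with the explicit non-asymptotic constant $2/\lambda$. The paper's split is the more standard template, and it also yields a high-probability lower bound on $k$ that could be reused for high-probability statements.

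Your convention $1/\max\{k,1\}$ is slightly better suited than the paper's $1/k:=0$ on $\{k=0\}$. It dominates the paper's quantity, so your bound implies the paper's. It also accounts for the event $\{k=0\}$ automatically. On that event, the loss gap that enters when taking total expectation of \Cref{lemma:sgd_bound} is the constant initialization gap, which the lemma does not cover and the paper's convention silently drops. Similarly, your two-sided integral comparison is more careful than the paper's evaluation of $\sum_\tau p_\tau$, which writes an integral approximation as an equality.

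Two small fixes are needed. First, step $1$ always explores, so $Z_1\sim\mathrm{Bernoulli}(\feedbackprob)$ when $c<1$. This is harmless: independence is preserved and $\lambda$ grows by at most $\feedbackprob$. Second, your lower bound $\min\{1,c\}(t^{3/4}-1)$ vanishes at $t=1$. Use $\sum_{\tau=1}^t\tau^{-1/4}\ge t\cdot t^{-1/4}=t^{3/4}$ instead, which gives $\lambda\ge\min\{1,c\}\feedbackprob t^{3/4}$ for every $t\ge 1$.
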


\begin{proof}
Recall that $X_t \sim \mathrm{Bernoulli}(p_t)$ is an indicator denoting whether an exploration step occurs when processing request $t$.
Furthermore, $Y_t \sim \text{Bernoulli}(\feedbackprob)$ is an indicator denoting whether the user provides feedback at a given exploration step, where $\feedbackprob$ is constant across all steps.
Assuming $k$ is the total number of SGD steps after processing $t$ requests, and noting that $X_\tau \perp Y_\tau$, we have
\begin{align}
    &\lambda := \Expectbracket{k} = \Expectbracket{\sum_{\tau=1}^t X_\tau Y_\tau} = \sum_{\tau=1}^t \Expectbracket{X_\tau} \Expectbracket{Y_\tau} = \sum_{\tau=1}^t p_\tau \feedbackprob \\
    &= \sum_{\tau=1}^t \feedbackprob \min\left(1, \frac{c}{\sqrt[4]{\tau}}\right) = \sum^{c^4}_{\tau=1} \feedbackprob + \sum^t_{\tau=c^4+1} \frac{\feedbackprob c}{\sqrt[4]{\tau}}\\
    &= c^4 \feedbackprob + \frac{4 \feedbackprob c}{3} (t^{\frac{3}{4}} - c^3) = \Theta\left(\feedbackprob t^{\frac{3}{4}}\right)
    \label{eq:proof_k_expectation}
\end{align}

This proves the first result.

Considering $\frac{1}{k}$, we adopt the convention that $\frac{1}{k} := 0$ when $k=0$, since no predictor update has occurred in that case and \Cref{lemma:sgd_bound} (which is stated for $k\geq 1$) contributes no term. Equivalently, all expectations involving $\frac{1}{k}$ are taken on the event $\{k\geq 1\}$. We then note that
\begin{align}
    \Expectbracket{\frac{1}{k}} = \Expectbracket{\frac{1}{k}\cdot\Identity_{k\geq\lambda/2}} + \Expectbracket{\frac{1}{k}\cdot\Identity_{k<\lambda/2}} \leq \frac{2}{\lambda} + \Expectbracket{\frac{1}{k}\cdot\Identity_{k<\lambda/2}},
    \label{eq:proof_k_decompose}
\end{align}
where $\Identity_\mathcal{C}$ is an indicator function of whether the condition $\mathcal{C}$ holds, and $\lambda$ is as derived in \eqref{eq:proof_k_expectation} accounting for the feedback probability $\feedbackprob$.

We now consider the last term in \eqref{eq:proof_k_decompose}.
The multiplicative Chernoff bound shows that
\begin{align*}
    \Pr\{k\leq(1-\delta)\lambda\}\leq e^{-\frac{\delta^2\lambda}{2}},
\end{align*}
for $0<\delta<1$. Choosing $\delta=\frac{1}{2}$ gives
\begin{align*}
    \Pr\left\{k\leq\frac{\lambda}{2}\right\}\leq e^{-\frac{\lambda}{8}}.
\end{align*}
Under the above convention (so that $\frac{1}{k}\leq 1$ on $\{k\geq 1\}$ and $\frac{1}{k}=0$ on $\{k=0\}$),
\begin{align}
    \Expectbracket{\frac{1}{k}\cdot\Identity_{k<\lambda/2}} \leq \Expectbracket{\Identity_{k<\lambda/2}} \leq e^{-\frac{\lambda}{8}} = \frac{1}{e^{\frac{\lambda}{8}}} \leq \frac{1}{1+\frac{\lambda}{8}},
    \label{eq:proof_k_decompose_second_term}
\end{align}

where the last inequality is due to the elementary relation that $e^x\geq 1+x$ for any $x$.
Combining \eqref{eq:proof_k_expectation}, \eqref{eq:proof_k_decompose}, and \eqref{eq:proof_k_decompose_second_term}, we obtain
\begin{align}
    \Expectbracket{\frac{1}{k}} \leq \mathcal{O}\left(\frac{1}{\feedbackprob t^{\frac{3}{4}}}\right).
\end{align}

\end{proof}

We then have the following lemma.

\begin{lemma}[Formal total expectation bound of \Cref{lemma:sgd_bound}]
\label{lemma:sgd_bound_total_expectation}
Under the same conditions as for \Cref{lemma:sgd_bound}, we have
\begin{align}
\Expectbracket{F(\mathbf{z}_k)}-F_\mathrm{min}\leq \mathcal{O}\left(\frac{1+\sigma^2}{t^{\sfrac{3}{4}}\phi}\right).        
\end{align}
\end{lemma}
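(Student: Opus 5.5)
The plan is to take total expectation of the conditional bound in \Cref{lemma:sgd_bound} and use $k=|\mathcal{D}|$, which holds in \Cref{algo:slarouter} because an SGD step is taken exactly when a feedback sample is added to $\mathcal{D}$. Conditioning on $(k,\mathcal{D})$ and applying the tower rule, \Cref{lemma:sgd_bound} gives, on the event $\{k\geq 1\}$,
\begin{align*}
\Expectbracket{F(\mathbf{z}_k)}-F_\mathrm{min} \leq \mathcal{O}\left(\Expectbracket{\frac{1}{k^2}} + \sigma^2\,\Expectbracket{\frac{1}{k}} + \frac{\sigma^2}{B}\,\Expectbracket{\frac{1}{k}}\right),
\end{align*}
where the middle term uses $|\mathcal{D}|=k$. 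Since $k\geq 1$ on this event, $\frac{1}{k^2}\leq\frac{1}{k}$. Also $B\geq 1$ is a fixed constant. So the whole right-hand side is $\mathcal{O}\bigl((1+\sigma^2)\,\Expectbracket{1/k}\bigr)$.

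Next I would invoke \Cref{lemma:k_bound}, which gives $\Expectbracket{1/k}\leq \mathcal{O}\bigl(1/(\feedbackprob t^{3/4})\bigr)$. Substituting this yields the claimed $\mathcal{O}\bigl((1+\sigma^2)/(t^{3/4}\feedbackprob)\bigr)$. The constants hidden in $\mathcal{O}$, such as $\tilde{k}$, $k_0$, $L$ and $\mu$ from the proof of \Cref{lemma:sgd_bound}, do not depend on $k$ or $\mathcal{D}$. This lets them be pulled out of the outer expectation.

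The main obstacle is the event $\{k=0\}$, on which \Cref{lemma:sgd_bound} does not apply and $\mathbf{z}_k$ is still the initialization. I would handle it by the same split used in \Cref{lemma:k_bound}. On $\{k=0\}$, the gap $F(\mathbf{z}_0)-F_\mathrm{min}$ is a fixed constant. The probability of this event is at most $\Pr\{k\leq\lambda/2\}\leq e^{-\lambda/8}$ by the Chernoff bound, and this is $\mathcal{O}\bigl(1/(\feedbackprob t^{3/4})\bigr)$ since $\lambda=\Theta(\feedbackprob t^{3/4})$. Its contribution is therefore absorbed into the same rate.

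A secondary subtlety is that $\mathbf{z}_k$ and $\mathcal{D}$ are coupled. Conditioning on $k$ fixes the number of steps, and because $|\mathcal{D}|=k$, the bound is deterministic given $k$. No further decoupling is needed beyond noting that the conditional bound holds uniformly over realizations of $\mathcal{D}$ with a given size.
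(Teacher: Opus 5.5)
Your proposal is correct and matches the paper's proof, which takes total expectation of the bound in \Cref{lemma:sgd_bound}, substitutes $k=|\mathcal{D}|$, and uses \Cref{lemma:k_bound} to bound $\mathbb{E}[1/k]$. Your separate treatment of the event $\{k=0\}$, via $\Pr\{k=0\}\leq e^{-\lambda/8}\leq 8/\lambda=\mathcal{O}\bigl(1/(\feedbackprob t^{3/4})\bigr)$, is more careful than the paper: the paper only adopts the convention $1/k:=0$ there, which silently drops the non-vanishing initialization gap on that event.
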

\begin{proof}
    The result follows directly by noting $k = |\mathcal{D}|$ and taking total expectation to the bound in \Cref{lemma:sgd_bound}.
\end{proof}

\subsection{Proof of \Cref{lemma:satisfaction_error_bound_with_delta}}

We first prove the expectation version of \Cref{lemma:satisfaction_error_bound_with_delta}.

For the ease of discussion, let $\epsilon$ denote the upper bound of 
$\Expectbracket{F(\mathbf{z}_k)}-F_\mathrm{min}$ so that 
$\epsilon=\mathcal{O}\left(\frac{1+\sigma^2}{t^{3/4}\phi}\right)$
according to \Cref{lemma:sgd_bound_total_expectation}. We consider $\beta=M$ in this proof.

When $\Expectbracket{F(\mathbf{z}_k)}-F_\mathrm{min}\leq \epsilon$, from \eqref{eq:cross-entropy} and noting that the cross entropy is non-negative, we have
\begin{align}
    &\Expectbracket{\max_m \left\{ -s_{m,t}\log  \hat{s}_{m,t} - (1- s_{m,t})  
        \log (1- \hat{s}_{m,t} )\right\} }\nonumber \\
    &\leq \Expectbracket{\sum_{m=1}^M \left( -s_{m,t}\log  \hat{s}_{m,t} - (1- s_{m,t})  
        \log (1- \hat{s}_{m,t} ) \right) }\nonumber \\
    &\leq M \Expectbracket{F(\mathbf{z}_k)}\nonumber \\
    &\leq M F_\mathrm{min} + \epsilon M. 
    \label{proof_lemma:absolute_difference_bound_1}
\end{align}
For arbitrary sample $\mathbf{a}_t$ and model $m$, let
\begin{align*}
    \Gamma := -  s_{m,t}\log \hat{s}_{m,t}  - (1- s_{m,t})  \log (1-\hat{s}_{m,t}).
\end{align*}

We consider two cases as follows.
When $s_{m,t}=0$, we have
\begin{align*}
    &\Gamma = -\log (1-\hat{s}_{m,t}) \\
    &\Rightarrow \hat{s}_{m,t} = 1-e^{-\Gamma}\\
    &\Rightarrow |\hat{s}_{m,t} - s_{m,t}| = 1-e^{-\Gamma}.
\end{align*}
When $s_{m,t}=1$, we have
\begin{align*}
    &\Gamma = -\log \hat{s}_{m,t} \\
    &\Rightarrow 1-\hat{s}_{m,t} = 1-e^{-\Gamma}\\
    &\Rightarrow |\hat{s}_{m,t} - s_{m,t}| = 1-e^{-\Gamma}.
\end{align*}
Noting the elementary inequality $e^{-\Gamma}\geq 1-\Gamma$, we obtain
\begin{align*}
    |\hat{s}_{m,t} - s_{m,t}| \leq \Gamma.
\end{align*}

Because this relation holds for any sample and the corresponding $\Gamma$ defined on the 
sample, the expectation of the cross-entropy loss cannot be smaller than the expectation 
of the absolute difference. Combining with \eqref{proof_lemma:absolute_difference_bound_1}, 
we have
\begin{align}
    \Expectbracket{\max_m|\hat{s}_{m,t} - s_{m,t}|} 
    &\leq M F_\mathrm{min} + \epsilon M \nonumber\\
    &= M F_\mathrm{min}
    + \mathcal{O}\left(\frac{M(1+\sigma^2)}{t^{3/4}\phi}\right).
    \label{eq:abs_diff_expectation}
\end{align}
The final result then follows by setting $\beta=M$.

Next, we proceed with the proof of the high probability version of \Cref{lemma:satisfaction_error_bound_with_delta}.
Applying Markov's inequality to $F(\mathbf{z}_k) - F_\mathrm{min} \geq 0$, we have
\begin{align}
    \Pr\left\{F(\mathbf{z}_k) - F_\mathrm{min} \geq 
    \frac{\Expectbracket{F(\mathbf{z}_k)} - F_\mathrm{min}}{\delta}\right\}\leq \delta,
\end{align}
so with probability at least $1-\delta$,
\begin{align}
    F(\mathbf{z}_k) - F_\mathrm{min} \leq
    \frac{\Expectbracket{F(\mathbf{z}_k)} - F_\mathrm{min}}{\delta}
    \leq \frac{\epsilon}{\delta}
    = \mathcal{O}\left(\frac{1+\sigma^2}{ t^{\sfrac{3}{4}} \feedbackprob \delta}\right).
    \label{eq:markov_application}
\end{align}

Finally, using the pointwise relation $|\hat{s}_{m,t} - s_{m,t}| \leq \Gamma$ established above and $\max_m|\hat{s}_{m,t} - s_{m,t}| \leq \sum_{m=1}^M \Gamma \leq M F(\mathbf{z}_k)$ (with the cross-entropy evaluated on request $t$, analogous to the expectation version), the high-probability bound \eqref{eq:markov_application} on $F(\mathbf{z}_k) - F_\mathrm{min}$ gives, with probability at least $1-\delta$,
\begin{align}
    \max_m|\hat{s}_{m,t} - s_{m,t}| \leq M F_\mathrm{min} + M\left(F(\mathbf{z}_k) - F_\mathrm{min}\right) \leq M F_\mathrm{min} + \mathcal{O}\left(\frac{M (1+\sigma^2)}{ t^{\sfrac{3}{4}} \feedbackprob \delta}\right),
\end{align}
After setting $\beta = M$, this completes the proof of \Cref{lemma:satisfaction_error_bound_with_delta}.

\subsection{Proof of \Cref{theorem:max_queue_length}}

Let $\rho := \frac{q-\alpha-(1-\feedbackprob)\beta F_\mathrm{min}}{2}$, which is positive by \Cref{assumption:queue_length}. We invoke \Cref{lemma:satisfaction_error_bound_with_delta} with failure probability $\delta := \rho$. Since $\rho \leq \frac{q-\alpha}{2} < 1$, this is a valid choice with $\delta\in(0,1)$. It gives
$$q-\delta-\alpha = q-\alpha-\rho = \frac{q-\alpha}{2} + \frac{(1-\feedbackprob)\beta F_\mathrm{min}}{2} > 0,$$
under \Cref{assumption:queue_length}, so $q-\delta = q-\rho > \alpha$.

With $\psi=\Theta\!\left(\left(\frac{\beta(1+\sigma^2)}{\feedbackprob\,\rho\left(\frac{1}{3}-\beta F_\mathrm{min}\right)}\right)^{\sfrac{4}{3}}\right)$ as defined in \Cref{theorem:max_queue_length}, every $t\geq\psi$ satisfies
\begin{align}
    \mathcal{O}\!\left(\frac{\beta(1+\sigma^2)}{t^{\sfrac{3}{4}}\feedbackprob\,\rho}\right) \leq \frac{1}{3}-\beta F_\mathrm{min}.
    \label{eq:psi_consequence}
\end{align}

Thus, for $t\geq\psi$,
the high-probability version of \Cref{lemma:satisfaction_error_bound_with_delta} with $\delta=\rho$ gives, with a probability of at least $1-\rho$,
\begin{align}
    \max_m|\hat{s}_{m,t}-s_{m,t}| \leq \beta F_\mathrm{min}+\mathcal{O}\!\left(\frac{\beta(1+\sigma^2)}{t^{\sfrac{3}{4}}\feedbackprob\,\delta}\right) = \beta F_\mathrm{min}+\mathcal{O}\!\left(\frac{\beta(1+\sigma^2)}{t^{\sfrac{3}{4}}\feedbackprob\,\rho}\right) \leq \frac{1}{3},
    \label{eq:proof_queue_length_max_s_estimate_bound}
\end{align}
where the last inequality uses \eqref{eq:psi_consequence}.
Therefore, with a probability of at least $1-\rho$, if $s_{m,t}=1$ we must have $\hat{s}_{m,t} \geq \frac{2}{3}$, and if $s_{m,t}=0$ we must have $\hat{s}_{m,t} \leq \frac{1}{3}$.

Using $1-\feedbackprob\leq1$ and \eqref{eq:psi_consequence}, we can also obtain the following for $t\geq\psi$:
\begin{align}
    (1-\feedbackprob)\mathcal{O}\!\left(\frac{\beta(1+\sigma^2)}{t^{\sfrac{3}{4}}\feedbackprob}\right) \leq \mathcal{O}\!\left(\frac{\beta(1+\sigma^2)}{t^{\sfrac{3}{4}}\feedbackprob}\right) \leq \rho\cdot\mathcal{O}\!\left(\frac{\beta(1+\sigma^2)}{t^{\sfrac{3}{4}}\feedbackprob\,\rho}\right) \leq \rho\left(\frac{1}{3}-\beta F_\mathrm{min}\right) \leq \frac{\rho}{2},
    \label{eq:proof_queue_length_rho_over_2_bound}
\end{align}
which will be used in the drift analysis later.

Let $\gamma := \max\left\{\alpha \psi; 3V\Delta_C\right\}$ and $Z_t := \max\{0, Q_t - \gamma\}$. For any given $t$, let $\tilde{m}$ denote the selected model obtained from the solution to \eqref{eq:per-request-optimization}, i.e., $y_{\tilde{m},t}=1$ and $y_{m',t}=0$ for $m'\neq \tilde{m}$.

\textbf{The queue cannot reach $\gamma$ before step $\psi$.}
The satisfaction signal driving the queue update \eqref{eq:queue-update} is in $[0,1]$ and $\alpha\in(0,1)$, so $Q_{t+1}\leq Q_t+\alpha$ holds for any $t$. With $Q_1=0$, this gives $Q_t\leq\alpha(t-1)$ for all $t$. Because $\gamma\geq\alpha\psi$, it follows that $Q_t\leq\alpha(t-1)<\alpha\psi\leq\gamma$ for every $t\leq\psi$, so $Z_t=0$ throughout $t\leq\psi$. Equivalently, $Z_t>0$ (i.e., $Q_t>\gamma$) can occur only when $t>\psi$. For this reason, in the above, we derived \eqref{eq:proof_queue_length_max_s_estimate_bound} and \eqref{eq:proof_queue_length_rho_over_2_bound} only for the case of $t\geq\psi$.

\textbf{Model selection under a large queue.}
We show that, when $Q_t > \gamma$, we have $\hat{s}_{\hat{m},t} - \hat{s}_{\tilde{m},t} <\frac{1}{3}$, where $\hat{m} := \arg\max_m \hat{s}_{m,t}$. The proof is by contradiction. Suppose $\hat{s}_{\hat{m},t} - \hat{s}_{\tilde{m},t} \geq \frac{1}{3}$, then choosing $\hat{m}$ instead of $\tilde{m}$, i.e., letting 
$y_{\hat{m},t}=1$ and $y_{\tilde{m},t}=0$, will reduce the second term of the objective \eqref{eq:per-request_obj}  by more than $\frac{Q_t}{3}$. The increase in the first term of  \eqref{eq:per-request_obj}  due to this change is at most $V\Delta_C$. Since $Q_t > \gamma \geq 3V\Delta_C$, we know that this alternative model choice decreases the objective \eqref{eq:per-request_obj}, which contradicts that $\tilde{m}$ is the optimal model choice from \eqref{eq:per-request-optimization}.

\textbf{The selected model is satisfactory with high probability.}
Consider any $t$ with $Q_t > \gamma$. Let $A:=\{\max_m|\hat{s}_{m,t} - s_{m,t}| \leq \frac{1}{3}\}$. Since $t>\psi$ when $Q_t > \gamma$, we have $\Pr\{A\}\geq 1-\delta$. By \Cref{assumption:queue_length}, there is a model $m^*$ with $\Pr\{s_{m^*,t}=1\}\geq q$. On event $A$, whenever $s_{m^*,t}=1$ we have $\hat{s}_{m^*,t}\geq\frac{2}{3}$, hence $\hat{s}_{\hat{m},t}\geq\frac{2}{3}$, and the selection bound above gives $\hat{s}_{\tilde{m},t} > \hat{s}_{\hat{m},t}-\frac{1}{3}\geq\frac{1}{3}$, which forces $s_{\tilde{m},t}=1$. Thus $\{s_{m^*,t}=1\}\cap A \subseteq \{s_{\tilde{m},t}=1\}$, and by the union bound
$$\Pr\{s_{\tilde{m},t} = 1\} \geq \Pr\{s_{m^*,t}=1\} - \Pr\{A^c\} \geq q - \delta.$$
Because requests are IID and $Z_t$ is past-measurable, the same argument holds after conditioning on $Z_t$, giving $\Expectcond{s_{\tilde{m},t}}{Z_t} = \Pr\{s_{\tilde{m},t}=1\mid Z_t\}\geq q-\delta$.

Let
\begin{equation}
u_{\tilde{m}, t} := \begin{cases}
    s_{\tilde{m}, t}, & \textrm{if } s_{\tilde{m}, t}\neq\emptyset\\
    \hat{s}_{\tilde{m}, t}, & \textrm{if } s_{\tilde{m}, t}=\emptyset
\end{cases}.
\label{eq:queue_update_u_cases}
\end{equation}

We also note that the queue arrival at step $t$ satisfies $\left(\alpha - u_{\tilde{m}, t}\right)^2 \leq 1$, since $\alpha\in(0,1)$ and $u_{\tilde{m},t}\in[0,1]$.

\textbf{Lyapunov drift.}
We now bound the drift of $Z_t$. If $Z_t = 0$, then $Z_{t+1} = \max\{0, (Q_t - \gamma) + \alpha - u_{\tilde{m},t}\} \leq \max\{0, \alpha - u_{\tilde{m},t}\} \leq 1$, so $\frac{1}{2}\Expectcond{Z^2_{t+1} - Z^2_t}{Z_t} \leq \frac{1}{2} = -\frac{\rho}{2} Z_t + \frac{1}{2}$ (as $Z_t = 0$). If instead $Z_t > 0$, then $Q_t > \gamma$ and $t > \psi$, so \eqref{eq:proof_queue_length_max_s_estimate_bound}, \eqref{eq:proof_queue_length_rho_over_2_bound},  and $\Expectcond{s_{\tilde{m},t}}{Z_t} \geq q-\delta$ all apply, and we obtain
\begin{align*}
    &\frac{1}{2}\Expectcond{Z^2_{t+1} - Z^2_t}{Z_t} \\
    & = \frac{1}{2} \Expectcond{\left(\max\{0, Z_t + \alpha - u_{\tilde{m},t}\}\right)^2 - Z^2_t}{Z_t} \\
    & \leq \frac{1}{2} \Expectcond{\left(Z_t + \alpha - u_{\tilde{m},t}\right)^2 - Z^2_t}{Z_t} \\
    & \leq \Expectcond{Z_t\left(\alpha - u_{\tilde{m},t}\right) + \frac{1}{2}\left(\alpha - u_{\tilde{m},t}\right)^2 }{Z_t} \\
    & \overset{(a)}{\leq} \Expectcond{Z_t\left(\alpha - s_{\tilde{m},t}\right) }{Z_t} + \Expectcond{Z_t \left(s_{\tilde{m},t} - u_{\tilde{m},t}\right) }{Z_t} + \frac{1}{2} \\
    & \overset{(b)}{\leq} \Expectcond{Z_t\left(\alpha - s_{\tilde{m},t}\right) }{Z_t} + (1-\feedbackprob)\, Z_t\, \Expectbracket{|\hat{s}_{\tilde{m},t} - s_{\tilde{m},t} |} + \frac{1}{2} \\
    & \overset{(c)}{\leq} Z_t\left(\alpha - (q-\delta)\right) + (1-\feedbackprob)\, Z_t\left(\beta F_\mathrm{min}
        + \mathcal{O}\left(\frac{\beta (1+\sigma^2)}{ t^{\sfrac{3}{4}} \feedbackprob}\right)\right) + \frac{1}{2}\\
    & \overset{(d)}{=} Z_t\left(-\rho
        + (1-\feedbackprob)\mathcal{O}\left(\frac{\beta (1+\sigma^2)}{ t^{\sfrac{3}{4}} \feedbackprob}\right)\right) +\frac{1}{2}\\
    & \overset{(e)}{\leq} -\frac{\rho}{2} Z_t +\frac{1}{2},
\end{align*}
where $(a)$ adds and subtracts $s_{\tilde{m},t}$ and uses $\frac{1}{2}(\alpha-u_{\tilde{m},t})^2\leq\frac{1}{2}$; $(b)$ uses $s_{\tilde{m},t} - u_{\tilde{m},t} = (1-Y_t)(s_{\tilde{m},t} - \hat{s}_{\tilde{m},t})$, where $Y_t\sim\mathrm{Bernoulli}(\feedbackprob)$ is the feedback indicator independent of the satisfaction outcomes (\Cref{assumption:partial_feedback}), so that $\Expectcond{Z_t(s_{\tilde{m},t} - u_{\tilde{m},t})}{Z_t} = (1-\feedbackprob)Z_t\,\Expectbracket{s_{\tilde{m},t} - \hat{s}_{\tilde{m},t}} \leq (1-\feedbackprob)Z_t\,\Expectbracket{|\hat{s}_{\tilde{m},t} - s_{\tilde{m},t}|}$; $(c)$ applies \Cref{lemma:satisfaction_error_bound_with_delta} (expectation version) together with $\Expectcond{s_{\tilde{m},t}}{Z_t} \geq q-\delta$; $(d)$ uses $\alpha-(q-\delta)+(1-\feedbackprob)\beta F_\mathrm{min} = -\rho$, which holds by the choice $\delta=\rho=\frac{q-\alpha-(1-\feedbackprob)\beta F_\mathrm{min}}{2}$; and $(e)$ holds because $t \geq \psi$ ensures $(1-\feedbackprob)\mathcal{O}\left(\frac{\beta (1+\sigma^2)}{ t^{\sfrac{3}{4}} \feedbackprob}\right) \leq \frac{\rho}{2}$ from \eqref{eq:proof_queue_length_rho_over_2_bound}.

Combining the two cases, the drift bound $\frac{1}{2}\Expectcond{Z^2_{t+1} - Z^2_t}{Z_t} \leq -\frac{\rho}{2} Z_t + \frac{1}{2}$ holds for all $t\geq 1$. Taking total expectation gives
\begin{align*}
    \Expectbracket{Z^2_{t+1} - Z^2_t}  &\leq -\rho\,\Expectbracket{Z_t} + 1.
\end{align*}

Since $Z_1=0$, telescoping gives
\begin{align}
    \Expectbracket{Z^2_{t}}  \leq (t-1) - \rho\sum_{\tau=1}^{t-1} \Expectbracket{Z_\tau},
    \label{eq:proof_queue_length_0}
\end{align}
for $t>1$.

Noting that $\rho > 0$ and $Z_\tau\geq 0$, we have $\Expectbracket{Z^2_{t}} \leq t-1$, so by Jensen's inequality,
\begin{align}
    \Expectbracket{Z_t} \leq \sqrt{\Expectbracket{Z^2_{t}} } \leq \sqrt{t-1} \leq \mathcal{O}\left( \sqrt{t}\right),
    \label{eq:proof_queue_length_1}
\end{align}
for $t\geq 1$.

In addition, from \eqref{eq:proof_queue_length_0} and $\Expectbracket{Z^2_{t}}\geq 0$, we have
\begin{align*}
    \rho\sum_{\tau=1}^{t-1} \Expectbracket{Z_\tau}  \leq (t-1) - \Expectbracket{Z^2_{t}} \leq t-1.
\end{align*}
Therefore,
\begin{align}
    \frac{1}{t}\sum_{\tau=1}^{t} \Expectbracket{Z_\tau}  \leq \frac{1}{t}\left(\frac{t-1}{\rho} + \Expectbracket{Z_t}\right) \leq \mathcal{O}\left(\frac{1}{\rho}\right),
    \label{eq:proof_queue_bound_average}
\end{align}
for $t\geq 1$.

The final result then follows by combining $Q_t \leq \gamma + Z_t$ with \eqref{eq:proof_queue_length_1} and \eqref{eq:proof_queue_bound_average}, giving the two bounds.
\qed

\subsection{Proof of \Cref{corollary:constraint_satisfaction_at_t}}

Using the definition in \eqref{eq:queue_update_u_cases}, the virtual queue update \eqref{eq:queue-update} can be written as $Q_{t+1} = \max\{0, Q_t + \alpha - u_{\tilde{m},t}\}$. We have $Q_{t+1}\geq Q_t + \alpha - u_{\tilde{m},t}$, i.e., $\alpha - u_{\tilde{m},t}\leq Q_{t+1}-Q_t$. Summing over $t=1,\ldots,T$ and using $Q_1=0$ gives
\begin{align}
    \sum_{t=1}^T \left(\alpha - u_{\tilde{m},t}\right) \leq Q_{T+1} - Q_1 = Q_{T+1},
\end{align}
so that
\begin{align}
    \frac{1}{T}\sum_{t=1}^T u_{\tilde{m},t} \geq \alpha - \frac{Q_{T+1}}{T}.
    \label{eq:proof_corollary_u_lower}
\end{align}

Let $Y_t\sim\mathrm{Bernoulli}(\feedbackprob)$ denote the feedback-availability indicator at step $t$, which is independent of the routing decision and the satisfaction outcomes under \Cref{assumption:partial_feedback}. By the definition of $u_{\tilde{m},t}$ in \eqref{eq:queue_update_u_cases}, we have $u_{\tilde{m},t} = Y_t\, s_{\tilde{m},t} + (1-Y_t)\,\hat{s}_{\tilde{m},t}$. Taking expectation and using the independence of $Y_t$, we have
\begin{align}
    \Expectbracket{u_{\tilde{m},t}}
    = \feedbackprob\,\Expectbracket{s_{\tilde{m},t}} + (1-\feedbackprob)\Expectbracket{\hat{s}_{\tilde{m},t}}
    = \Expectbracket{s_{\tilde{m},t}} + (1-\feedbackprob)\Expectbracket{\hat{s}_{\tilde{m},t} - s_{\tilde{m},t}}.
\end{align}
Since $\hat{s}_{\tilde{m},t} - s_{\tilde{m},t} \leq |\hat{s}_{\tilde{m},t} - s_{\tilde{m},t}| \leq \max_m |\hat{s}_{m,t} - s_{m,t}|$, the expectation version of \Cref{lemma:satisfaction_error_bound_with_delta} gives
\begin{align}
    \Expectbracket{u_{\tilde{m},t}} \leq \Expectbracket{s_{\tilde{m},t}} + (1-\feedbackprob)\left(\beta F_\mathrm{min} + \mathcal{O}\left(\frac{\beta (1+\sigma^2)}{t^{\sfrac{3}{4}}\feedbackprob}\right)\right).
    \label{eq:proof_corollary_u_upper}
\end{align}
Because only one model serves request $t$, we have $\sum_{m=1}^M y_{m,t} s_{m,t} = s_{\tilde{m},t}$. Combining \eqref{eq:proof_corollary_u_lower} and \eqref{eq:proof_corollary_u_upper} after taking total expectation, and noting that $\frac{1}{T}\sum_{t=1}^T t^{-\frac{3}{4}} = \mathcal{O}(T^{-\frac{3}{4}})$, we obtain
\begin{align}
    \alpha - \frac{1}{T}\sum_{t=1}^T \sum_{m=1}^M \Expectbracket{y_{m,t} s_{m,t}}
    \leq \frac{\Expectbracket{Q_{T+1}}}{T} + (1-\feedbackprob)\beta F_\mathrm{min}
    + \mathcal{O}\left(\frac{(1-\feedbackprob)\beta (1+\sigma^2)}{\feedbackprob\, T^{\sfrac{3}{4}}}\right).
\end{align}
Finally, applying the queue-length bound $\Expectbracket{Q_{T+1}} \leq \gamma + \mathcal{O}(\sqrt{T})$ from \Cref{theorem:max_queue_length} gives $\frac{\Expectbracket{Q_{T+1}}}{T} \leq \frac{\gamma}{T} + \mathcal{O}\left(\frac{1}{\sqrt{T}}\right)$. Since $T^{-\frac{3}{4}} \leq \mathcal{O}(T^{-\frac{1}{2}})$, the last term is absorbed, yielding
\begin{align}
    \alpha - \frac{1}{T}\sum_{t=1}^T \sum_{m=1}^M \Expectbracket{y_{m,t} s_{m,t}}
    \leq \mathcal{O}\left(\frac{\gamma}{T} + \frac{1}{\sqrt{T}}\right) + (1-\feedbackprob)\beta F_\mathrm{min}.
\end{align}

\subsection{Proof of \Cref{theorem:cost_optimality_bound}}

We first consider any $t$ such that $X_t = 0$, i.e., no exploration or predictor training.
As before, let $\tilde{m}$ denote the solution to \eqref{eq:per-request-optimization} for a given $t$, i.e., $y_{\tilde{m},t}=1$ and $y_{m',t}=0$ for $m'\neq \tilde{m}$. We also use the definition in \eqref{eq:queue_update_u_cases}.
We consider the following Lyapunov drift of the queue length:
\begin{align}
    \frac{1}{2}\Expectcond{Q^2_{t+1} - Q^2_t}{Q_t}
    & \leq \frac{1}{2} \Expectcond{\left(Q_t + \alpha - u_{\tilde{m},t}\right)^2 - Q^2_t}{Q_t} \nonumber\\
    & \leq \frac{1}{2} \Expectcond{2Q_t (\alpha - u_{\tilde{m},t}) + 1}{Q_t} \nonumber\\
    & \leq \Expectcond{Q_t (\alpha - \hat{s}_{\tilde{m},t})}{Q_t}
      + Q_t \Expectbracket{\max_m|\hat{s}_{m,t} - s_{m,t}|} + \frac{1}{2},
\label{eq:proof_optimality_drift}
\end{align}
where the second inequality uses $(\alpha - u_{\tilde{m},t})^2 \leq 1$, and the third adds and subtracts $\hat{s}_{\tilde{m},t}$, considers that $u_{\tilde{m},t}$ is either $s_{\tilde{m},t}$ or $\hat{s}_{\tilde{m},t}$, and bounds the resulting estimation error term.

Let $\{y_{m,t}^{\mathrm{OPT}}, \forall m\}$ denote an optimal stationary policy for \eqref{eq:original-problem}.
Since $\tilde{m}$ minimizes $V C_{m,t} + Q_t(\alpha - \hat{s}_{m,t})$ over all models, we have
\begin{align}
    & \Expectcond{V C_{\tilde{m},t} + Q_t\left(\alpha - \hat{s}_{\tilde{m},t}\right)}{Q_t} \nonumber\\
    & \leq V \Expectbracket{\sum_{m=1}^M y_{m,t}^{\mathrm{OPT}} C_{m,t}}
      + Q_t \Expectbracket{\sum_{m=1}^M y_{m,t}^{\mathrm{OPT}}(s_{m,t} - \hat{s}_{m,t})}
      + Q_t \Expectbracket{\sum_{m=1}^M y_{m,t}^{\mathrm{OPT}}(\alpha - s_{m,t})} \nonumber\\
    & \leq V C^{\mathrm{OPT}} + Q_t \Expectbracket{\max_m|\hat{s}_{m,t} - s_{m,t}|},
    \label{eq:proof_drift_plus_penalty}
\end{align}
where the last inequality uses the fact that the optimal stationary policy satisfies \eqref{eq:constraint} with equality, so $\Expectbracket{\sum_m y_{m,t}^{\mathrm{OPT}}(\alpha - s_{m,t})} = 0$.

Combining \eqref{eq:proof_optimality_drift} and \eqref{eq:proof_drift_plus_penalty}, canceling out $\Expectcond{Q_t (\alpha - \hat{s}_{\tilde{m},t})}{Q_t}$ on both sides, and taking total expectation gives
\begin{align*}
    \frac{1}{2}\Expectbracket{Q^2_{t+1} - Q^2_t} + V\Expectbracket{\sum_{m=1}^M y_{m,t} C_{m,t}}
    \leq V C^{\mathrm{OPT}} + 2\Expectbracket{Q_t} \cdot \Expectbracket{\max_m|\hat{s}_{m,t} - s_{m,t}|} + \frac{1}{2}.
\end{align*}

Rearranging and applying \Cref{lemma:satisfaction_error_bound_with_delta} yields
\begin{align*}
    \Expectbracket{\sum_{m=1}^M y_{m,t} C_{m,t}}
    &\leq C^{\mathrm{OPT}}
      \!+\! \frac{2}{V}\Expectbracket{Q_t}\!\cdot\!\left(\beta F_\mathrm{min} 
        \!+\! \mathcal{O}\left(\frac{\beta (1\!+\!\sigma^2)}{ t^{\sfrac{3}{4}} \feedbackprob}\right)\right)
      \!+\! \frac{1}{2V} \!+\! \frac{1}{2V}\Expectbracket{Q^2_t \!-\! Q^2_{t+1}}.
\end{align*}

Applying \Cref{theorem:max_queue_length} to bound $\Expectbracket{Q_t} \leq \gamma + \mathcal{O}\left(\sqrt{t}\right)$ and absorbing $1+\sigma^2$ into $\mathcal{O}(\cdot)$, we obtain
\begin{align*}
    \Expectbracket{\sum_{m=1}^M y_{m,t} C_{m,t}}
    \leq C^{\mathrm{OPT}}
    \!+\! \frac{2}{V}\left(\mathcal{O}\!\left((\gamma \!+\! \sqrt{t})\,\frac{\beta}{t^{3/4}}\right)
    \!+\! \beta F_{\mathrm{min}} \Expectbracket{Q_t}\right)
    \!+\! \frac{1}{2V} \!+\! \frac{1}{2V}\Expectbracket{Q^2_t \!-\! Q^2_{t+1}}.
\end{align*}

For any $t$ with $X_t = 1$, the cost incurred during exploration is at most $C_{\max}$.
Averaging over all $T$ requests and telescoping the queue term, we obtain
\begin{align*}
    &\Expectbracket{\frac{1}{T}\sum_{t=1}^T \sum_{m=1}^M y_{m,t} C_{m,t}} \\
    &\leq C^{\mathrm{OPT}}
      + \frac{2}{V}\left(\frac{1}{T}\sum_{t=1}^T \mathcal{O}\!\left((\gamma+\sqrt{t})\,\frac{\beta}{t^{\frac{3}{4}}}\right)
      + \frac{\beta F_{\mathrm{min}}}{T}\sum_{t=1}^T \Expectbracket{Q_t}\right)
      + \frac{1}{2V} + \frac{\Expectbracket{K} C_{\max}}{T} \\
    &\leq C^{\mathrm{OPT}}
      + \mathcal{O}\!\left(\frac{1}{T}\sum_{t=1}^T \frac{\beta}{t^{\frac{1}{4}}} + \beta F_{\mathrm{min}} + \frac{1}{V} + \frac{C_{\max}}{T^{\frac{1}{4}}}\right) \\
    &= C^{\mathrm{OPT}} + \mathcal{O}\!\left(\frac{\beta}{\sqrt[4]{T}} + \beta F_{\mathrm{min}} + \frac{1}{V}\right),
\end{align*}
where the second inequality applies \Cref{theorem:max_queue_length} together with $\Expectbracket{K} = \sum_{t=1}^T p_t = \Theta(T^{\sfrac{3}{4}})$, which follows from the same computation as in the proof of \Cref{lemma:k_bound} but without the feedback factor $\feedbackprob$, because exploration cost is incurred regardless of whether feedback is received or not (we only know whether there is feedback after the model is chosen), and the final equality absorbs the fixed cost scale $C_{\max}$ into $\mathcal{O}(\cdot)$.

\clearpage
\section{Experimental Details \& Additional Results}
\label{app:reproducibility}

We provide a ready-to-run code base on GitHub: 
\emph{Please reach out for repo access while the preprint is under review.}
This section provides additional details to reproduce our results and additional experimental results.

\subsection{Environment and Hardware}
\label{app:env}

All experiments were executed in a single Python 3.12 environment managed with \texttt{uv}. 
A full list of locked dependencies can be found in the code base.

\textbf{Random seeding.} We seed all random generators in our experiment, usually with seeds 42, 1, and 1234, if not specified otherwise. 
For experiment where we use more than 3 seeds, we use the range $[1, n]$.

\textbf{Hardware.} 
We conducted all our experiments on a SLURM cluster equipped with NVIDIA H100 GPUs.
To collect the benchmark data, we used 1 GPU for Qwen 3.5 2B, 9B-AWQ, and 35B-AWQ. 
We used 2 GPUs to run Qwen 3.5 122B-AWQ.
Otherwise, all experiments presented in this paper require little GPU memory since the routing models are typically $\ll1\mathrm{B}$ parameters.

\subsection{Dataset}
\label{app:dataset}

We collect all benchmark data by using EleutherAI's \texttt{lm-evaluation-harness}. 
We then merge the logged json outputs into a single parquet file that can be consumed by our code base. 
The dataset contains the cost for querying all LLMs for a specific request (in Megajoules) as well as the actual model response. 
The responses are particularly relevant for the experiments with the LLM judge.
We measure energy using \texttt{zeus-ml}.
Each row in the dataset corresponds to one evaluation example and contains:

\begin{itemize}
    \item Identifiers: \texttt{doc\_id}, \texttt{benchmark}, \texttt{input\_text}, \texttt{correct\_answer}
    \item Per-model binary outcomes: \texttt{\{model\}\_solved}
    \item Per-model energy/latency: \texttt{\{model\}\_energy\_joules}, \texttt{\{model\}\_latency\_ms}
    \item Per-model token counts: \texttt{\{model\}\_prompt\_tokens}, \texttt{\{model\}\_completion\_tokens}
\end{itemize}

\paragraph{Model zoo.} We use one model zoo with 4 distinct models, specifically \texttt{Qwen/Qwen3.5-2B}, \texttt{QuantTrio/Qwen3.5-9B-AWQ}, \texttt{QuantTrio/Qwen3.5-35B-A3B-AWQ}, and \texttt{QuantTrio/Qwen3.5-122B-A10B-AWQ}. Further characteristics are available in \Cref{tab:appendix_model_zoo}.

\begin{table*}[h]
\centering
\caption{Model zoo configuration. All models are from the Qwen 3.5 family. Energy costs are per million tokens (MJ).}
\label{tab:appendix_model_zoo}
\setlength{\tabcolsep}{6pt}
\renewcommand{\arraystretch}{1.1}
\resizebox{0.85\textwidth}{!}{
    \begin{tabular}{l c c c}
    \toprule
    \textbf{Model} & \textbf{Parameters (B)} & \textbf{Category} & \textbf{Avg. Cost across 11 benchmarks (MJ/1M tok)} \\
    \midrule
    Qwen3.5-2B & 2.0 & small & 0.22 \\
    Qwen3.5-9B-AWQ & 9.0 & medium & 0.32 \\
    Qwen3.5-35B-A3B-AWQ & 35.0 & xlarge & 0.53 \\
    Qwen3.5-122B-A10B-AWQ & 122.0 & xxlarge & 1.16\\
    \bottomrule
    \end{tabular}
}
\end{table*}

\paragraph{Model performance.} Figure~\ref{fig:appendix_model_performance} shows per-model accuracy across all 11 benchmarks. Each group of bars corresponds to one benchmark, with individual bars showing the accuracy of each Qwen 3.5 model.

\begin{figure*}[!t]
    \centering
    \includegraphics[width=\textwidth]{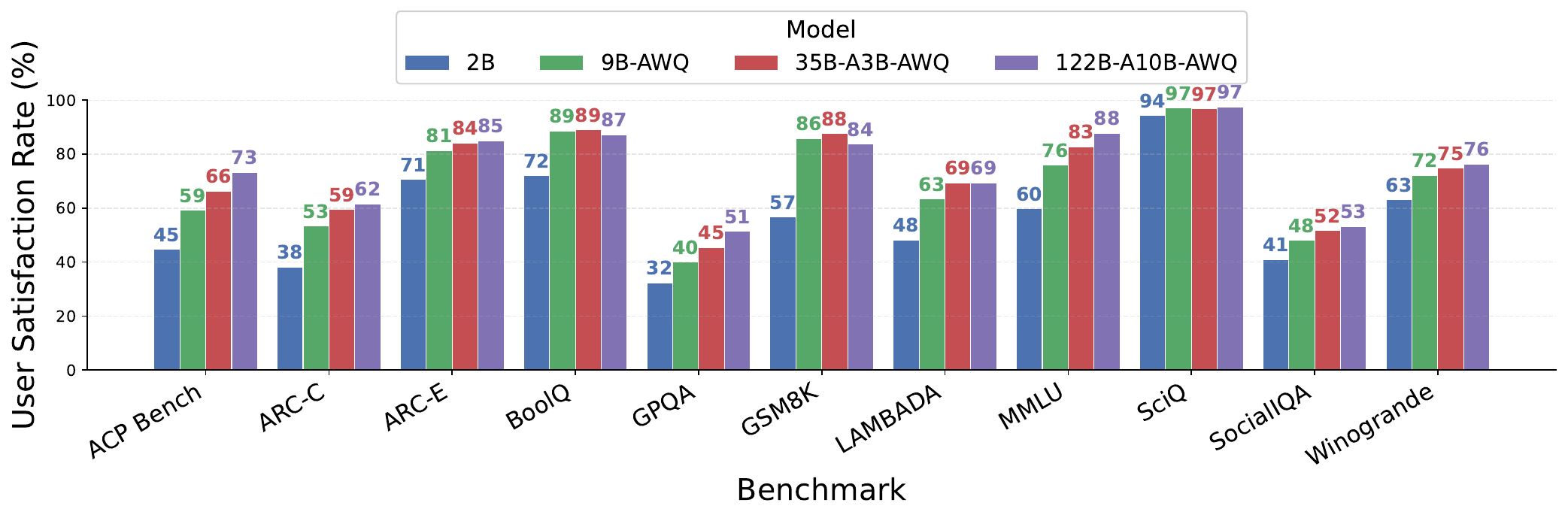}
    \caption{Per-model accuracy across all 11 benchmarks. Each group of bars corresponds to one benchmark, with individual bars showing the accuracy of each Qwen 3.5 model. Accuracy values are printed on top of each bar.}
    \label{fig:appendix_model_performance}
\end{figure*}

\subsection{Predictor: Architecture and Training Details}
\label{app:predictor_training}

\textbf{Architecture.}
The satisfaction predictor is a multi-label binary classifier built on top of a frozen
ModernBERT encoder ($D=768$ dimensions).
The classification head is a two-layer MLP:
$\texttt{Dropout}(p=0.1) \rightarrow \texttt{Linear}(D, D)
\rightarrow \texttt{LayerNorm} \rightarrow \texttt{ReLU}
\rightarrow \texttt{Dropout}(p=0.1) \rightarrow \texttt{Linear}(D, |M|)$,
where $M$ is the number of candidate models.
At inference time the model outputs $M$ logits, one per model head,
which are passed through a sigmoid to obtain $\hat{s}_m \in [0,1]^M$.

\textbf{Optimizer and training hyperparameters.}
Only the classifier-head parameters are trained; the encoder remains frozen.
Optimization uses SGD with momentum $0.90$ and weight decay $0.01$.
The default learning rate is $\eta = 0.006$ (identified via extensive hyperparameter tuning).
Gradients are clipped to a maximum $\ell_2$-norm of $1.0$.

\textbf{Constant versus diminishing learning rate.}
Our analysis (\Cref{lemma:sgd_bound}) uses the diminishing schedule $\eta_k=\min\{\sfrac{4}{\mu(k+1)},\,\sfrac{1}{2L}\}$, which is the standard technique for last-iterate convergence to the exact minimizer $F_\mathrm{min}$. In practice we found a constant rate already sufficient for good performance, so we use it for simplicity. The only cost is convergence to an $\mathcal{O}(\eta\sigma^2)$ neighborhood of $F_\mathrm{min}$ instead of $F_\mathrm{min}$ exactly, which inflates the floor $\beta F_\mathrm{min}$ by a constant and leaves all guarantees intact (they already carry this floor).

\textbf{Loss function.}
Training minimizes a masked binary cross-entropy loss.
Because each exploration step reveals feedback for only one model,
the label matrix contains NaN for all unobserved heads;
the loss back-propagates only through the observed entries.

\textbf{Minibatch buffer.}
Training occurs only when the minibatch buffer reaches capacity
$\texttt{batch\_size}=16$.
The buffer stores tuples $(\text{text}, m, \text{satisfaction}).$
We sample minibatches randomly from the buffer for every predictor update. 
When full, the buffer yields a label matrix of shape $(B, M)$ with exactly
one label per request.
By default a single SGD step is taken per full minibatch
($\texttt{classifier\_epochs}=1$); setting this $>1$ enables multiple passes.

\textbf{Default hyperparameters (full list).}
$\eta=0.006$, momentum$=0.90$, weight decay$=0.01$,
batch size$=16$, dropout$=0.1$, queue mode$=\texttt{random}$,
explore mode$=\texttt{uniform}$, classifier epochs$=1$,
$\texttt{lr\_schedule}=\texttt{fixed}$.

\subsection{Performance Matching Between Baselines and \algname}
We carefully tune the cost parameters of CARROT and Causal Router to match the performance requirements we set for each benchmarks. 
For CARROT, we search the range $\mu \in [0, 1]$ with increments of 0.05 and for Causal Router we search $\lambda \in [0, 0.01]$ in increments of $0.001$.\footnote{Note, our $\lambda$ range is significantly different from the original paper as we are using energy consumption in joules while the original paper uses USD per 1M tokens.} 
We match the performance against the SLA requirement $\alpha$ and pick the configuration that comes closest to $\alpha$. 
In order to ensure there is enough training data available for the baselines, we chose \texttt{--target-feedback-rate 0.2}. 
To train the CARROT and Causal Router classifiers, we use a separate training dataset that consists of the benchmark trainings splits. 
We collect the data in the same way per our procedure described above for the actual benchmark datasets.
We train the classifiers with the same amount of feedback as what MESS+ and \algname see at inference time to ensure fair comparison. 
Full training scripts are provided in the code base for your reference.
The classifier training statistics are provided in \Cref{tab:baseline_stats}.
The detailed parameterization is provided in \Cref{tab:matched_hyperparameters}. 

\begin{table}[t]
\centering
\caption{Validation statistics for offline-trained baselines (feedback rate 0.2).}
\label{tab:baseline_stats}
\resizebox{0.7\textwidth}{!}{
    \begin{tabular}{@{}lccccc@{}}
    \toprule
    Baseline & Accuracy & Precision & Recall & F1 & ROC-AUC \\
    \midrule
    Causal RM-Softmax & 0.5833 & 0.3362 & 0.3550 & 0.2763 & nan \\
    CARROT RoBERTa & 0.3400 & 0.6596 & 0.9121 & 0.7625 & 0.6593 \\
    \bottomrule
    \end{tabular}
}
\end{table}

\begin{table}[t]
    \centering
    \small
    \caption{Matched cost sensitivity hyperparameters for CARROT ($\mu$) and Causal rm\_interval ($\lambda$) routers, tuned to meet SLARouter target accuracy $\alpha$ per benchmark.}
    \label{tab:matched_hyperparameters}
    \renewcommand{\arraystretch}{1.0}
    \resizebox{0.5\textwidth}{!}{
        \begin{tabular}{l c c}
            \toprule
            \textbf{Benchmark} & $\boldsymbol{\mu}$ (CARROT) & $\boldsymbol{\lambda}$ (Causal) \\
            \midrule
            ACP Bench ($\alpha=0.65$) & 0.5 & 0.000 \\
            ARC-Challenge ($\alpha=0.52$) & 0.8 & 0.001 \\
            ARC-Easy ($\alpha=0.80$) & 0.8 & 0.001 \\
            BoolQ ($\alpha=0.85$) & 0.7 & 0.001 \\
            GPQA ($\alpha=0.45$) & 0.3 & 0.000 \\
            GSM8K ($\alpha=0.83$) & 0.6 & 0.000 \\
            LAMBADA ($\alpha=0.65$) & 0.6 & 0.001 \\
            MMLU ($\alpha=0.75$) & 0.7 & 0.000 \\
            SciQ ($\alpha=0.96$) & 0.8 & 0.001 \\
            SocialIQa ($\alpha=0.48$) & 0.8 & 0.005 \\
            WinoGrande ($\alpha=0.73$) & 0.7 & 0.000 \\
            \bottomrule
        \end{tabular}
    }
\end{table}

\subsection{Data-Driven Selection of $V$}
\label{sec:appendix_data_driven_v_selection}
We compare a grid search for an appropriate $V$ value with our data-driven selection mechanism (\Cref{tab:updated_appendix_v_sweep}).
We note that per our theoretical analysis, we will always reach $\alpha$ and $V$ only determines how long it takes for \algname to converge. 
Yet, when choosing $V$ too large, convergence times may become impractical.
This is what can be seen when choosing $V > 0.0001$.
These values are one order of magnitude apart from one another, so making an educated guess at $V$ based on reference benchmarks may yield unnecessary inefficiencies. 
So, overall our data-driven method increases the cost effectiveness of \algname beyond what would be possible with a grid search.

\subsection{Judge Feedback Alignment}
\label{sec:appendix_judge_feedback_alignment}

When user feedback is unavailable, we use an LLM-based judge as a scalable alternative to obtain synthetic feedback signals.
Specifically, we prompt GPT-5.4 to assess the input/response pairs, and collect binary (correct/incorrect) judgment scores.
We measure the alignment (agreement rate) between the judge's outputs and ground-truth benchmark labels across two challenging evaluations: GPQA~\cite{gpqa}, a graduate-level science benchmark requiring domain expertise, and ACPBench~\cite{acpbench}, a planning benchmark that demands multi-step reasoning. We see that disabling reasoning leads to poor alignment (Table~\ref{tab:results}) and substantially higher error rates across both benchmarks. This confirms the assumption that the SOTA reasoning-capable LLMs are necessary to reliably replicate the user feedback. In our experiments we use the latter setting with feedback collected from a reasoning-enabled judge.
Our prompt template to obtain feedback from the judge is geared towards receiving binary feedback to serve as a plug-and-play substitute for actual user feedback~(\Cref{lst:prompt_template}).

\begin{lstlisting}[caption={GPT-5.4 judge prompt template}, label={lst:prompt_template}]
    ---SYSTEM---
    You are an expert evaluator for AI benchmark answers.
    
    {%
    This question has a fixed set of labeled answer choices - the model must select exactly one.
    
    Follow this process in order:
    1. Read the question and all answer choices.
    2. Independently determine which choice is correct - do this BEFORE reading the model's response.
    3. Compare: does the model's final answer match the correct choice?
    
    {%
    This question requires a specific answer (a number, yes/no, word, or sentence completion) - there are no labeled choices.
    
    Follow this process in order:
    1. Read the question and any provided passage or context.
    2. Independently determine the correct answer using your knowledge and reasoning - do this BEFORE reading the model's response.
       - Math: solve step by step and identify the final numerical answer.
       - Yes/No: derive from the passage or your knowledge.
       - Word/phrase prediction: identify the correct word or phrase from context.
       - Fill-in-the-blank: determine which completion is logically and grammatically correct.
    3. Compare: does the model's answer match the correct answer?
       - Numbers: treat equivalent forms as equal ("18", "\$18", "18.0" all match 18).
       - Yes/No: treat "True"/"1" as yes and "False"/"0" as no.
       - Words/phrases: accept exact match or an unambiguous paraphrase.
    {%
    
    Scoring:
    - 1 (Correct): Model's final answer matches the correct answer.
    - 0 (Incorrect): Model's answer is wrong, ambiguous, or the model failed to commit.
    
    ---USER---
    {%
    Think through the correct answer first, then evaluate the model.
    {%
    
    Question:
    {{ input_text }}
    
    {%
    Answer choices:
    {{ choices }}
    {%
    
    Model response to evaluate:
    {{ response }}
    
    Respond ONLY with valid JSON:
    {"label": <0 or 1>}
\end{lstlisting}

\begin{table*}[h]
\centering
\caption{%
  Full V sweep for SLARouter on ACPBench and GPQA.
  \textcolor{darkgreen}{Green} marks configurations satisfying the service
  level~$\alpha$; \textcolor{darkred}{red} marks violations.
}
\label{tab:updated_appendix_v_sweep}
\setlength{\tabcolsep}{6pt}
\renewcommand{\arraystretch}{0.95}
\resizebox{\textwidth}{!}{%
\begin{tabular}{l l ccc ccc}
\toprule
\multicolumn{2}{c}{} & \multicolumn{3}{c}{\textbf{ACP Bench} ($\alpha$=65\%)} & \multicolumn{3}{c}{\textbf{GPQA} ($\alpha$=45\%)} \\
\cmidrule(lr){3-5}\cmidrule(lr){6-8}
\textbf{Variant} & \textbf{$V$ value} & \thead{Operating\\Cost (MJ)} & \thead{Post-Conv.\\Accuracy} & \thead{Model Call Ratio\\(2B/9B/35B/122B)} & \thead{Operating\\Cost (MJ)} & \thead{Post-Conv.\\Accuracy} & \thead{Model Call Ratio\\(2B/9B/35B/122B)} \\
\cmidrule(lr){1-2}\cmidrule(lr){3-5}\cmidrule(lr){6-8}
\multirow{7}{*}{Fixed $V$}
& $1\times10^{-6}$ & $2.796_{\pm0.147}$ & \textcolor{darkgreen}{$66.5_{\pm1.0}$} & 10\%/14\%/60\%/16\% & $0.068_{\pm0.014}$ & \textcolor{darkgreen}{$46.0_{\pm1.0}$} & 2\%/0\%/58\%/40\% \\
& $1\times10^{-5}$ & $2.786_{\pm0.164}$ & \textcolor{darkgreen}{$65.6_{\pm0.2}$} & 10\%/16\%/58\%/16\% & $0.065_{\pm0.013}$ & \textcolor{darkgreen}{$45.8_{\pm0.4}$} & 3\%/2\%/58\%/37\% \\
& $3\times10^{-5}$ & $2.712_{\pm0.124}$ & \textcolor{darkgreen}{$65.1_{\pm0.2}$} & 10\%/20\%/58\%/12\% & $0.063_{\pm0.008}$ & \textcolor{darkgreen}{$45.1_{\pm0.2}$} & 5\%/2\%/56\%/37\% \\
& \textbf{$1\times10^{-4}$} & \textbf{$2.489_{\pm0.336}$} & \textbf{\textcolor{darkred}{$64.9_{\pm0.3}$}} & 14\%/32\%/45\%/9\% & $0.058_{\pm0.000}$ & \textcolor{darkred}{$44.3_{\pm1.6}$} & 11\%/4\%/56\%/29\% \\
& \textbf{$3\times10^{-4}$} & $2.330_{\pm0.255}$ & \textcolor{darkred}{$64.7_{\pm0.8}$} & 22\%/38\%/33\%/7\% & \textbf{$0.058_{\pm0.001}$} & \textbf{\textcolor{darkred}{$44.2_{\pm1.7}$}} & 11\%/4\%/56\%/29\% \\
& $1\times10^{-3}$ & $2.151_{\pm0.156}$ & \textcolor{darkred}{$63.5_{\pm1.3}$} & 23\%/40\%/32\%/6\% & $0.045_{\pm0.001}$ & \textcolor{darkred}{$43.4_{\pm1.7}$} & 17\%/9\%/62\%/12\% \\
& $3\times10^{-3}$ & $2.091_{\pm0.042}$ & \textcolor{darkred}{$63.0_{\pm1.7}$} & 23\%/40\%/32\%/6\% & $0.044_{\pm0.002}$ & \textcolor{darkred}{$43.1_{\pm3.0}$} & 18\%/9\%/61\%/12\% \\
\midrule
Data-driven & $0.000029$ / $0.000035$ & $2.641_{\pm0.200}$ & \textcolor{darkgreen}{$65.1_{\pm0.3}$} & 10\%/20\%/58\%/12\% & $0.060_{\pm0.007}$ & \textcolor{darkgreen}{$45.1_{\pm0.1}$} & 4\%/3\%/56\%/37\% \\
\bottomrule
\end{tabular}%
}
\end{table*}

\begin{table*}[!h]
\centering
\caption{Agreement rate between the judge (GPT-5.4) and the user feedback (ground truth labels).}
\label{tab:results}
\setlength{\tabcolsep}{6pt}
\renewcommand{\arraystretch}{1.1}
\resizebox{0.85\textwidth}{!}{
    \begin{tabular}{l c c c c}
    \toprule
    \textbf{Benchmark} & \textbf{Reasoning Enabled} & \textbf{Agreement Rate} & \textbf{False Positive Rate} & \textbf{False Negative Rate} \\
    \midrule
    ACPBench           & \xmark & 68\% & 26.3\% & 35.5\% \\
    GPQA & \xmark & 62\% & 32.1\% & 45.5\% \\
    \midrule
    ACPBench         & \cmark & 94.4\% & 6.6\%  & 4.9\%  \\
    GPQA               & \cmark & 87.9\% & 12.9\% & 10.9\% \\
    \bottomrule
    \end{tabular}
}
\vspace{-3em}
\end{table*}


\end{document}